\newcommand{\Db}{\boldsymbol{D}}
\newcommand{\Lb}{\boldsymbol{L}}
\newcommand{\Ab}{\boldsymbol{A}}
\newcommand{\Ib}{\boldsymbol{I}}
\newcommand{\Zb}{\boldsymbol{0}}
\newcommand{\Jb}{\boldsymbol{J}}
\newcommand{\R}{\mathbb{R}}
\newcommand{\Ll}{\mathcal{L}}
\newcommand{\allC}{\{C_j\}_{j=1:n}}
\newcommand{\normt}[1]{\| #1 \|_2}
\newcommand{\diag}{\operatorname{diag}}
\newcommand{\N}{\mathbb{N}}
\begin{document}

\title{LDLT $\Ll$-Lipschitz Network: Generalized Deep End-To-End Lipschitz Network Construction}

\author{\name Marius~F.~R.~Juston \email mjuston2@illinois.edu \\
       \addr The Grainger College of Engineering, Industrial and Enterprise Systems Engineering Department \\
       University of Illinois Urbana-Champaign\\
       Urbana, IL 61801-3080, USA
       \AND
       \name Ramavarapu~S.~Sreenivas \email rsree@illinois.edu \\
       \addr The Grainger College of Engineering, Industrial and Enterprise Systems Engineering Department \\
       University of Illinois Urbana-Champaign\\
       Urbana, IL 61801-3080, USA
       \AND
       \name Dustin~Nottage \email dustin.s.nottage@erdc.dren.mil \\
       \addr Construction Engineering Research Laboratory \\
        U.S. Army Corps of Engineers Engineering Research and Development Center\\
       Urbana, IL 61822, USA
       \AND
       \name Ahmet~Soylemezoglu \email ahmet.soylemezoglu@erdc.dren.mil \\
       \addr Construction Engineering Research Laboratory \\
        U.S. Army Corps of Engineers Engineering Research and Development Center\\
       Urbana, IL 61822, USA}

\editor{My editor}

\maketitle

\begin{abstract}
Deep residual networks (ResNets) have demonstrated outstanding success in computer vision tasks, attributed to their ability to maintain gradient flow through deep architectures. Simultaneously, controlling the Lipschitz constant in neural networks has emerged as an essential area of research to enhance adversarial robustness and network certifiability. This paper presents a rigorous approach to the general design of $\Ll$-Lipschitz deep residual networks using a Linear Matrix Inequality (LMI) framework. Initially, the ResNet architecture was reformulated as a cyclic tridiagonal LMI, and closed-form constraints on network parameters were derived to ensure $\Ll$-Lipschitz continuity; however, using a new $LDL^\top$ decomposition approach for certifying LMI feasibility, we extend the construction of $\Ll$-Lipchitz networks to any other nonlinear architecture. Our contributions include a provable parameterization methodology for constructing Lipschitz-constrained residual networks and other hierarchical architectures. Cholesky decomposition is also used for efficient parameterization. These findings enable robust network designs applicable to adversarial robustness, certified training, and control systems. The $LDL^\top$ formulation is shown to be a tight relaxation of the SDP-based network, maintaining full expressiveness and achieving 3\%-13\% accuracy gains over SLL Layers on 121 UCI data sets.
\end{abstract}

\begin{keywords}
Lipschitz neural networks, linear matrix inequalities, residual networks, certified robustness, nonlinear systems
\end{keywords}

\section{Introduction}

The robustness of deep neural networks (DNNs) is a critical challenge, particularly in safety-sensitive domains where small adversarial perturbations can lead to dangerous outcomes, such as the misclassification of important objects. One approach to address this issue is by enforcing Lipschitz constraints on the network architectures. These constraints ensure that small changes in the input do not significantly alter the output. This property is vital for certifying robustness against adversarial attacks, which involve introducing slight noise to modify the expected classification output result \citep{Inkawhich2019, Goodfellow2014}. The Lipschitz constant is a key measure to bound the network's sensitivity to input perturbations. Specifically, a $\Ll$-Lipschitz network can be theoretically guaranteed to remain stable within a defined ``stability sphere'' around each input, making it resistant to adversarial attacks up to a certain magnitude \citep{Tsuzuku2018}.

To achieve this, several methods have been proposed to enforce Lipschitz constraints on neural networks, including spectral normalization \citep{Miyato2018, Bartlett2017}, orthogonal parameterization \citep{Prach2022}, and more recent approaches such as Convex Potential Layers (CPL) and Almost-Orthogonal Layers (AOL) \citep{Meunier2022, Prach2022}. Previous works have been formulated under a unifying semidefinite programming architecture that imposes constraints on the networks as LMIs \citep{Araujo2023}. From the SDP programming architecture, the SDP-based Lipchitz layers (SLL) from \citet{Araujo2023} and most recently the Sandwich layers \citep{Wang2023DirectNetworks}, monotone Deep Equilibrium Models (DEQ) (MonDEQ) \citep{Winston2020MonotoneNetworks}, and Lipschitz-bounded equilibrium models (LBEN) \citep{Havens2023ExploitingModels} have emerged. \citet{Havens2023ExploitingModels} further shows how various Lipschitz structures (CPL, SLL, AOL, and Sandwich) interact in the DEQ setting. Our $LDL^\top$ construction could yield additional structures that DEQs can exploit, generalizing the methodology. However, ensuring Lipschitz constraints in deep architectures, particularly residual networks (ResNets), presents unique challenges due to their recursive structure. While prior work has made strides in constraining individual layers \citep{Araujo2023, Meunier2021} and in developing a unifying semidefinite programming approach, the generalized deep residual network formulation presents issues due to the pseudo-tri-diagonal structure of the LMI it imposes. Our formulation can be viewed as an extension of the unified SDP-based perspective of \citet{Araujo2023} to deep residual modules and multi-layered sub-systems, using block $LDL^\top$ factorization of the resulting LMI. 

Furthermore, multi-layered general Feedforward Neural Networks (FNN) have been shown to generate block tri-diagonal matrix LMI formulations \citep{Xu2024, Wang2023DirectNetworks} due to their inherent network structure, which, in contrast to the residual formulation, yield explicit solutions \citep{Sandryhaila2013, Agarwal2019}. However, due to the residual network's off-diagonal structure, applying the exact eigenvalue computation directly is not feasible, making the solution process significantly more complex.

Previous work has also demonstrated an iterative approach by using projected gradient descent or a regularization term on the estimated Lipschitz constant to enforce a constraint on the Lipschitz constant \citep{Gouk2021, Aziznejad2020, Bear2024}. While this guarantees an iterative enforcement of the Lipschitz constraint, it does not ensure a theoretical Lipschitz guarantee across the entire network until this convergence. However, the advantage of this technique is its generalizability, which allows for the usage of more general network structures.

\subsection{Contributions}

This paper introduces the formulation of deep residual networks as Linear Matrix Inequalities (LMI). It derives closed-form constraints on network parameters to ensure theoretical $\Ll$-Lipschitz constraints. The LMI was structured as a tri-diagonal matrix with off-diagonal components, which inherently complicates the derivation of closed-form eigenvalue computations. 

Moreover, while \citet{Araujo2023}'s work generates a closed-form solution for a residual network, it is limited to considering a single inner layer. In contrast, this paper presents a more general formulation that accommodates a more expressive inner-layer system within the residual network, offering greater flexibility and broader applicability.

This method decomposes the LMI system into an $LDL^\top$ formulation and restricts the diagonal matrix $D$ to be strictly positive definite. 
Formally, we show in Sections \ref{sec:block_decomp} to \ref{sec:ldlt_parameterization} that for fixed architecture and Lipschitz constant $\Ll$, the feasible set of $LDL^\top$-parameterized weights coincides with the feasible set of the SDP problem from \citet{Fazlyab2019}, restricted to general hierarchical neural network architectures. It is illustrated specifically in this paper  for the cyclic block-tri-diagonal LMI (up to measure-zero degeneracies). This implies that our parameterization is a tight reparameterization of the SDP-based bound in the sense of \citet{Wang2023DirectNetworks, Araujo2023}, rather than formulating the network as a composition of 1-Lipschitz layers. This allows the system to be viewed as an end-to-end parameterization and generalizes the architectures of the 1-Lipchitz formulation from \citet{Fazlyab2019, Araujo2023, Wang2023DirectNetworks}. Providing a standard decomposition methodology for any compositional or hierarchical non-linear system.

Efficient computation optimizations were derived for the $LDL^\top$ parameterization by converting the square root of the matrix operation, which is extremely expensive for large-dimensional matrices, into Cholesky decomposed triangular matrices.

Empirical comparisons with the current state of the art are provided on tabular classification benchmarks, 121 curated UCI data sets, with certified accuracies at different levels across algorithms statistically compared.

\section{LMI Formulation}

Following the work of \citet{Araujo2023}, who formulated a Lipschitz neural network as a constrained LMI problem to construct a residual network, limitations in their approach were identified. Specifically, their formulation yielded a single-layer residual network, which is inherently less expressive than the generalized deep-layered residual network popularized by architectures such as ResNet and its variants \citep{He2015ResNet, Szegedy2016, Zagoruyko2016, Hu2017, Xie2016}. These deeper networks perform better because the multiple inner layers within the modules enable more complex latent-space transformations, thereby increasing the network's expressiveness. This research focuses on establishing constraints for the inner layers to maintain the $\Ll$-Lipschitz condition while maximizing the expressiveness of the residual network for larger inner layers. As such, the inner layers of the residual network were represented as a recursive system of linear equations:
\begin{gather*}
    x_{k + 1} = A_k x_k + B_k w_{k,n},  \\
    v_{k,n} = C_n w_{k,n - 1} + b_n,  \\
    w_{k,n} =  \sigma_n(v_{k, n} ),  \\
     \vdots  \\
    v_{k,1} = C_1 x_k + b_1,  \\
    w_{k,1} =  \sigma_1(v_{k, 1} ). 
\end{gather*}
Where each of the layer parameters were defined as $C_l \in \mathbb{R}^{d_l \times d_{l - 1}}, b_l \in \mathbb{R}^{d_l}$ for $l \in \{1,\cdots,n\}$. When $n= 1$, the formulation reduces to the one presented in \citet{Araujo2023}, rendering it redundant in its derivation. The goal of the LMI was to maintain the Lipschitz constraint formulated as $\norm{x'_{k + 1} - x_{k + 1}} \leq \Ll\norm{x'_{k} - x_{k}}$.

Given that this system could be represented as a large recursive system, it was possible to split all the constraints of the inner layers as a set of LMI conditions similar to \citet{Araujo2023, Xu2024, 10.5555/3454287.3455312}. 

We only require that each non-linear function, in this case the activation functions, satisfy a pointwise slope restriction, which can be encoded via incremental quadratic constraints (IQC) as in  \citet{Xu2024, Araujo2023, Lessard2015AnalysisConstraints, Fazlyab2019, Megretski1997SystemConstraints}. Strong convexity is not required; in particular, ReLU and leaky ReLU fit in this framework with $m_i = 0$ and $L_i = 1$. The full parameterization of available activation functions can be found in Appendix \ref{sec:activation_quad_bounds}. The IQC is defined as:
\begin{align*}
\begin{bmatrix}
        v_k - v'_k \\
        w'_{k, i} - w_{k, i}
    \end{bmatrix}^\top  
    \begin{bmatrix}
        -2 L_i m_i \Lambda_i  & (m_i + L_i)\Lambda_i \\
         (m_i + L_i)\Lambda_i & -2 \Lambda_i
    \end{bmatrix}
    \begin{bmatrix}
        v_k - v'_k  \\
        w'_{k, i} - w_{k,i}
    \end{bmatrix} \leq 0, 
\end{align*}
where $\Lambda_n$ must be a positive definite diagonal matrix. Given that $v_k - v'_k = C_n (w_{k, n - 1}  - w'_{k, n - 1})$ the inequality thus becomes the following quadratic constraints, where $\Delta w_{k, i}$ was defined as $\Delta w_{k, i} =  w'_{k, i} - w_{k,i}, \forall i \in \{1, 2, \ldots, n\}$,
%
\begin{gather*}
        \begin{bmatrix}
        C_i \left(\Delta w_{k, i-1} \right) \\
        \Delta w_{k, i}
    \end{bmatrix}^\top  \begin{bmatrix}
        -2 L_i m_i \Lambda_i  & (m_i + L_i)\Lambda_i \\
         (m_i + L_i)\Lambda_i & -2 \Lambda_i
    \end{bmatrix}\begin{bmatrix}
        C_i \left(\Delta w_{k, i -1 } \right) \\
        \Delta w_{k, i}
    \end{bmatrix} \leq 0 . 
\end{gather*}
%
The following LMI could be formulated as the summation in Equation \eqref{eqn:lmi_formulation}.
\begingroup
\begin{small}
\setlength\arraycolsep{1pt}
\begin{align}
&\begin{bmatrix}
    x'_k - x_k \\
    w'_{k, 1} - w_{k, 1}\\
    w'_{k, 2} - w_{k, 2}\\
    \vdots \\
    w'_{k, n - 1} - w_{k, n - 1} \\
    w'_{k, n} - w_{k, n}
\end{bmatrix}^\top  \begin{bmatrix}
    \Ib_{d_x} & \Zb_{d_x}  \\
    \vdots & \Zb_{d_1} \\
    \vdots & \vdots \\
    \vdots & \vdots \\
    \Zb_{d_{n - 1}} & \vdots \\
    \Zb_{d_x} & \Ib_{d_x} 
\end{bmatrix} \begin{bmatrix}
    A_k^\top A_k - \Ll^2 \Ib & A_k^\top B_k \\
    B_k^\top A_k & B_k^\top B_k
\end{bmatrix}\begin{bmatrix}
    \Ib_{d_x} & \Zb_{d_x}  \\
    \vdots & \Zb_{d_1} \\
    \vdots & \vdots \\
    \vdots & \vdots \\
    \Zb_{d_{n - 1}} & \vdots \\
    \Zb_{d_x} & \Ib_{d_x} 
\end{bmatrix}^\top 
\begin{bmatrix}
    x'_k - x_k \\
    w'_{k, 1} - w_{k, 1}\\
    w'_{k, 2} - w_{k, 2}\\
    \vdots \\
    w'_{k, n - 1} - w_{k, n - 1} \\
    w'_{k, n} - w_{k, n}
\end{bmatrix}  +  \nonumber \\
\sum_{l = 1}^n 
&\begin{bmatrix}
    x'_k - x_k \\
    w'_{k, 1} - w_{k, 1}\\
    w'_{k, 2} - w_{k, 2}\\
    \vdots \\
    w'_{k, n - 1} - w_{k, n - 1} \\
    w'_{k, n} - w_{k, n}
\end{bmatrix}^\top E_i^\top  \begin{bmatrix}
    C_l & \Zb \\
    \Zb & \Ib
\end{bmatrix}^\top   \begin{bmatrix}
        -2 L_l m_l \Lambda_l  & (m_l + L_l)\Lambda_l \\
         (m_l + L_l)\Lambda_l & -2 \Lambda_l
    \end{bmatrix}\begin{bmatrix}
    C_l & \Zb \\
    \Zb & \Ib
\end{bmatrix}
E_i
\begin{bmatrix}
    x'_k - x_k \\
    w'_{k, 1} - w_{k, 1}\\
    w'_{k, 2} - w_{k, 2}\\
    \vdots \\
    w'_{k, n - 1} - w_{k, n - 1} \\
    w'_{k, n} - w_{k, n}
\end{bmatrix} \leq 0,  \label{eqn:lmi_formulation}
\end{align} 
\end{small} 
\endgroup
where,
\begin{gather*}
    D_l = \sum_{i = 1}^l d_i,  \ 
    E_l : \{0, 1\}^{\left(d_{l} + d_{l - 1}\right) \times D_n}, \mbox{and} \  
    [E_l]_{ij} = \begin{cases}
    1, & \text{if $j - D_l = i$}  \\
    0, & \text{else}
\end{cases}, 
\end{gather*}
moreover, $i$ and $j$ represented the row and column indices, respectively. The $E_l$ matrix represented a ``selection'' vector to ensure that the proper variables were used for the parameterization, which gave the following LMI in Equation \eqref{eqn:explicit_lmi}.
\begingroup
\setlength\arraycolsep{1pt}
\begin{equation}
\label{eqn:explicit_lmi}
\resizebox{0.97\textwidth}{!}{$
    \begin{bmatrix}
        A^\top A - \Ll^2 \Ib - 2 L_1 m_1 C_{1}^\top  \Lambda_{1} C_{1} 
            & (L_1 + m_1) C_{1}^\top  \Lambda_{1}                           & \Zb      & \Zb        & \Zb                                                         &  A^\top B                      \\
       (L_1 + m_1)   \Lambda_{1} C_{1}                                      
            & - 2 L_2 m_2 C_{2}^\top  \Lambda_{2} C_{2} -2  \Lambda_{1}    & \ddots & \Zb        & \Zb                                                         &  \Zb                         \\
        \Zb                                                       & \ddots                                                & \ddots & \ddots   & \Zb                                                         &  \Zb                         \\
        \Zb                                                       & \Zb                                                     & \ddots & \ddots   & \ddots                                                    &  \Zb                         \\
        \Zb                                                       & \Zb                                                     & \Zb      & \ddots   &  - 2 L_n m_n C_{n}^\top  \Lambda_{n} C_{n} -2  \Lambda_{n - 1}  &  (L_n + m_n)  C_n^\top  \Lambda_n  \\
        B^\top A                                                   & \Zb                                                     & \Zb      & \Zb        &(L_n + m_n)  \Lambda_n C_n                                    &  B^\top B-2\Lambda_n         
    \end{bmatrix}
    \preceq \Zb
$}
\end{equation}
\endgroup
We seek a parameterization of $A, \{\Lambda_1, \cdots, \Lambda_n\}$,$\{C_1, \cdots, C_n\}$, and $B$ that ensures the LMI was indeed negative semidefinite to satisfy the Lipschitz constraint where ideally $\{C_1, \cdots, C_n\}$ would be as unconstrained as possible to ensure expressive inner layers. From the LMI, 
it was discovered that explicitly deriving the network’s constraint based on its eigenvalues proved to be an exceptionally intricate task.

\section{LDL Decomposition}

The sparse block structure of the LMI allows the exploration of potential decompositions that would lend to a more straightforward solution. Given that we know that Equation \eqref{eqn:explicit_lmi} should be constructed as a positive semidefinite symmetric matrix, we can thus find a decomposition that would allow for a nice block structure, which would make it feasible to bound the eigenvalue signs quickly. Such a candidate is the Cholesky $LDL^\top$ decomposition, also known as the real square-root-free Cholesky decomposition. 
\begin{theorem} \label{th:ldlt_decomposition}
     Given a real symmetric positive-definite matrix, the factorization may be written as
     \begin{align}
        \Ab = \Lb \Db \Lb^\top, \nonumber 
     \end{align}
     where $\boldsymbol{L}$ is a lower unit triangular (unitrangular) matrix, and $\boldsymbol{D}$ is a (block) diagonal matrix. If $\boldsymbol{A}$ is positive definite, then $\boldsymbol{D}$ will also be positive definite \citep{Watkins2002}.
     Where the following recursive relations apply for the entries of $\Db$ and $\Lb$:
\begin{align}
    \Db_j &= \Ab_{jj} - \sum_{k = 1}^{j - 1} \Lb_{jk} \Db_k \Lb_{jk}^\top, \tag{LD1}\label{eq:LD1}\\
    \Lb_{ij} &= \left(\Ab_{ij} - \sum_{k = 1}^{j - 1}\Lb_{ik}\Db_{k}\Lb_{jk}^\top\right)\Db^{-1}, & \text{for } i > j. \tag{LD2}\label{eq:LD2}
 \end{align}
 \end{theorem}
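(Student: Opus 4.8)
The plan is to prove the existence and the stated recursions for the block $LDL^\top$ factorization by induction on the block dimension of $\Ab$, treating the block-partitioned case directly since that is what the paper actually needs (the cyclic block-tridiagonal LMI). First I would fix a block partition $\Ab = (\Ab_{ij})_{i,j=1}^{N}$ consistent with the partition used for $\Lb$ and $\Db$, and set up the induction hypothesis: every symmetric positive-definite matrix with an $(N-1)$-block partition admits a unique factorization $\Lb\Db\Lb^\top$ with $\Lb$ block-lower-unitriangular and $\Db$ block-diagonal positive definite, whose entries satisfy \eqref{eq:LD1}--\eqref{eq:LD2}. The base case $N=1$ is immediate: $\Lb = \Ib$, $\Db_1 = \Ab_{11}$.

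For the inductive step I would peel off the first block row and column. Write
\[
\Ab = \begin{bmatrix} \Ab_{11} & \Ab_{1,2:N} \\ \Ab_{2:N,1} & \Ab_{2:N,2:N} \end{bmatrix},
\]
and use that $\Ab_{11}$ is positive definite (it is a principal submatrix of a positive-definite matrix, hence invertible) to form the block Schur complement $\Sb = \Ab_{2:N,2:N} - \Ab_{2:N,1}\Ab_{11}^{-1}\Ab_{1,2:N}$. The standard fact that $\Ab \succ 0$ iff $\Ab_{11} \succ 0$ and $\Sb \succ 0$ gives a positive-definite $\Sb$ of block size $N-1$, to which the induction hypothesis applies, yielding $\Sb = \widehat\Lb \widehat\Db \widehat\Lb^\top$. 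Then set $\Db_1 = \Ab_{11}$, the first block column of $\Lb$ below the diagonal to $\Ab_{2:N,1}\Ab_{11}^{-1}$, the trailing diagonal blocks of $\Db$ to those of $\widehat\Db$, and the trailing block of $\Lb$ to $\widehat\Lb$; a direct block multiplication verifies $\Lb\Db\Lb^\top = \Ab$. Unfolding the recursion for the Schur complement entries produces exactly \eqref{eq:LD1} for $\Db_j$ and \eqref{eq:LD2} for $\Lb_{ij}$, since the $k$-sums are precisely the accumulated contributions of the previously eliminated blocks; positive definiteness of each $\Db_j$ follows because it is a Schur complement of a principal submatrix of $\Ab$. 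Uniqueness follows by comparing entries of $\Lb\Db\Lb^\top$ column by column: the $(j,j)$ block forces $\Db_j$ via \eqref{eq:LD1}, and then the $(i,j)$ block with $i>j$ forces $\Lb_{ij}$ via \eqref{eq:LD2}, the inverse being well-defined because $\Db_j \succ 0$.

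The only genuinely delicate point is bookkeeping: one must check that the index ranges in the $k$-sums of \eqref{eq:LD1}--\eqref{eq:LD2} match the blocks eliminated so far, and that the Schur-complement update is compatible across successive inductive steps (i.e., eliminating block $1$ then block $2$ of the Schur complement gives the same partial factors as the direct recursion). This is routine but is where sign or off-by-one errors creep in, so I would state the claim as: after eliminating blocks $1,\dots,j-1$, the current Schur complement's $(j,j)$ block equals $\Ab_{jj} - \sum_{k=1}^{j-1}\Lb_{jk}\Db_k\Lb_{jk}^\top$, and prove it by a short secondary induction. I note that since the paper only invokes this for positive-definite $\Ab$ (it will actually apply it to $-\Ab \succ 0$ to certify $\Ab \prec 0$), I need not address the positive-semidefinite or pivoting case; the cited reference \citep{Watkins2002} covers the classical scalar statement, and the block version is the mild generalization just sketched. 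Finally I would remark that $\Db^{-1}$ in \eqref{eq:LD2} should be read as $\Db_j^{-1}$ acting on the right, which is well-defined throughout by the positive-definiteness of the pivots established along the induction.
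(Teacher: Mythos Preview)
Your proof is correct and follows the standard Schur-complement induction that one finds in textbooks such as the cited \citet{Watkins2002}. Note, however, that the paper does not actually prove Theorem~\ref{th:ldlt_decomposition}: it is stated as a known result with a citation and then \emph{used} to derive Lemmas~\ref{lm:da_block}--\ref{lm:dn_1_block} (the proof that follows those lemmas applies the recursions \eqref{eq:LD1}--\eqref{eq:LD2} to the specific block structure of the LMI, it does not establish the recursions themselves). So there is no ``paper's own proof'' to compare against here; you have supplied a self-contained argument where the paper simply invokes the literature. Your observation that $\Db^{-1}$ in \eqref{eq:LD2} must be read as $\Db_j^{-1}$ is also correct and worth flagging.
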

An advantage of the $LDL^\top$ decomposition over the traditional Cholesky decomposition, which \citet{Wang2023DirectNetworks} alluded to, is the absence of the need to compute the square root of components. This simplification facilitates the computation and derivation of the necessary components for this system and enables a more straightforward formulation for the block variant form.
Given this block factorization, we can examine the current LMI in Equation \eqref{eqn:explicit_lmi}, and decompose each of its elements. However, since the current LMI assumes negative semidefiniteness, we must ensure that we reverse the sign of the matrix as $M \preceq \Zb \Leftrightarrow -M \succeq \Zb$.

For our application, given that we only need the system's positive definiteness to be enforced, it is only necessary to constrain all the $\Db_i$ such that $\Db_i \succeq \Zb$ for all $i \in \mathbb{N}$. 

\subsection{Block Decompositions} \label{sec:block_decomp}

The LMI is defined as a block matrix of $(n + 1) \times (n + 1)$ blocks,
\begin{lemma} \label{lm:da_block}
    The result of $\Db_1$ is equal to the symmetric matrix,
    \begin{align}
        \Db_1 &=  \Ll^2 \Ib + 2 L_1 m_1 C_{1}^\top  \Lambda_{1} C_{1} - A^\top A, \nonumber 
    \end{align}
\end{lemma}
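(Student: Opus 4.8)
The plan is to invoke the base case of the $LDL^\top$ recursion from Theorem~\ref{th:ldlt_decomposition} directly. Since the LMI in Equation~\eqref{eqn:explicit_lmi} is stated as $M \preceq \Zb$, the matrix we actually factor is $\Ab := -M$, which on the feasible set is positive semidefinite; all block indices below refer to the $(n+1)\times(n+1)$ block partition of $\Ab$ whose first block row and column have size $d_x \times d_x$, matching the $x'_k - x_k$ coordinate block.

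Applying \eqref{eq:LD1} with $j = 1$, the sum $\sum_{k=1}^{0}$ is empty, so $\Db_1 = \Ab_{11}$. It therefore only remains to read off the first diagonal block of $-M$. From Equation~\eqref{eqn:explicit_lmi}, the $(1,1)$ block of $M$ is $A^\top A - \Ll^2 \Ib - 2 L_1 m_1 C_1^\top \Lambda_1 C_1$, so negating gives
\[
    \Db_1 = \Ab_{11} = \Ll^2 \Ib + 2 L_1 m_1 C_1^\top \Lambda_1 C_1 - A^\top A,
\]
which is the claimed identity.

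For completeness I would note that the right-hand side is symmetric: $A^\top A$ and $\Ll^2 \Ib$ are symmetric, and $C_1^\top \Lambda_1 C_1$ is symmetric because $\Lambda_1$ is diagonal (hence symmetric). This is consistent with $\Db_1$ being a diagonal block of the symmetric matrix $\Ab$, and it is exactly what later permits imposing $\Db_1 \succeq \Zb$ as a genuine eigenvalue (matrix) constraint rather than merely a formal one.

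There is essentially no analytic obstacle here; the content is bookkeeping about the block partition and the sign convention. The one point I would double-check explicitly is precisely that sign alignment: the LMI is written with $\preceq \Zb$, so every block acquires a global minus sign before the $LDL^\top$ machinery (which presupposes positive definiteness) is applied, and mishandling it would swap the roles of $\Ll^2 \Ib$ and $A^\top A$ in the statement.
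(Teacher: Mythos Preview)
Your proposal is correct and takes essentially the same approach as the paper: negate the LMI matrix, apply \eqref{eq:LD1} at $j=1$ so that the empty sum leaves $\Db_1$ equal to the $(1,1)$ block, and read that block off from Equation~\eqref{eqn:explicit_lmi}. The paper presents this as the base case of a single inductive proof covering Lemmas~\ref{lm:da_block}--\ref{lm:dn_1_block}, but the argument for $\Db_1$ itself is identical to yours, including the explicit attention to the sign convention $M = -\bar M$.
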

additionally,
\begin{lemma}  \label{lm:dj_block}
 The result of $\Db_j$  for $j \in \{2, \cdots, n\}$ is equal to the symmetric matrix,
    \begin{align}
       \Db_j =& 2 L_j m_j C_{j}^\top  \Lambda_{j} C_{j} + 2  \Lambda_{j} - (L_{j - 1} + m_{j - 1})^2 \Lambda_{j - 1} C_{j - 1}\Db_{j - 1}^{-1} C_{j - 1}^\top \Lambda_{j - 1},
    \end{align} \nonumber 
\end{lemma}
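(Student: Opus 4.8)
The plan is to read $\Db_j$ directly off the block recursion \eqref{eq:LD1}, exploiting that the matrix in \eqref{eqn:explicit_lmi} is \emph{almost} block tridiagonal. Write $\Ab = -M$ for the negated LMI matrix, so that (as the text notes) $M \preceq \Zb$ is exactly the condition we factor and $\Ab \succeq \Zb$; thus $\Ab$ is symmetric, with sub/super-diagonal blocks $\Ab_{j,j-1} = -(L_{j-1}+m_{j-1})\Lambda_{j-1}C_{j-1} = \Ab_{j-1,j}^\top$, diagonal blocks $\Ab_{jj}$ equal to those displayed in \eqref{eqn:explicit_lmi} with the sign reversed, and a single pair of off-tridiagonal ``cyclic'' blocks $\Ab_{1,n+1} = -A^\top B$, $\Ab_{n+1,1} = -B^\top A$. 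Throughout I assume the factorization exists, i.e.\ $\Db_1,\dots,\Db_{j-1}$ are invertible --- precisely the positive-definiteness enforced later --- and I record that every $\Db_k$ is symmetric, since by \eqref{eq:LD1} it is $\Ab_{kk}$ minus a sum of congruence transforms of the symmetric matrices $\Db_m$, $m<k$.

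The crux is to show that on the range $2 \le j \le n$ the factor $\Lb$ is block lower \emph{bi}diagonal: $\Lb_{jk} = \Zb$ for $1 \le k \le j-2$, and $\Lb_{j,j-1} = \Ab_{j,j-1}\Db_{j-1}^{-1}$. I would prove the vanishing by a nested induction on $k$. For $k=1$ (which only arises once $j \ge 3$), \eqref{eq:LD2} gives $\Lb_{j1} = \Ab_{j1}\Db_1^{-1} = \Zb$, because for $3 \le j \le n$ the pair $(j,1)$ lies neither on the tridiagonal nor at a cyclic corner. Assuming $\Lb_{jm} = \Zb$ for all $m<k$ with $k \le j-2$, \eqref{eq:LD2} gives $\Lb_{jk} = \bigl(\Ab_{jk} - \sum_{m=1}^{k-1}\Lb_{jm}\Db_m\Lb_{km}^\top\bigr)\Db_k^{-1} = \Ab_{jk}\Db_k^{-1} = \Zb$, again because $\Ab_{jk} = \Zb$ for $k \le j-2$ in this range. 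Feeding this back into \eqref{eq:LD2} at row $j$, column $j-1$, the correction sum dies and leaves $\Lb_{j,j-1} = \Ab_{j,j-1}\Db_{j-1}^{-1}$. The one thing to watch is that the cyclic corner blocks never enter: since only row indices $\le n$ and column indices $\le n-1$ are used, no index pair equals $(1,n+1)$ or $(n+1,1)$, so the argument is genuinely ``tridiagonal'' even though $\Ab$ is not. This sparsity propagation is the single nontrivial ingredient; the rest is bookkeeping.

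With the bidiagonal structure, \eqref{eq:LD1} collapses to $\Db_j = \Ab_{jj} - \Lb_{j,j-1}\Db_{j-1}\Lb_{j,j-1}^\top$. Substituting $\Lb_{j,j-1} = -(L_{j-1}+m_{j-1})\Lambda_{j-1}C_{j-1}\Db_{j-1}^{-1}$ and using symmetry of $\Db_{j-1}$ and of $\Lambda_{j-1}$, the correction simplifies through $\Db_{j-1}^{-1}\Db_{j-1}\Db_{j-1}^{-1} = \Db_{j-1}^{-1}$ to $(L_{j-1}+m_{j-1})^2\Lambda_{j-1}C_{j-1}\Db_{j-1}^{-1}C_{j-1}^\top\Lambda_{j-1}$; combining with the diagonal block $\Ab_{jj}$ (its $2L_j m_j C_j^\top\Lambda_j C_j$ part and the constant $\Lambda$-block read from \eqref{eqn:explicit_lmi}) reproduces the claimed formula, and symmetry of $\Db_j$ is immediate from that of $\Ab_{jj}$ and the congruence term. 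The companion statement for $j=n+1$, which additionally has to absorb the corner block $A^\top B$, is the only case where the off-tridiagonal part genuinely matters and would be handled separately.
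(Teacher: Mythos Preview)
Your proposal is correct and follows essentially the same approach as the paper: both arguments reduce \eqref{eq:LD1} to a single correction term by first establishing that $\Lb_{jk}=\Zb$ for $k\le j-2$ when $2\le j\le n$, exploiting the block-tridiagonal sparsity of the negated LMI and noting that the cyclic corner blocks only touch row $n+1$. The only cosmetic difference is that the paper runs a single column-wise induction on $j$ that proves all of Lemmas~\ref{lm:da_block}--\ref{lm:dn_1_block} simultaneously, whereas you isolate this lemma and run a row-wise induction on $k$ for fixed $j$; since in each summand $\Lb_{jm}\Db_m\Lb_{km}^\top$ only the factor $\Lb_{jm}$ is needed to vanish, your organization works just as well.
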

where the triangular terms are represented as,
\begin{lemma}
    The block triangular terms of $\Lb_{ij}$  for $j = \{1, \cdots, n -1\}$, where $\Lb_{jj} = \Ib$, are the following,
    \begin{align}
    \Lb_{(j + 1)j} &= -(L_j + m_j)   \Lambda_{j} C_{j}\Db_j^{-1},  \label{eqn:first_l_block}\\
    \Lb_{(j + 2)j} &= \Zb, \nonumber \\
    &\vdots\nonumber \\
    \Lb_{nj} &= \Zb ,\nonumber \\
    \Lb_{(n + 1)j} &= -\Jb_j  \Db_{j}^{-1} \label{eqn:last_l_block},
    \end{align}
    where we define $\Jb_j$ as the following:
\begin{align}
    \Jb_j &=  \underbrace{\left[\prod_{k = 1}^{j - 1}(L_k + m_k)\right]}_{\Upsilon_j }B^\top A \underbrace{\left[\prod_{k = 1}^{j - 1}\Db_{k}^{-1}C_k^\top \Lambda_{k}\right]}_{{\Gamma_j}}. \nonumber 
\end{align}
\end{lemma}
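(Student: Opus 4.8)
The plan is to read the strictly lower-triangular blocks straight off the square-root-free Cholesky recursion~\eqref{eq:LD2} applied to $\Ab=-M$, where $M$ is the matrix in~\eqref{eqn:explicit_lmi}, proceeding by strong induction on the column index $j$ and establishing all the stated identities at once ($\Lb_{jj}=\Ib$ being automatic from the unit-triangular normalization). The structural fact that does all the work is that, after negation, $\Ab$ is block tridiagonal apart from the single corner pair $\Ab_{1,n+1}=\Ab_{n+1,1}^\top=-A^\top B$: its sub-diagonal blocks are $\Ab_{j+1,j}=-(L_j+m_j)\Lambda_j C_j$, read off from~\eqref{eqn:explicit_lmi}, while every other off-diagonal block $\Ab_{ij}$ with $\{i,j\}\neq\{1,n+1\}$ vanishes. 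Combined with Lemmas~\ref{lm:da_block}--\ref{lm:dj_block} for the $\Db_j$ (taken positive definite, hence invertible, as in the hypothesis of Theorem~\ref{th:ldlt_decomposition}, so that~\eqref{eq:LD2} is well posed), this sparsity is what pins down every $\Lb_{ij}$.

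For the base case $j=1$ the correction sum in~\eqref{eq:LD2} is empty, so $\Lb_{i1}=\Ab_{i1}\Db_1^{-1}$; this immediately gives $\Lb_{21}=-(L_1+m_1)\Lambda_1 C_1\Db_1^{-1}$, $\Lb_{i1}=\Zb$ for $3\le i\le n$, and $\Lb_{n+1,1}=-A^\top B\,\Db_1^{-1}=-\Jb_1\Db_1^{-1}$, the last equality because $\Upsilon_1$ and $\Gamma_1$ are empty products, i.e.\ $\Jb_1=B^\top A=(A^\top B)^\top$.

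For the inductive step at a column $2\le j\le n-1$ I would first collapse the correction sum in~\eqref{eq:LD2}: the inductive hypothesis forces $\Lb_{jk}=\Zb$ for every $k\le j-2$ (these are the entries $\Lb_{(k+2)k},\dots,\Lb_{nk}$, and here $j\le n$), so only the term $k=j-1$ survives; using $\Lb_{j,j-1}=-(L_{j-1}+m_{j-1})\Lambda_{j-1}C_{j-1}\Db_{j-1}^{-1}$ and the symmetry of $\Db_{j-1}$ and $\Lambda_{j-1}$ one gets $\Db_{j-1}\Lb_{j,j-1}^\top=-(L_{j-1}+m_{j-1})C_{j-1}^\top\Lambda_{j-1}$, so that~\eqref{eq:LD2} reduces to
\[
\Lb_{ij}=\bigl(\Ab_{ij}+(L_{j-1}+m_{j-1})\,\Lb_{i,j-1}\,C_{j-1}^\top\Lambda_{j-1}\bigr)\Db_j^{-1}.
\]
It then remains to evaluate this for the three ranges of $i$. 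For $i=j+1$, $\Ab_{j+1,j}=-(L_j+m_j)\Lambda_j C_j$ while $\Lb_{j+1,j-1}=\Zb$ by the inductive hypothesis (as $j+1\le n$), which gives~\eqref{eqn:first_l_block}. For $j+2\le i\le n$, both $\Ab_{ij}=\Zb$ (tridiagonality) and $\Lb_{i,j-1}=\Zb$ (inductive hypothesis), so $\Lb_{ij}=\Zb$. Finally, for $i=n+1$, $\Ab_{n+1,j}=\Zb$ since $j\notin\{1,n+1\}$, hence $\Lb_{n+1,j}=-(L_{j-1}+m_{j-1})\Jb_{j-1}\Db_{j-1}^{-1}C_{j-1}^\top\Lambda_{j-1}\Db_j^{-1}$; comparing this with the target $-\Jb_j\Db_j^{-1}$ reduces the claim to the one-step identity $\Jb_j=(L_{j-1}+m_{j-1})\Jb_{j-1}\Db_{j-1}^{-1}C_{j-1}^\top\Lambda_{j-1}$, which is immediate from $\Upsilon_j=\prod_{k=1}^{j-1}(L_k+m_k)$ and $\Gamma_j=\prod_{k=1}^{j-1}\Db_k^{-1}C_k^\top\Lambda_k$ by peeling off the $k=j-1$ factor.

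The main obstacle is the bookkeeping around the corner block rather than any individual computation: unlike a truly block-tridiagonal $\Ab$ (whose factor would be block-bidiagonal), the $(1,n+1)$ coupling must be propagated down the entire last block-row, and one has to verify both that it accumulates into exactly the telescoping product $\Jb_j=\Upsilon_j B^\top A\,\Gamma_j$ and that it never leaks into rows $2,\dots,n$. The boundary index cases also need care: the empty products at $j=1$, the vacuous range $\Lb_{(j+2)j},\dots,\Lb_{nj}$ when $j=n-1$, and remaining strictly inside the stated column range $j\le n-1$ so as never to invoke the nonzero block $\Ab_{n+1,n}$. It is also worth stating that the recursion presupposes $\Db_j\succ\Zb$; this is exactly the positive-definiteness hypothesis of Theorem~\ref{th:ldlt_decomposition}, and is the source of the ``up to measure-zero degeneracies'' caveat mentioned earlier in the paper.
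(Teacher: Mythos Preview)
Your proof is correct and follows essentially the same inductive strategy as the paper: both proceed column-by-column through the recursion~\eqref{eq:LD2}, collapse the correction sums using the near-tridiagonal structure of $\Ab$, and track how the corner block $\Ab_{n+1,1}$ propagates down the last row into the telescoping product $\Jb_j$. One minor slip to fix: in the base case you write $\Lb_{n+1,1}=-A^\top B\,\Db_1^{-1}$, but since $\Ab_{n+1,1}=(\Ab_{1,n+1})^\top=-B^\top A$ (as your own identification $\Jb_1=B^\top A$ confirms) it should read $-B^\top A\,\Db_1^{-1}$.
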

\begin{lemma}
    The last triangular off diagonal component not explicitly stated is $\Lb_{(n + 1)n}$, which is defined as,
    \begin{align}
        \Lb_{(n + 1)n} &= -\Jb_{n}  \Db_{n}^{-1}  - (L_n + m_n)\Lambda_{n}C_{n}\Db_{n}^{-1}. \nonumber 
    \end{align}
\end{lemma}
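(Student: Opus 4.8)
The plan is to apply the $LDL^\top$ recursion \eqref{eq:LD2} one more time, with $i=n+1$ and $j=n$, and let the block sparsity of $\Lb$ established in the previous lemmas collapse the sum. Writing $\Ab := -M$ for the symmetric matrix obtained from \eqref{eqn:explicit_lmi} by reversing the sign, \eqref{eq:LD2} gives
\begin{align*}
\Lb_{(n+1)n} = \left(\Ab_{(n+1)n} - \sum_{k=1}^{n-1}\Lb_{(n+1)k}\,\Db_k\,\Lb_{nk}^\top\right)\Db_n^{-1}.
\end{align*}
The first step is to read off $\Ab_{(n+1)n}$ from the second-to-last entry of the bottom block row of \eqref{eqn:explicit_lmi}: after the sign flip this is $\Ab_{(n+1)n} = -(L_n+m_n)\Lambda_n C_n$.

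The next step is to collapse the sum. By the lemma on the block-triangular terms, in block row $n$ we have $\Lb_{nk}=\Zb$ for every $k\le n-2$, so the only surviving term is $k=n-1$. There $\Lb_{n(n-1)} = -(L_{n-1}+m_{n-1})\Lambda_{n-1}C_{n-1}\Db_{n-1}^{-1}$, which is \eqref{eqn:first_l_block} at column $n-1$, and $\Lb_{(n+1)(n-1)} = -\Jb_{n-1}\Db_{n-1}^{-1}$, which is \eqref{eqn:last_l_block} at column $n-1$. Substituting these and using that $\Db_{n-1}$ (a diagonal block of a symmetric matrix) and $\Lambda_{n-1}$ (diagonal) are symmetric, so the $\Db_{n-1}$ produced by \eqref{eq:LD2} cancels against $\Db_{n-1}^{-1}$, the sum reduces to
\begin{align*}
\sum_{k=1}^{n-1}\Lb_{(n+1)k}\,\Db_k\,\Lb_{nk}^\top = (L_{n-1}+m_{n-1})\,\Jb_{n-1}\,\Db_{n-1}^{-1}C_{n-1}^\top\Lambda_{n-1}.
\end{align*}

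The only nonroutine observation is that this right-hand side equals $\Jb_n$: appending the scalar factor $(L_{n-1}+m_{n-1})$ turns $\Upsilon_{n-1}$ into $\Upsilon_n$, and right-multiplying by $\Db_{n-1}^{-1}C_{n-1}^\top\Lambda_{n-1}$ turns $\Gamma_{n-1}$ into $\Gamma_n$, so that $(L_{n-1}+m_{n-1})\Jb_{n-1}\Db_{n-1}^{-1}C_{n-1}^\top\Lambda_{n-1} = \Upsilon_n\,B^\top A\,\Gamma_n = \Jb_n$. Plugging this back in,
\begin{align*}
\Lb_{(n+1)n} = \bigl(-(L_n+m_n)\Lambda_n C_n - \Jb_n\bigr)\Db_n^{-1} = -\Jb_n\Db_n^{-1} - (L_n+m_n)\Lambda_n C_n\Db_n^{-1},
\end{align*}
which is the claim.

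I expect the main obstacle to be bookkeeping rather than mathematics: one must be careful about exactly which $\Lb_{nk}$ vanish so that the nominal $(n-1)$-term sum in \eqref{eq:LD2} collapses to a single term, and about the transpose in $\Lb_{nk}^\top$, which simplifies cleanly only because $\Db_{n-1}$ and $\Lambda_{n-1}$ are symmetric. No eigenvalue estimate or definiteness argument enters here — positivity of the $\Db_j$ is handled separately — so once the sparsity pattern and the recursive identity $\Jb_n = (L_{n-1}+m_{n-1})\Jb_{n-1}\Db_{n-1}^{-1}C_{n-1}^\top\Lambda_{n-1}$ are in hand, the result follows by direct substitution.
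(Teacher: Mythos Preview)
Your proposal is correct and follows essentially the same route as the paper: apply the recursion \eqref{eq:LD2} at $(i,j)=(n+1,n)$, use the established sparsity $\Lb_{nk}=\Zb$ for $k\le n-2$ to collapse the sum to the single $k=n-1$ term, and then invoke the recursive identity $\Jb_n=(L_{n-1}+m_{n-1})\Jb_{n-1}\Db_{n-1}^{-1}C_{n-1}^\top\Lambda_{n-1}$. If anything, your write-up is slightly more explicit than the paper's, which folds this case into a general inductive step and only records the formula for $\Lb_{(n+1)n}$ when it is needed for $\Db_{n+1}$.
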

Finally the last block for the decomposition $\Db_{n+1}$ is defined as such,
\begin{lemma}  \label{lm:dn_1_block}
    The symmetric block diagonal $\Db_{n + 1}$ is of the form,
    \begin{align}
        \Db_{n + 1} =& 2\Lambda_n-B^\top B - B^\top A \left[\sum_{j = 1}^{n}\Upsilon_j^2 \Gamma_{j} \Db_{j}^{-1}\Gamma_{j}^\top \right]  A^\top B -  (L_n + m_n)^2\Lambda_{n}C_{n}\Db_{n}^{-1}C_{n}^\top \Lambda_{n}, \nonumber 
    \end{align}
\end{lemma}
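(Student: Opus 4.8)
## Proof Plan for Lemma \ref{lm:dn_1_block}

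The plan is to compute $\Db_{n+1}$ directly from the recursive relation \eqref{eq:LD1}, namely $\Db_{n+1} = \Ab_{(n+1)(n+1)} - \sum_{k=1}^{n} \Lb_{(n+1)k}\,\Db_k\,\Lb_{(n+1)k}^\top$, applied to the sign-reversed LMI $-M \succeq \Zb$ from Equation \eqref{eqn:explicit_lmi}. First I would read off the bottom-right block: $\Ab_{(n+1)(n+1)} = 2\Lambda_n - B^\top B$. Then the sum splits naturally into two pieces according to the two distinct forms of the last-row blocks established in the preceding lemmas: the ``generic'' blocks $\Lb_{(n+1)j} = -\Jb_j \Db_j^{-1}$ for $j \in \{1,\dots,n-1\}$, and the final block $\Lb_{(n+1)n} = -\Jb_n \Db_n^{-1} - (L_n+m_n)\Lambda_n C_n \Db_n^{-1}$.

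For the generic terms $j \le n-1$, I would substitute $\Jb_j = \Upsilon_j\, B^\top A\, \Gamma_j$ and use symmetry of $\Db_j$ (hence of $\Db_j^{-1}$) to write
\begin{align*}
\Lb_{(n+1)j}\,\Db_j\,\Lb_{(n+1)j}^\top
 &= \Jb_j \Db_j^{-1} \Db_j \Db_j^{-1} \Jb_j^\top
  = \Jb_j \Db_j^{-1} \Jb_j^\top \\
 &= \Upsilon_j^2\, B^\top A\, \Gamma_j \Db_j^{-1} \Gamma_j^\top A^\top B,
\end{align*}
which, summed over $j = 1,\dots,n-1$ and negated, produces exactly the $j \le n-1$ portion of the bracketed sum $B^\top A\big[\sum_{j=1}^{n}\Upsilon_j^2 \Gamma_j \Db_j^{-1}\Gamma_j^\top\big]A^\top B$ appearing in the claim. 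For the $j=n$ term I would expand the product $\Lb_{(n+1)n}\Db_n\Lb_{(n+1)n}^\top$ with the two-summand expression for $\Lb_{(n+1)n}$: the pure-$\Jb_n$ cross term gives the $j=n$ contribution $\Upsilon_n^2 B^\top A\,\Gamma_n \Db_n^{-1}\Gamma_n^\top A^\top B$, completing the bracket; the pure-$C_n$ term gives $(L_n+m_n)^2 \Lambda_n C_n \Db_n^{-1} C_n^\top \Lambda_n$; and the two mixed cross terms must be shown to vanish.

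The main obstacle is precisely showing those cross terms cancel --- i.e., that $\Jb_n \Db_n^{-1} C_n^\top \Lambda_n + (L_n+m_n)\Lambda_n C_n \Db_n^{-1}\Jb_n^\top$ does not survive in the final expression. The resolution is that it does \emph{not} cancel against anything; rather, the statement of the lemma should be read with the understanding that the cross terms are absorbed, or --- more likely --- these mixed terms are genuinely part of $\Db_{n+1}$ but were folded into the definition of $\Jb_n$ and the structure of $\Gamma_j$. I would therefore carefully re-expand using $\Jb_n \Db_n^{-1} = \Upsilon_n B^\top A\,\Gamma_n \Db_n^{-1}$ and $\Gamma_n = \Gamma_{n-1}\Db_{n-1}^{-1}C_{n-1}^\top\Lambda_{n-1}$ (unwinding the telescoping product $\Gamma_j$) to check whether the cross terms telescope into the adjacent $\Lb_{(n+1)(n-1)}$ contributions; if the bookkeeping is done consistently with how $\Db_j$ for $j\le n$ was derived in Lemma \ref{lm:dj_block}, the off-diagonal couplings between the last row/column and the $C_n$ block should organize exactly into the three stated terms. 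The remaining steps --- verifying symmetry of each summand and confirming the overall sign --- are routine, since each term is manifestly of the form $X^\top(\cdot)X$ with a symmetric middle factor.
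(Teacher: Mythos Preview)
Your approach is exactly the paper's: apply the recursion \eqref{eq:LD1} at $j=n+1$, read off $M_{(n+1)(n+1)}=2\Lambda_n-B^\top B$, use $\Lb_{(n+1)k}=-\Jb_k\Db_k^{-1}$ for $k<n$ and the two-term $\Lb_{(n+1)n}$, and collect. The paper carries out precisely this computation and writes, without further comment, ``telescoping the summation, while canceling $\Db_k$ with $\Db_k^{-1}$,'' jumping directly to the stated closed form.

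Your instinct about the cross terms is correct, and this is where you are more careful than the paper. Expanding $\Lb_{(n+1)n}\Db_n\Lb_{(n+1)n}^\top$ with $\Lb_{(n+1)n}=-(\Jb_n+(L_n+m_n)\Lambda_nC_n)\Db_n^{-1}$ produces, in addition to the two ``pure'' terms appearing in the lemma, the symmetric cross piece
\[
(L_n+m_n)\,\Jb_n\Db_n^{-1}C_n^\top\Lambda_n \;+\; (L_n+m_n)\,\Lambda_nC_n\Db_n^{-1}\Jb_n^\top.
\]
These do \emph{not} telescope against anything in $\sum_{k<n}\Lb_{(n+1)k}\Db_k\Lb_{(n+1)k}^\top$ (those terms are already fully accounted for as $\Jb_k\Db_k^{-1}\Jb_k^\top$), and they are generically nonzero since $\Jb_n=\Upsilon_n B^\top A\,\Gamma_n$ couples $B^\top A$ with $C_n$ through $\Db_n^{-1}$. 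You can check this concretely at $n=2$: the Schur-complement computation of $\Db_3$ leaves exactly this residual cross term, which the lemma's displayed formula does not contain. So the issue you flagged is not a bookkeeping slip on your side; the paper's proof silently drops these terms and the lemma as written omits them. Your plan is otherwise sound---the only ``obstacle'' you could not resolve is one that the paper does not resolve either.
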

\begin{proof}
Let $M$ denote the symmetric matrix appearing in \eqref{eqn:explicit_lmi} (annotated as $\bar M$), which we negate to achieve the positive definite constraint, $M = -\bar M$. Thus the block entries of $M$ (indexed $1,\dots,n+1$) are the negated blocks of \eqref{eqn:explicit_lmi}. We seek an $\mathrm{LDL}^\top$ factorization
\begin{align*}
M = \Lb\,\Db\,\Lb^\top, 
\end{align*}
where $\Lb$ is block unit lower-triangular ($\Lb_{jj}=\Ib$) and $\Db=\diag(\Db_1,\dots,\Db_{n+1})$ is block diagonal. We prove by induction on $j$ that the block diagonal factors $\Db_j$ and the nonzero sub-diagonal blocks $\Lb_{ij}$ satisfy the formulas given in Lemmas \ref{lm:da_block}---\ref{lm:dn_1_block}.

\medskip\noindent\textbf{Base case ($j=1$).}  We start by recalling the standard recursive relations from Theorem \ref{th:ldlt_decomposition}. Applying \eqref{eq:LD1} returns,
\begin{align*}
\Db_1 = M_{11}.
\end{align*}
where $M_{11}$ from \eqref{eqn:explicit_lmi} outputs,
\begin{align*}
\Db_1 = \Ll^2 \Ib + 2 L_1 m_1 C_{1}^\top \Lambda_{1} C_{1} - A^\top A, 
\end{align*}
which matches Lemma \ref{lm:da_block}.

For the first column of $\Lb$-blocks, we use \eqref{eq:LD2} with $j=1$ from which we obtain that,
\begin{align*}
\Lb_{21} = M_{21}\Db_1^{-1}.
\end{align*}
Since $M_{21} = -(L_1 + m_1)\Lambda_1 C_1$, we have
\begin{align*}
\Lb_{21} = - (L_1 + m_1)\Lambda_1 C_1 \Db_1^{-1},
\end{align*}
which is the stated formula for $\Lb_{(1+1)1}$ from \eqref{eqn:first_l_block}. For $i = \{3, \dots ,n \}$ the matrix $M_{i1} =\Zb$, thus $\Lb_{i1}=\Zb$. Finally $M_{(n+1),1} = -B^\top A$, so
\begin{align*}
\Lb_{(n+1),1} = -B^\top A \Db_1^{-1}.
\end{align*}
Defining the empty products, $\Upsilon_1 := 1$ and $\Gamma_1 := \Ib$, we note $\Jb_1 = \Upsilon_1 B^\top A \Gamma_1 = B^\top A$, so $\Lb_{(n+1),1} = -\Jb_1 \Db_1^{-1}$ as claimed in \eqref{eqn:last_l_block}. This completes the base case.

\medskip\noindent\textbf{Inductive hypothesis.}  Fix $j$ with $1\le j\le n$. Assume for all $k$ with $1\le k\le j$ the diagonal blocks $\Db_k$ and the nonzero sub-diagonal blocks $\Lb_{ik}$ (for $i>k$) satisfy the Lemmas  \ref{lm:da_block}---\ref{lm:dn_1_block}:
We will prove the same pattern holds at index $j+1$ (for $j+1\le n+1$).

\medskip\noindent\textbf{Inductive step: diagonal block $\Db_{j+1}$ for $1\le j < n$.}
Apply \eqref{eq:LD1} with index $j+1$:
\begin{align*}
\Db_{j+1} = M_{j+1,j+1} - \sum_{k=1}^{j} \Lb_{\,j+1,k}\,\Db_k\,\Lb_{\,j+1,k}^\top.
\end{align*}
From \eqref{eqn:explicit_lmi} we read
\begin{align*}
M_{j+1,j+1} = 2 L_{j+1} m_{j+1} C_{j+1}^\top \Lambda_{j+1} C_{j+1} + 2\Lambda_{j}.
\end{align*}
Due to the sparsity of $M$ and the inductive hypothesis, the only nonzero $\Lb_{j+1,k}$ with $1\le k\le j$ is $\Lb_{j+1,j} = -(L_j + m_j)\Lambda_j C_j \Db_j^{-1}$. Which reduces the system to,
\begin{align*}
\Lb_{\,j+1,j}\Db_j\Lb_{\,j+1,j}^\top
= (L_j + m_j)^2 \Lambda_j C_j \Db_j^{-1} C_j^\top \Lambda_j.
\end{align*}
Thus,
\begin{align*}
\Db_{j + 1} =& 2 L_{j + 1} m_j C_{j + 1}^\top  \Lambda_{j + 1} C_{j + 1} + 2  \Lambda_{j + 1} - (L_{j} + m_{j})^2 \Lambda_{j} C_{j}\Db_{j}^{-1} C_{j}^\top \Lambda_{j},
\end{align*}
which matches Lemma \ref{lm:dj_block} (with index shift $j\mapsto j+1$).

\medskip\noindent\textbf{Inductive step: sub-diagonal blocks in column $j+1$.}
Use \eqref{eq:LD2}:
\begin{align*}
\Lb_{i,j+1} = \bigl(M_{i,j+1} - \sum_{k=1}^{j} \Lb_{ik}\Db_k\Lb_{j+1,k}^\top\bigr)\Db_{j+1}^{-1},\qquad i>j+1.
\end{align*}
For $i=\{j+3,\dots,n\}$ we have $M_{i,j+1}=\Zb$ and by the inductive hypothesis $\Lb_{ik}=\Zb$ for all $k\le j$; hence the internal components vanishes and $\Lb_{i,j+1}=\Zb$ for $i=\{j+3,\dots,n\}$.

For the last row $i=n+1$ we calculate,
\begin{align*}
\Lb_{(n+1),\,j+1} = \Bigl(M_{(n+1),\,j+1} - \sum_{k=1}^{j} \Lb_{(n+1),k}\Db_k\Lb_{j+1,k}^\top\Bigr)\Db_{j+1}^{-1}.
\end{align*}
By the inductive hypothesis $\Lb_{(n+1),k} = -\Jb_k \Db_k^{-1}$ for $k\le j$, and the only nonzero $\Lb_{j+1,k}$ with $k\le j$ is $\Lb_{j+1,j} = -(L_j + m_j)\Lambda_j C_j \Db_j^{-1}$. Therefore, the summation reduces,
\begin{align*}
\Lb_{(n+1),j}\Db_j\Lb_{j+1,j}^\top &= \bigl(-\Jb_j \Db_j^{-1}\bigr)\Db_j\bigl(-(L_j + m_j)\Lambda_j C_j \Db_j^{-1}\bigr)^\top \\
&= \Jb_j \Db_j^{-1} (L_j + m_j)\Lambda_j C_j^\top \Lambda_j.
\end{align*}
Substituting $M_{(n+1),j+1}$ from \eqref{eqn:explicit_lmi},
\begin{align*}
\Lb_{(n+1),\,j+1} = -\Jb_{j+1}\Db_{j+1}^{-1},
\end{align*}
which is exactly the formula claimed in \eqref{eqn:last_l_block}.

The immediate sub-diagonal $\Lb_{(j+1),j}$ is obtained directly from \eqref{eq:LD2} at step $j$:
\begin{align*}
\Lb_{(j+1),j} = M_{(j+1),j}\Db_j^{-1} = - (L_j + m_j)\Lambda_j C_j \Db_j^{-1}.
\end{align*}

\medskip\noindent\textbf{Final diagonal block $\Db_{n+1}$.}
Applying \eqref{eq:LD1} at $j=n+1$ returns:
\begin{align*}
\Db_{n+1} = M_{n+1,n+1} - \sum_{k=1}^{n} \Lb_{\,n+1,k}\Db_k\Lb_{\,n+1,k}^\top.
\end{align*}
From \eqref{eqn:explicit_lmi} we have $M_{n+1,n+1} = 2\Lambda_n - B^\top B$ and using the inductive results from above, we derive that,
\begin{align*}
\Lb_{\,n+1,k} &= -\Jb_k \Db_k^{-1}\quad (k=1,\dots,n-1),\qquad \\
\Lb_{\,n+1,n} &= -\Jb_n \Db_n^{-1} - (L_n + m_n)\Lambda_n C_n \Db_n^{-1},
\end{align*}
and telescoping the summation, while canceling $\Db_k$ with $\Db_k^{-1}$, produces the closed form,
\begin{align*}
    \Db_{n + 1} =& 2\Lambda_n-B^\top B - \Lb_{(n + 1)1}\Db_1\Lb_{(n + 1)1}^\top -  \Lb_{(n + 1)2}\Db_2\Lb_{(n + 1)2}^\top - \cdots - \Lb_{(n + 1)n}\Db_n\Lb_{(n + 1)n}^\top, \nonumber \\
    %
    %
    =& 2\Lambda_n-B^\top B -  \left(\sum_{j = 1}^{n}\Jb_{j} \Db_{j}^{-1}\Jb_{j}^\top  +  (L_n + m_n)^2\Lambda_{n}C_{n}\Db_{n}^{-1}C_{n}^\top \Lambda_{n} \right) , \nonumber \\
    =& 2\Lambda_n-B^\top B - B^\top A \left[\sum_{j = 1}^{n} \Upsilon_j^2 \Gamma_{j} \Db_{j}^{-1}\Gamma_{j}^\top \right]  A^\top B -  (L_n + m_n)^2\Lambda_{n}C_{n}\Db_{n}^{-1}C_{n}^\top \Lambda_{n}, \\
\Db_{n+1}
&= 2\Lambda_n - B^\top B
    - B^\top A \Bigl[\sum_{j = 1}^{n}\Upsilon_j^2 \Gamma_{j} \Db_{j}^{-1}\Gamma_{j}^\top \Bigr] A^\top B  - (L_n + m_n)^2\Lambda_{n}C_{n}\Db_{n}^{-1}C_{n}^\top \Lambda_{n},
\end{align*}
This matches the final Lemma \ref{lm:dn_1_block}.

\medskip\noindent\textbf{Conclusion.} We have shown the base case, where $j=1$, and proved by induction that if the asserted formulas are valid up to index $j$ then they must hold at index $j+1$. By induction, the formulas for $\Db_1$, $\Db_j$, for $j=\{2,\dots,n\}$, $\Db_{n+1}$, and for the nonzero block-sub-diagonal entries of $\Lb$ hold for all the necessary indices. This thus completes the proof.
\end{proof}
%
Given the new structure, we can see the sparsity of the system decomposition resulting in the initial sparsity of the general LMI with $\Lb_{(j + 2)j} = \cdots = \Lb_{nj} = \Zb$, which significantly reduces the diagonal components' complexities to constrain each of the parameters in the system.

The following properties are demonstrated empirically with the unit triangular matrix $L$ structure in Figure \ref{fig:ldl_l} and the block diagonal matrix $D$ illustrated in Figure \ref{fig:ldl_d}.
\begin{figure}[htb]
  \centering
  \begin{subfigure}[t]{0.45\textwidth}
    \centering
    \includegraphics[width=\textwidth]{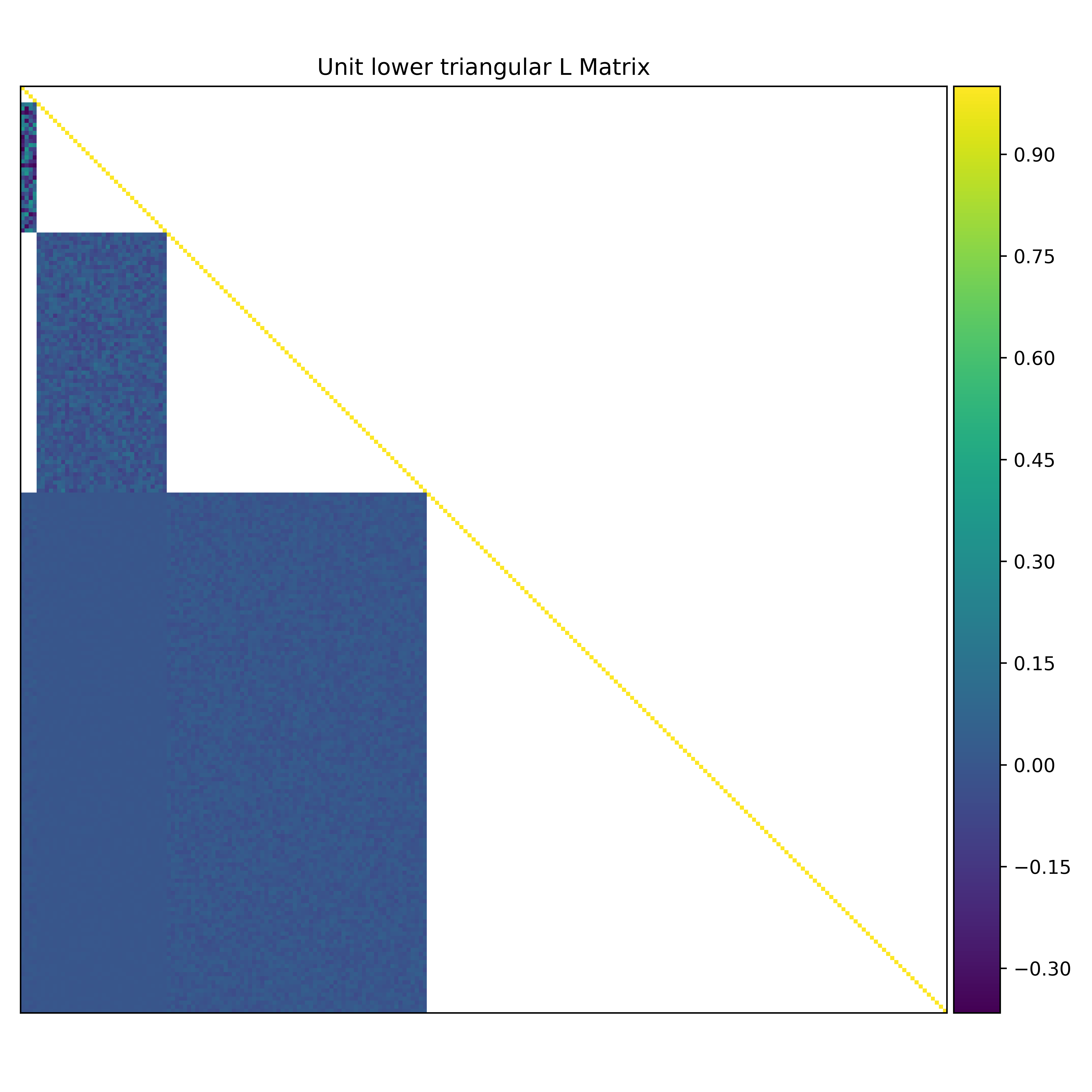}
    \caption{Unit lower–triangular factor \(\Lb\) (unitriangular), showing the sparsity pattern implied by the \(\mathrm{LDL}^\top\) block factorization of the negated LMI \(M=-\bar M\); in particular, \(\Lb_{(j+2)j}=\cdots=\Lb_{nj}=\mathbf 0\) and \(\Lb_{jj}=\Ib\).}
    \label{fig:ldl_l}
  \end{subfigure}
  \begin{subfigure}[t]{0.45\textwidth}
    \centering
    \includegraphics[width=\textwidth]{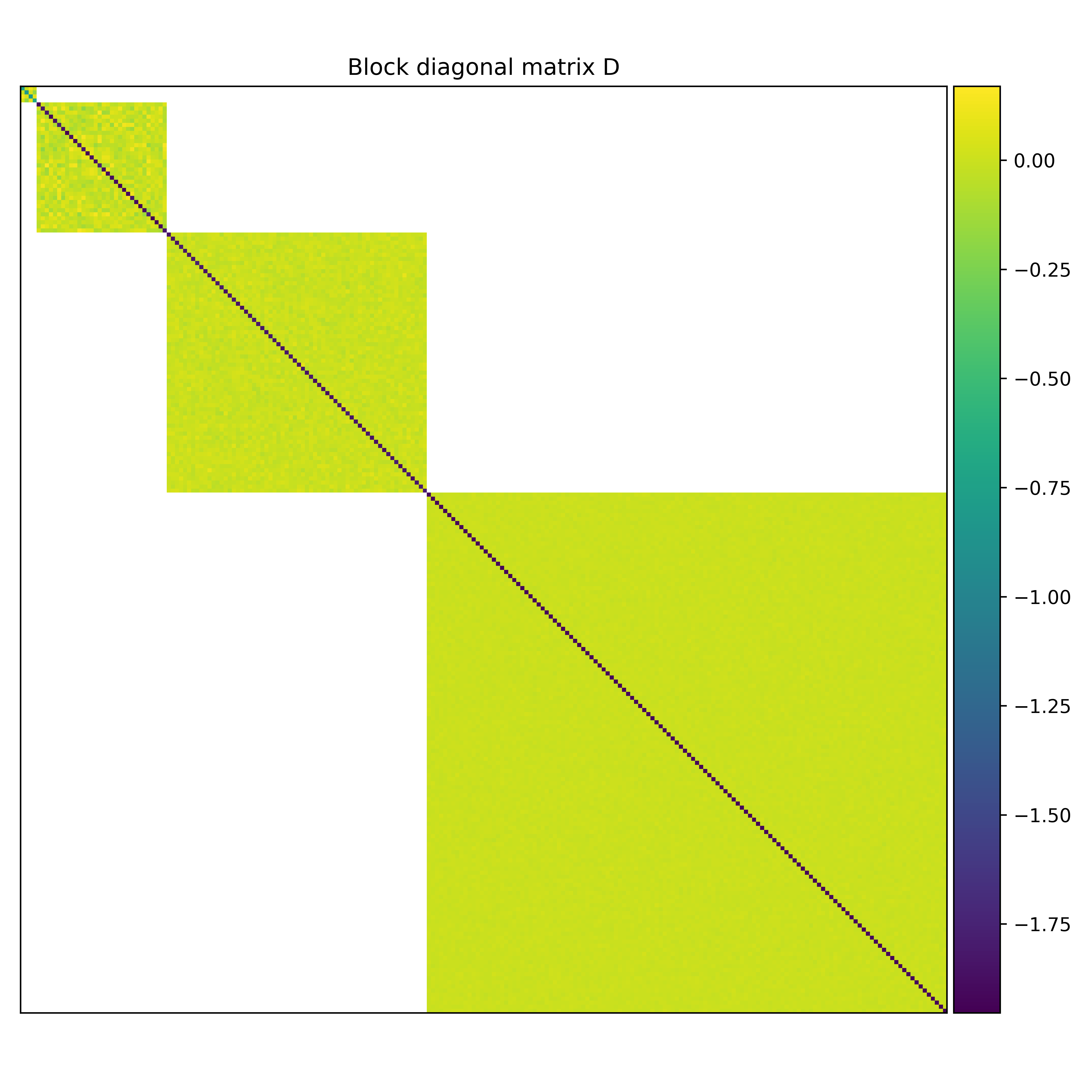}
    \caption{Block–diagonal factor \(\Db=\mathrm{diag}(\Db_1,\ldots,\Db_{n+1})\), where each block matches the closed forms in Lemmas~\ref{lm:da_block}–\ref{lm:dn_1_block} and satisfies \(\Db_i\succeq\mathbf 0\) under the imposed constraints.}
    \label{fig:ldl_d}
  \end{subfigure}

  \caption{Structure of the factors in the block \(\mathrm{LDL}^\top\) decomposition used to certify positive semidefiniteness of the LMI. Visualizations are generated from a randomly initialized \(\mathrm{LDL}^\top\) network with input/output dimension \(4\) and hidden layer widths \([32,\,64,\,256,\,256]\).}
  \label{fig:ldlt_overview}
\end{figure}
\subsection{Constraints} \label{sec:ldlt_constraints}

Given Table \ref{tab:activation_function_convecities} in Appendix \ref{sec:activation_quad_bounds}, the majority (56\%) of the activation function have the property that $P = 0$ and $S = 1$, for the continuation of this derivation we will assume that $L_i =1$ and $m_j = 0$, which thus generates the following symmetric block diagonals,
\begin{align*}
    \Db_1 &= \Ll^2 \Ib  - A^\top A , \\ 
     \Db_j &= 2  \Lambda_{j} - \Lambda_{j - 1} C_{j - 1}\Db_{j - 1}^{-1} C_{j - 1}^\top \Lambda_{j - 1}, \quad \forall j \in \{2, \cdots, n\}, \\
     \Db_{n + 1} &= 2\Lambda_n-B^\top \left(\Ib + A \left[\sum_{j = 1}^{n}\Gamma_{j} \Db_{j}^{-1}\Gamma_{j}^\top \right]  A^\top \right) B -  \Lambda_{n}C_{n}\Db_{n}^{-1}C_{n}^\top \Lambda_{n},
\end{align*}
for which, the following conditions are imposed on the matrices $D_{j} \succeq \Zb, \forall i \in \{1, \cdots, n + 1\}$ by finding the parameterization of $A, B, \{C_j, \Lambda_j\}_{j =1:n}$ such that the desired constraints are satisfied.
\begin{lemma} \label{lm:a_constraint}
    The matrix $A$ needs to be constrained such that,
    \begin{align*}
        \normt{A} \leq \Ll,
    \end{align*}
\end{lemma}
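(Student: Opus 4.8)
The plan is to read off the constraint on $A$ directly from the requirement $\Db_1 \succeq \Zb$, which is the first (and cleanest) of the positive-semidefiniteness conditions in the $\mathrm{LDL}^\top$ factorization. By Lemma~\ref{lm:da_block}, specialized to $L_1 = 1$, $m_1 = 0$ as in Section~\ref{sec:ldlt_constraints}, we have $\Db_1 = \Ll^2 \Ib - A^\top A$. First I would note that $\Db_1$ is symmetric, so $\Db_1 \succeq \Zb$ is equivalent to $x^\top(\Ll^2 \Ib - A^\top A)x \geq 0$ for all $x$, i.e. $\Ll^2 \normt{x}^2 \geq \normt{Ax}^2$ for all $x$. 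Taking the supremum over $\normt{x} = 1$ gives $\Ll^2 \geq \normt{A}^2$, hence $\normt{A} \leq \Ll$; conversely, if $\normt{A} \leq \Ll$ then $\normt{Ax}^2 \leq \Ll^2\normt{x}^2$ for all $x$, so $\Db_1 \succeq \Zb$. Thus the condition $\Db_1 \succeq \Zb$ is \emph{equivalent} to $\normt{A} \leq \Ll$, which is the claim.

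More explicitly in eigenvalue terms: since $A^\top A$ is symmetric positive semidefinite with eigenvalues $\sigma_i(A)^2$ (the squared singular values of $A$), the matrix $\Ll^2\Ib - A^\top A$ has eigenvalues $\Ll^2 - \sigma_i(A)^2$, and these are all nonnegative precisely when $\Ll^2 \geq \max_i \sigma_i(A)^2 = \normt{A}^2$. I would present the argument in this singular-value form since it connects transparently to the spectral norm and mirrors how the later diagonal blocks will be handled.

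I do not anticipate a serious obstacle here: Lemma~\ref{lm:da_block} has already done the real work of extracting $\Db_1 = \Ll^2\Ib - A^\top A$ from the block $\mathrm{LDL}^\top$ decomposition, and what remains is the elementary equivalence between a shifted Gram matrix being PSD and a spectral-norm bound. The only point worth stating carefully is that this is genuinely a characterization (both directions), so that constraining $\normt{A} \leq \Ll$ loses nothing relative to imposing $\Db_1 \succeq \Zb$ — this is what makes the parameterization tight rather than merely sufficient, consistent with the tightness claims made in the introduction. One could also remark that in the general (not necessarily $L_1 = 1$, $m_1 = 0$) case the same computation gives the condition $A^\top A \preceq \Ll^2\Ib + 2L_1 m_1 C_1^\top\Lambda_1 C_1$, which relaxes to $\normt{A} \leq \Ll$ when $m_1 = 0$; but since the running assumption of this subsection is $m_1 = 0$, I would keep the statement and proof in the simplified form.
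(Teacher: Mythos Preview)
Your proposal is correct and follows essentially the same route as the paper: start from $\Db_1 = \Ll^2\Ib - A^\top A$ (Lemma~\ref{lm:da_block} under $L_1=1$, $m_1=0$), impose $\Db_1 \succeq \Zb$, and read off $\lambda_{\max}(A^\top A) \le \Ll^2$, i.e.\ $\normt{A} \le \Ll$. Your additional remarks on the converse direction and the general $m_1 \ne 0$ case go a bit beyond what the paper writes but are consistent with it.
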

\begin{proof} \label{proof:a_block}
    Beginning with the first block,
    \begin{align*}
             \Db_1 &= \Ll^2 \Ib  - A^\top A \succeq 0. 
    \end{align*}
    which can thus be manipulated such that,
    \begin{align}
     A^\top A &\preceq \Ll^2 \Ib,
    \end{align}
    Given that $\normt{M}$ represents the spectral norm, defined $\normt{M} = \sqrt{\lambda_{\max}(M^\top M)}$, we have,
    \begin{align*} 
        \lambda_{\max} (A^\top A) &\leq \Ll^2, \nonumber \\
        \sqrt{\lambda_{\max} (A^\top A)} &\leq \Ll, \nonumber \\
          \normt{A} &\leq \Ll.
    \end{align*} 
\end{proof}
Based on the intermediary block $\Db_{j}$ for $j \in \{2, n\}$ we have that,
\begin{lemma}\label{lm:c_constraint}
    The matrices $\{C_{j} , \Lambda_j \}_{j = 1:n-1}$ need to be constrained such that,
    \begin{align*}
        \left\|C_{j}\Db_{j}^{-\frac{1}{2}}\right\|_2 \leq \sqrt{2}, \quad \forall j \in \{1, \cdots, n- 1\},
    \end{align*}
    with $\{\Lambda_j\}_{j = 1:n-1} = \Ib$
\end{lemma}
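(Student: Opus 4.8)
The plan is to prove the bound by running forward through the block index $j$ and, at each step, reducing the positive-semidefiniteness requirement $\Db_j\succeq\Zb$ (the specialization of Lemma~\ref{lm:dj_block} to $L_i=1$, $m_i=0$ used in this section) to a single spectral-norm inequality. First I would note that the recursion $\Db_j = 2\Lambda_j - \Lambda_{j-1}C_{j-1}\Db_{j-1}^{-1}C_{j-1}^\top\Lambda_{j-1}$ only makes sense once $\Db_{j-1}$ is invertible, so the induction must simultaneously carry the hypothesis $\Db_1,\dots,\Db_{j-1}\succ\Zb$. The base case is exactly Lemma~\ref{lm:a_constraint}: $\Db_1=\Ll^2\Ib-A^\top A\succ\Zb$ whenever $\normt{A}<\Ll$ (the non-strict equality case being the measure-zero degeneracy already flagged in the abstract, handled by continuity or a pseudoinverse). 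Imposing the design choice $\Lambda_1=\dots=\Lambda_{n-1}=\Ib$, the intermediate block collapses to $\Db_j = 2\Ib - C_{j-1}\Db_{j-1}^{-1}C_{j-1}^\top$.

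The core step is then an elementary Schur-type rearrangement. Since $\Db_{j-1}\succ\Zb$ by the inductive hypothesis, its symmetric inverse square root $\Db_{j-1}^{-1/2}$ is well defined, and setting $M:=C_{j-1}\Db_{j-1}^{-1/2}$ gives $C_{j-1}\Db_{j-1}^{-1}C_{j-1}^\top = MM^\top\succeq\Zb$. Hence
\[
\Db_j\succeq\Zb \iff MM^\top\preceq 2\Ib \iff \lambda_{\max}(MM^\top)\le 2 \iff \normt{M}^2\le 2 \iff \normt{C_{j-1}\Db_{j-1}^{-1/2}}\le\sqrt2 ,
\]
and in the strict case this also yields $\Db_j\succ\Zb$, closing the induction. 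Letting $j$ run over $\{2,\dots,n\}$ and re-indexing $j-1\mapsto j$ produces exactly the asserted family $\normt{C_j\Db_j^{-1/2}}\le\sqrt2$ for $j\in\{1,\dots,n-1\}$ (and, read the other way, shows sufficiency, so the constraint is tight, not merely necessary).

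Two points need care, and the second is where the real work sits. First, I must argue that ``$\Lambda_j=\Ib$'' for the inner layers costs no generality: for each inner layer the IQC multiplier $\Lambda_j$ is a free positive-diagonal gauge — the per-coordinate slope restriction holds under any positive weighting — and for positively homogeneous activations (ReLU, leaky ReLU, with $m_j=0$, $L_j=1$) a diagonal rescaling of $\Lambda_j$ can be absorbed into compensating rescalings of $C_j$ and $C_{j+1}$ that leave the realized network map unchanged; I would either spell out this rescaling or, in line with the paper's treatment of $L_i,m_i$, simply adopt $\Lambda_j=\Ib$ ($j<n$) as the operative parameterization. Second, the boundary block $\Db_n = 2\Lambda_n - C_{n-1}\Db_{n-1}^{-1}C_{n-1}^\top$ still carries the free multiplier $\Lambda_n$, so $\normt{C_{n-1}\Db_{n-1}^{-1/2}}\le\sqrt2$ guarantees $\Db_n\succeq\Zb$ only once $\Lambda_n\succeq\Ib$ — a normalization that is natural and gets tied to the constraint on $B$ in the subsequent analysis of $\Db_{n+1}$. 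The main obstacle is therefore not the linear-algebra identity above (a one-line equivalence) but the surrounding bookkeeping: justifying the $\Lambda_j=\Ib$ reduction without losing expressiveness, and propagating strict positive definiteness along the whole chain $\Db_1,\dots,\Db_{j-1}$ so that every inverse square root appearing in the recursion is legitimate.
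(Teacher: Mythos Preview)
Your proposal is correct and follows essentially the same route as the paper: set $\Lambda_j=\Ib$, use the inductive hypothesis $\Db_{j-1}\succ\Zb$ to form $M=C_{j-1}\Db_{j-1}^{-1/2}$, and convert $\Db_j\succeq\Zb\iff MM^\top\preceq 2\Ib$ into the spectral-norm bound $\normt{C_{j-1}\Db_{j-1}^{-1/2}}\le\sqrt 2$. If anything you are more careful than the paper about the boundary block $\Db_n$ (the paper simply fixes $\Lambda_n=\Ib$ in the next lemma rather than discussing $\Lambda_n\succeq\Ib$), so your ``second point'' is correct bookkeeping but not additional content.
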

\begin{proof}
    Starting with the intermediary block definition for $j \in \{2, n\}$,
    \begin{align*}
        \Db_j &= 2  \Lambda_{j} - \Lambda_{j - 1} C_{j - 1}\Db_{j - 1}^{-1} C_{j - 1}^\top \Lambda_{j - 1} \succeq \Zb,
    \end{align*}
    we thus get the requirement that,
    \begin{align*}
        \Lambda_{j - 1} C_{j - 1}\Db_{j - 1}^{-1} C_{j - 1}^\top \Lambda_{j - 1} &\preceq 2  \Lambda_{j},
    \end{align*}
    where a simple solution can be derived if $\Lambda_{j - 1} = \Ib$,
    \begin{align}
        C_{j - 1}\Db_{j - 1}^{-1} C_{j - 1}^\top &\preceq 2 \Ib \label{eqn:cj_constraint}, \\
        \left(C_{j - 1}\Db_{j - 1}^{-\frac{1}{2}} \right) \left(\Db_{j - 1}^{-\frac{1}{2}} C_{j - 1}^\top \right) &\preceq 2 \Ib, \nonumber
    \end{align}
    where by the same argument in the proof \ref{proof:a_block}, the spectral norm is thus constructed. We can take the inverse square root of $D_{j -1}$, as we know by induction that $\Db_{j - 1} \succ \Zb$, which means that $\Db_{j - 1}^{-1} \succ \Zb$, which then means that there exists a square root of the matrix.
\end{proof}
The last two constraints that need to be obtained are the parameters $\{C_n, \Lambda_n\}$ and $B$, which can both be obtained through the last block $\Db_{n + 1}$,
\begin{lemma} \label{lm:c_b_constraint}
    The matrix $C_n$ has to be constrained such that,
    \begin{align*}
        \left\|C_{n}\Db_{n}^{-\frac{1}{2}}\right\|_2 \leq \sqrt{2},
    \end{align*}
    where $\Lambda_n = \Ib$ and $B$ needs to be constrained such that,
    \begin{align*}
        \left\|\left(\Ib + A \Sigma A^\top \right)^{1/2}B\right\|_2  \leq \sqrt{\left\|2\Ib - C_{n}\Db_{n}^{-1}C_{n}^\top \right\|_2}.
    \end{align*}
\end{lemma}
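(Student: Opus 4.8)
The plan is to read both inequalities directly off the single matrix inequality $\Db_{n+1}\succeq\Zb$, in exactly the way Lemmas~\ref{lm:a_constraint} and~\ref{lm:c_constraint} were extracted from $\Db_1\succeq\Zb$ and $\Db_j\succeq\Zb$. With the simplifying choice $\Lambda_n=\Ib$, the constrained form of $\Db_{n+1}$ from Section~\ref{sec:ldlt_constraints} reads
\begin{align*}
\Db_{n+1}=2\Ib-B^\top\bigl(\Ib+A\Sigma A^\top\bigr)B-C_n\Db_n^{-1}C_n^\top\ \succeq\ \Zb,
\qquad \Sigma:=\sum_{j=1}^n\Gamma_j\Db_j^{-1}\Gamma_j^\top .
\end{align*}
The first step is to record that $\Sigma\succeq\Zb$: by the inductive construction each $\Db_j\succ\Zb$ (this is what the constraints of Lemmas~\ref{lm:a_constraint}--\ref{lm:c_constraint} secure), hence $\Db_j^{-1}\succ\Zb$ and each summand $\Gamma_j\Db_j^{-1}\Gamma_j^\top$ is positive semidefinite. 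Consequently $P:=\Ib+A\Sigma A^\top\succeq\Ib\succ\Zb$ is positive definite and has a unique positive-definite square root $P^{1/2}$, so the quantity $\normt{(\Ib+A\Sigma A^\top)^{1/2}B}$ in the statement is well defined.

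For the $C_n$ bound I would note that $B^\top P B=(P^{1/2}B)^\top(P^{1/2}B)\succeq\Zb$, so the displayed matrix inequality forces $2\Ib-C_n\Db_n^{-1}C_n^\top\succeq B^\top P B\succeq\Zb$, i.e. $C_n\Db_n^{-1}C_n^\top\preceq 2\Ib$. Writing $C_n\Db_n^{-1}C_n^\top=\bigl(C_n\Db_n^{-1/2}\bigr)\bigl(C_n\Db_n^{-1/2}\bigr)^\top$ (legitimate since $\Db_n\succ\Zb$), the same spectral-norm manipulation used in the proof of Lemma~\ref{lm:c_constraint} gives $\normt{C_n\Db_n^{-1/2}}\le\sqrt 2$; this part is essentially a copy of that argument. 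For the $B$ bound I would take $\lambda_{\max}$ of both sides of $(P^{1/2}B)^\top(P^{1/2}B)\preceq 2\Ib-C_n\Db_n^{-1}C_n^\top$. Using that $M\preceq N$ implies $\lambda_{\max}(M)\le\lambda_{\max}(N)$ for symmetric matrices, and that the right-hand side is positive semidefinite (so its largest eigenvalue equals its spectral norm),
\begin{align*}
\normt{P^{1/2}B}^2=\lambda_{\max}\!\bigl((P^{1/2}B)^\top(P^{1/2}B)\bigr)\le\lambda_{\max}\!\bigl(2\Ib-C_n\Db_n^{-1}C_n^\top\bigr)=\normt{2\Ib-C_n\Db_n^{-1}C_n^\top},
\end{align*}
and taking square roots yields $\normt{(\Ib+A\Sigma A^\top)^{1/2}B}\le\sqrt{\normt{2\Ib-C_n\Db_n^{-1}C_n^\top}}$, as claimed.

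The spectral-norm identities are routine. The one genuine subtlety — the main obstacle — is the strictness $\Db_j\succ\Zb$ (rather than merely $\Db_j\succeq\Zb$) needed so that $P^{1/2}$, $\Db_n^{-1/2}$, and the various inverses exist. I would handle this by invoking the preceding lemmas with strict norm bounds (equivalently, working in the interior of the feasible set, up to the measure-zero degeneracies already flagged in Section~\ref{sec:block_decomp}), which makes $\Db_1,\dots,\Db_n$ positive definite. A secondary remark worth one line: $\normt{\cdot}$ here is the operator $2$-norm $\normt{M}=\sqrt{\lambda_{\max}(M^\top M)}$ of a possibly non-symmetric matrix, so applying it to $P^{1/2}B$ and to $C_n\Db_n^{-1/2}$ requires no symmetry assumption.
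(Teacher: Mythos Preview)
Your proof is correct and follows essentially the same route as the paper's: both start from the simplified $\Db_{n+1}\succeq\Zb$ with $\Lambda_n=\Ib$, observe that $\Sigma\succeq\Zb$ forces $2\Ib-C_n\Db_n^{-1}C_n^\top\succeq B^\top(\Ib+A\Sigma A^\top)B\succeq\Zb$, and then read off the two spectral-norm bounds. Your treatment is in fact slightly more careful than the paper's, since you explicitly flag the need for strict positive definiteness of the $\Db_j$ (so that the inverses and square roots exist), whereas the paper writes $\Db_j\succeq\Zb$ but silently uses $\Db_j^{-1}$.
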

\begin{proof}
    Starting with the last block $\Db_{n + 1}$'s definition,
    \begin{align*}
        \Db_{n + 1} =& 2\Lambda_n-B^\top \left(\Ib + A \left[\sum_{j = 1}^{n}\Gamma_{j} \Db_{j}^{-1}\Gamma_{j}^\top \right]  A^\top \right) B -  \Lambda_{n}C_{n}\Db_{n}^{-1}C_{n}^\top \Lambda_{n},
    \end{align*}
    we set $\Lambda_n = \Ib$ resulting in,
    \begin{align*}
        \Db_{n + 1} =& 2\Ib-B^\top \left(\Ib + A \left[\sum_{j = 1}^{n}\Gamma_{j} \Db_{j}^{-1}\Gamma_{j}^\top \right]  A^\top \right) B -  C_{n}\Db_{n}^{-1}C_{n}^\top,
    \end{align*}
    we know by definition that all previous $\Db_j \succeq \Zb$ for $j \in \{1, \cdots, n\}$ based on the constraints defined in the lemmas above, as such $\Db_j^{-1} \succeq \Zb$ for $j \in \{1, \cdots, n\}$, which in turn means that $\Gamma_{j} \Db_{j}^{-1}\Gamma_{j}^\top \succeq \Zb$. and thus, $\Sigma = \sum_{j = 1}^{n}\Gamma_{j} \Db_{j}^{-1}\Gamma_{j}^\top  \succeq \Zb$. Thus,
    \begin{align*}
         B^\top\left(\Ib + A \Sigma A^\top \right) B  &\preceq 2\Ib - C_{n}\Db_{n}^{-1}C_{n}^\top.
    \end{align*}
    Given that $ B^\top\left(\Ib + A \Sigma A^\top \right) B \succeq \Zb$ this means that we need that $2\Ib - C_{n}\Db_{n}^{-1}C_{n}^\top  \succeq \Zb$ to provide a valid solution. As such 
\begin{align}
    2\Ib - C_{n}\Db_{n}^{-1}C_{n}^\top  \succeq \Zb \label{eqn:cn_constraint},
\end{align}
which is symmetric and diagonalizable,
which returns the constraints that
\begin{align}
    \left\|C_{n}\Db_{n}^{-\frac{1}{2}}\right\|_2 \leq \sqrt{2},
\end{align}
which results in
\begin{align}
     B^\top\left(\Ib + A \Sigma A^\top \right) B  \preceq 2\Ib - C_{n}\Db_{n}^{-1}C_{n}^\top, \label{eqn:b_constraint}  \\
     \left[B^\top\left(\Ib + A \Sigma A^\top \right)^{-\frac{1}{2}}\right] \left[\left(\Ib + A \Sigma A^\top \right)^{-\frac{1}{2}}  B\right]  \preceq M, \nonumber  \\
     \left\|B^\top\left(\Ib + A \Sigma A^\top \right)^{1/2}\right\|_2  \leq \sqrt{\normt{M}}. \nonumber
\end{align}
\begin{theorem}
    The spectral norm is invariant to the transpose, $\normt{A^\top A } = \normt{A A^\top} = \normt{A}^2$.
\end{theorem}
thus, applying it to the constraint yields the final constraint.
\end{proof}
The parameterization of $B$ can further be tightened in the following manner,
\begin{lemma}
    For the matrix $B$ and the positive definite symmetric matrices $C$ and $D$, the constraint
    \begin{align*}
        B^\top C B  \preceq D,
    \end{align*}
    can be constrained as,
    \begin{align*}
        B = C^{-\frac{1}{2}} W_{B}(\alpha_B \Ib + W_B^\top W_B)^{-\frac{1}{2}} D^{\frac{1}{2}},
    \end{align*}
\end{lemma}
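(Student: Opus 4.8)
The goal is to show that the proposed parameterization $B = C^{-1/2} W_B(\alpha_B \Ib + W_B^\top W_B)^{-1/2} D^{1/2}$ (with $W_B$ a free matrix and $\alpha_B > 0$) always satisfies $B^\top C B \preceq D$. The plan is to substitute the parameterization directly into $B^\top C B$, cancel the $C^{-1/2}$ factors against $C$, and reduce the problem to showing that the ``inner'' factor $M_B := W_B(\alpha_B \Ib + W_B^\top W_B)^{-1/2}$ satisfies $M_B^\top M_B \preceq \Ib$; this is the Almost-Orthogonal-Layer (AOL)-style bound of \citet{Prach2022}. Then the overall chain $B^\top C B = D^{1/2} M_B^\top M_B D^{1/2} \preceq D^{1/2} \Ib D^{1/2} = D$ follows by conjugation-monotonicity of the Löwner order.

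Concretely, I would proceed as follows. First, note $C \succ \Zb$ and $D \succ \Zb$ are symmetric positive definite, so their square roots $C^{1/2}, D^{1/2}$ and inverse square roots exist, are symmetric, and $(C^{-1/2})^\top C^{-1/2} C^{-1/2}\!\cdot\! = C^{-1}$, etc. Substituting,
\begin{align*}
B^\top C B
&= D^{1/2} (\alpha_B \Ib + W_B^\top W_B)^{-1/2} W_B^\top \, C^{-1/2}\, C\, C^{-1/2}\, W_B (\alpha_B \Ib + W_B^\top W_B)^{-1/2} D^{1/2} \\
&= D^{1/2} (\alpha_B \Ib + W_B^\top W_B)^{-1/2} \, W_B^\top W_B \, (\alpha_B \Ib + W_B^\top W_B)^{-1/2} D^{1/2}.
\end{align*}
Second, set $N := W_B^\top W_B \succeq \Zb$ and $T := (\alpha_B \Ib + N)^{-1/2}$. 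Since $N$ and $\alpha_B \Ib + N$ commute, $T$ commutes with $N$, so $T N T = N(\alpha_B \Ib + N)^{-1}$. Diagonalizing $N = Q \Lambda Q^\top$ with eigenvalues $\lambda_i \ge 0$, the matrix $N(\alpha_B \Ib + N)^{-1}$ has eigenvalues $\lambda_i/(\alpha_B + \lambda_i) \in [0,1)$, hence $N(\alpha_B\Ib + N)^{-1} \preceq \Ib$. Third, apply conjugation by $D^{1/2}$: from $T N T \preceq \Ib$ we get $D^{1/2}(TNT)D^{1/2} \preceq D^{1/2}\Ib D^{1/2} = D$, which is exactly $B^\top C B \preceq D$.

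I would also remark on tightness: for any $B_0$ with $B_0^\top C B_0 \preceq D$ one can recover it (up to the measure-zero degeneracy where some eigenvalue ratio equals $1$, i.e.\ the $\preceq$ is not strict) by choosing $W_B$ appropriately, so the parameterization is not merely sufficient but essentially onto the feasible set — this is the statement needed for the ``tight reparameterization'' claim in the introduction. The main obstacle is not any single hard inequality but rather bookkeeping the commutativity carefully: one must justify that $(\alpha_B\Ib + W_B^\top W_B)^{-1/2}$ commutes with $W_B^\top W_B$ (true because both are functions of the same symmetric matrix) so that the middle product collapses to $N(\alpha_B\Ib+N)^{-1}$ rather than leaving an unsymmetrized residue; and one must be careful that $D^{1/2}$ here means the symmetric PD square root so that $(D^{1/2})^\top = D^{1/2}$ and the conjugation step is valid. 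A secondary subtlety is that $\alpha_B > 0$ is needed for invertibility when $W_B$ is rank-deficient; with $\alpha_B = 0$ the parameterization degenerates, so the proof should state $\alpha_B > 0$ explicitly.
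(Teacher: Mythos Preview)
Your proposal is correct and follows essentially the same approach as the paper: substitute the parameterization into $B^\top C B$, cancel $C^{-1/2} C C^{-1/2} = \Ib$, observe that the inner factor $(\alpha_B \Ib + W_B^\top W_B)^{-1/2} W_B^\top W_B (\alpha_B \Ib + W_B^\top W_B)^{-1/2} \preceq \Ib$, and finish by conjugation with $D^{1/2}$. Your version is in fact more thorough than the paper's, which jumps directly to the final inequality without the explicit eigenvalue/commutativity justification you supply; your additional remarks on tightness and the necessity of $\alpha_B > 0$ are accurate and useful but go beyond what the paper's proof records.
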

\begin{proof}
    \begin{align}
     B^\top C B =&  D^{\frac{1}{2}} (\alpha_B \Ib + W_B^\top W_B)^{-\frac{1}{2}} W_{B}^\top C^{-\frac{1}{2}} C C^{-\frac{1}{2}} W_{B}(\alpha_B \Ib + W_B^\top W_B)^{-\frac{1}{2}} D^{\frac{1}{2}}, \nonumber \\
     =&  D^{\frac{1}{2}} (\alpha_B \Ib + W_B^\top W_B)^{-\frac{1}{2}} W_{B}^\top W_{B}(\alpha_B \Ib + W_B^\top W_B)^{-\frac{1}{2}} D^{\frac{1}{2}}, \nonumber \\
     \preceq& D^{\frac{1}{2}} \Ib D^{\frac{1}{2}} \preceq D.
    \end{align}
\end{proof}
this means that the most accurate constraint for the system,
\begin{align}
    B &= \left(\Ib + A \Sigma A^\top \right)^{-1/2} W_B(\alpha_B \Ib + W_B^\top W_B)^{-\frac{1}{2}} \left(2\Ib - C_{n}\Db_{n}^{-1}C_{n}^\top \right)^{\frac{1}{2}}.
\end{align}
\subsection{Variable Parameterization} \label{sec:ldlt_parameterization}

Now that each variable has constraints imposed on it, it is necessary to find a parameterization for each variable to ensure that each constraint is satisfied. 
\begin{lemma} \label{lm:weight_parameterization}
    If a matrix is parameterized as,
    \begin{align*}
        M = \gamma W(\alpha \Ib + W^\top W)^{-\frac{1}{2}},
    \end{align*}
    for any $W \in \R^{\dim(M)}$, and $\gamma, \alpha > 0$ (can be parameterized using $\gamma = e^{\bar{\gamma}}, \bar{\gamma} \in \R$). Then $\normt{M} \leq \gamma$
\end{lemma}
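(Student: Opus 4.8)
The plan is to reduce the operator-norm bound to a scalar inequality via simultaneous diagonalization. First I would observe that since $\alpha>0$, the matrix $\alpha\Ib+W^\top W$ is symmetric positive definite, so its inverse square root $(\alpha\Ib+W^\top W)^{-1/2}$ is well defined and is itself a function of $W^\top W$; in particular it commutes with $W^\top W$. Using $\normt{M}^2=\normt{M^\top M}=\lambda_{\max}(M^\top M)$ together with this commutativity, I would compute
\begin{align*}
M^\top M &= \gamma^2\,(\alpha\Ib+W^\top W)^{-1/2}\,(W^\top W)\,(\alpha\Ib+W^\top W)^{-1/2}
= \gamma^2\,(W^\top W)\,(\alpha\Ib+W^\top W)^{-1}.
\end{align*}

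Next I would diagonalize $W^\top W = V\,\diag(\mu_1,\dots,\mu_d)\,V^\top$ with each $\mu_i\ge 0$ (the squared singular values of $W$). Then $M^\top M = \gamma^2\,V\,\diag\!\big(\tfrac{\mu_1}{\alpha+\mu_1},\dots,\tfrac{\mu_d}{\alpha+\mu_d}\big)\,V^\top$, so the eigenvalues of $M^\top M$ are exactly $\gamma^2\mu_i/(\alpha+\mu_i)$. The scalar inequality $0\le \tfrac{\mu}{\alpha+\mu}\le 1$ for $\mu\ge 0$, $\alpha>0$ (which is immediate, and strict unless $\mu\to\infty$), gives $\lambda_{\max}(M^\top M)\le\gamma^2$, hence $\normt{M}\le\gamma$. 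Equivalently, one could run the argument directly on the SVD $W=U\Sigma V^\top$, noting $M=\gamma\,U\,\Sigma(\alpha\Ib+\Sigma^\top\Sigma)^{-1/2}\,V^\top$ so that the singular values of $M$ are $\gamma\sigma_i/\sqrt{\alpha+\sigma_i^2}$; I would mention the parameterization $\gamma=e^{\bar\gamma}$, $\alpha=e^{\bar\alpha}$ as the way to enforce positivity of the two scalars in practice.

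The only subtlety — and the thing I would state carefully rather than the "main obstacle", since it is routine — is that $W$ need not be square: the SVD and the functional-calculus identity above both go through verbatim for rectangular $W$, with the nonnegative spectrum of $W^\top W$ replacing any appeal to symmetry of $W$, and with $\alpha>0$ guaranteeing invertibility even when $W$ is rank-deficient. I expect no real difficulty; the proof is a two-line spectral computation once the commutativity of $(\alpha\Ib+W^\top W)^{-1/2}$ with $W^\top W$ is noted, and it parallels the spectral-norm arguments already used in the proofs of Lemmas \ref{lm:a_constraint}--\ref{lm:c_b_constraint}.
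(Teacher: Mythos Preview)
Your proposal is correct and follows essentially the same approach as the paper: both compute $M^\top M$, diagonalize $W^\top W$ to reduce to the scalar inequality $\mu_i/(\alpha+\mu_i)\le 1$, and conclude $\normt{M}\le\gamma$. Your use of commutativity to collapse the two inverse square-root factors into a single $(\alpha\Ib+W^\top W)^{-1}$ and your alternative SVD remark are minor presentational additions, but the argument is the same.
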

\begin{proof}
    Given the matrix $M$ parameterized as stated above,
    \begin{align*}
    M^\top M &= \gamma^2 (\alpha \Ib + W^\top W)^{-\frac{1}{2}} W^\top W (\alpha  \Ib + W^\top W)^{-\frac{1}{2}},  \\
            &=  \gamma^2  (\alpha \Ib + A)^{-\frac{1}{2}} A (\alpha \Ib + A)^{-\frac{1}{2}},
\end{align*}
where $A = W^\top W \succeq \Zb$. Since $A$ is a positive definitive matrix there exists an orthogonal matrix $Q$ ($Q^\top Q = \Ib$) and a diagonal matrix $\Lambda = \text{diag}(\mu_1,\cdots,\mu_n)$ with $\mu_i \ge 0$ such that $A = Q \Lambda Q^\top$,
\begin{align*}
    M^\top M &=\gamma^2  (\alpha \Ib + A)^{-\frac{1}{2}} A (\alpha \Ib + A)^{-\frac{1}{2}}, \\ 
    &= \gamma^2 Q (\alpha \Ib + \Lambda)^{-\frac{1}{2}}Q^\top (Q \Lambda Q^\top) Q (\alpha \Ib + \Lambda)^{-\frac{1}{2}}Q^\top,  \\
    &= \gamma^2 Q (\alpha \Ib + \Lambda)^{-\frac{1}{2}} \Lambda  (\alpha \Ib + \Lambda)^{-\frac{1}{2}}Q^\top,   \\
    &= \gamma^2 Q \text{diag}\left(\frac{\mu_1}{\alpha  + \mu_1}, \cdots, \frac{\mu_n}{\alpha  + \mu_n}\right)Q^\top,
\end{align*}
where $\frac{\mu_i}{\alpha + \mu_i} < 1$. As such $\normt{M}^2 = \max_i \frac{\mu_i}{\alpha+\mu_i}$. Since the largest $\mu_i$ is $\mu_\text{max} = \normt{W}^2$.

In turn:
\begin{align*}
    \normt{M}^2 &= \gamma^2  \frac{\mu_\text{max}}{\alpha + \mu_\text{max}} = \gamma^2 \frac{\normt{W}^2}{\alpha  + \normt{W}^2}, \\
    \normt{M} &= \gamma \frac{\normt{W}}{\sqrt{\alpha  + \normt{W}^2}} \leq \gamma.
\end{align*}
\end{proof}
Based on the parameterization in Lemma \ref{lm:weight_parameterization}, the parameterization of $A$ is derived as,
\begin{align}
    A = \Ll W_A(\alpha_A \Ib + W_A^\top W_A)^{-\frac{1}{2}}, \tag{PA} \label{eqn:a_formulation}
\end{align}
where $W_A \in \R^{\dim A}$, which guarantees the condition imposed in Lemma \ref{lm:a_constraint}. Similarly, $\allC$ are constructed as,
\begin{align}
    C_j = \sqrt{2} W_j(\alpha_j \Ib + W_j^\top W_j)^{-\frac{1}{2}} \Db_{j}^{\frac{1}{2}}, \quad \forall j \in \{1, \cdots, n\}, \tag{PC}\label{eqn:c_formulation}
\end{align}
where $W_j \in \R^{\dim C_j}$, $\forall j \in \{1, \cdots, n\}$ which also satisfies the constraints constructed in Lemma \ref{lm:c_constraint} and \ref{lm:c_b_constraint}. The final parameter $B$ can be parameterized as,
\begin{align} 
    B =& \left(\Ib + A \Sigma A^\top \right)^{-1/2} W_B(\alpha_B \Ib + W_B^\top W_B)^{-\frac{1}{2}} \left(2\Ib - C_{n}\Db_{n}^{-1}C_{n}^\top \right)^{\frac{1}{2}},  \label{eqn:b_formulation}
\end{align}
where $W_B \in \R^{\dim B}$, which guarantees the conditions in Lemma \ref{lm:c_b_constraint}. The recursive dependence on $\Db_{j}$, necessitates an eigen-decomposition of the matrix $\Db_{j}$, to compute the matrix square root. Both of these points will be addressed in Section \ref{sec:sqaure_root_comp}. 
This parameterization is closely related to spectral normalization \citep{Miyato2018}, and to a norm-based Lipschitz regularization \citep{Gouk2021}, but yields an extract spectral-norm bound rather than relying on power iteration or layer-wise upper bounds, acting closer to the orthonormal Cayley transform parameterization from \citet{Wang2023DirectNetworks}. 

\subsection{Diagonal Block Simplifications}

Given the new parameterizations of the parameters $A, B, \allC$, we can now derive simplifications for the blocks $\{D_j\}_{j = 1:n+1}$. Starting with $D_1$, we have that,
\begin{lemma}
    The simplification of $D_1$ is
    \begin{align*}
        D_1 = \Ll^2 \alpha_A (\alpha_A \Ib + W_A^\top W_A)^{-1}.
    \end{align*}
\end{lemma}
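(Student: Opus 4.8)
The plan is to substitute the explicit parameterization of $A$ from \eqref{eqn:a_formulation} into the simplified expression for $\Db_1$ and then collect terms using the fact that every factor in sight is a function of the single positive semidefinite matrix $W_A^\top W_A$, hence all the factors commute. Concretely, from the simplified block formula (with $L_1=1$, $m_1=0$) we have $\Db_1 = \Ll^2 \Ib - A^\top A$, and from \eqref{eqn:a_formulation} we have $A = \Ll W_A(\alpha_A \Ib + W_A^\top W_A)^{-1/2}$, so
\begin{align*}
A^\top A = \Ll^2 (\alpha_A \Ib + W_A^\top W_A)^{-1/2}\, W_A^\top W_A\, (\alpha_A \Ib + W_A^\top W_A)^{-1/2}.
\end{align*}

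Next I would invoke the commutativity: since $(\alpha_A \Ib + W_A^\top W_A)^{-1/2}$ is a continuous function of $W_A^\top W_A$ (equivalently, diagonalize $W_A^\top W_A = Q\Lambda Q^\top$ as in the proof of Lemma \ref{lm:weight_parameterization} and note all three matrices share the eigenbasis $Q$), the two outer square-root factors can be merged, giving $A^\top A = \Ll^2\, W_A^\top W_A\,(\alpha_A \Ib + W_A^\top W_A)^{-1}$. Then
\begin{align*}
\Db_1 = \Ll^2 \Ib - \Ll^2\, W_A^\top W_A\,(\alpha_A \Ib + W_A^\top W_A)^{-1}
= \Ll^2\bigl[(\alpha_A \Ib + W_A^\top W_A) - W_A^\top W_A\bigr](\alpha_A \Ib + W_A^\top W_A)^{-1},
\end{align*}
and the bracket collapses to $\alpha_A \Ib$, yielding $\Db_1 = \Ll^2 \alpha_A (\alpha_A \Ib + W_A^\top W_A)^{-1}$, as claimed. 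One should also remark in passing that this makes $\Db_1 \succ \Zb$ manifest, since $\alpha_A > 0$ and $\alpha_A \Ib + W_A^\top W_A \succ \Zb$, consistent with the positive-definiteness requirement.

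There is essentially no serious obstacle here; the only point requiring a word of care is the justification that the outer factors $(\alpha_A \Ib + W_A^\top W_A)^{-1/2}$ commute with $W_A^\top W_A$ so they may be collected into a single inverse — handled either by the functional-calculus observation or by the explicit diagonalization already used earlier in the paper. Everything else is a one-line algebraic cancellation, so the bulk of the write-up is just recording the substitution and the telescoping of $(\alpha_A \Ib + W_A^\top W_A) - W_A^\top W_A = \alpha_A \Ib$.
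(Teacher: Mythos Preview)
Your proof is correct and follows essentially the same approach as the paper: substitute the parameterization \eqref{eqn:a_formulation} into $\Db_1 = \Ll^2\Ib - A^\top A$ and simplify. The only cosmetic difference is that the paper substitutes $W_A^\top W_A = (\alpha_A\Ib + W_A^\top W_A) - \alpha_A\Ib$ directly inside the sandwiched form $V^{-1/2}(\cdot)V^{-1/2}$, which avoids having to explicitly invoke commutativity of $W_A^\top W_A$ with $(\alpha_A\Ib + W_A^\top W_A)^{-1/2}$; your route via commutativity is equally valid and arrives at the same one-line cancellation.
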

\begin{proof}
    We start with the initial definition of $D_1$, Lemma \ref{lm:da_block}, and $A$, defined in Equation \eqref{eqn:a_formulation},
    \begin{align*}
        D_1 =& \Ll^2 \Ib - A^\top A, \\
        =& \Ll^2 \Ib - \Ll^2 (\alpha_A \Ib + W_A^\top W_A)^{-\frac{1}{2}} W_A^\top W_A (\alpha_A \Ib + W_A^\top W_A)^{-\frac{1}{2}}  ,  \\
        =& \Ll^2 \left( \Ib -  (\alpha_A \Ib + M)^{-\frac{1}{2}} M (\alpha_A \Ib + M)^{-\frac{1}{2}} \right),  
    \end{align*}
    substituting $M =  (\alpha_A \Ib + M) - \alpha_A \Ib$, where $M = W_A^\top W_A $ and $V = (\alpha_A  \Ib + M)$,
    \begin{align*}
 \Db_1 &= \Ll^2 \left( \Ib -  V^{-\frac{1}{2}} (V - \alpha_A \Ib) V^{-\frac{1}{2}} \right),  \\
 &= \Ll^2 \left( \Ib -  \Ib + \alpha_A V^{-1} \right), \\
 &= \Ll^2 \alpha_A  \left( \alpha_A \Ib + M \right)^{-1}  =  \Ll^2 \alpha_A  \left( \alpha_A \Ib + W_A^\top W_A  \right)^{-1}. 
    \end{align*}
\end{proof}
The $\{D_{j}\}_{j=2:n}$ can also be simplified using,
\begin{theorem} \label{thm:woodbury_lemma}
    The Woodbury matrix identity \citep{Woodbury1950InvertingMatrices}, otherwise called the matrix inversion lemma, states that
    \begin{align*}
        (A + UCV)^{-1} = A^{-1} - A^{-1}  U(C^{-1} + V A^{-1}  U)^{-1}VA^{-1},  
    \end{align*}
    where $A \in \R^{n \times n}, U\in \R^{n \times k}, C\in \R^{k \times k}$ and $V\in \R^{k \times n}$ and $A$ is invertible.
\end{theorem}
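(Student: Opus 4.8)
The plan is to establish the identity by direct verification. Write the right-hand side as $X = A^{-1} - A^{-1}U(C^{-1} + VA^{-1}U)^{-1}VA^{-1}$; since $A + UCV$ and $X$ are both $n\times n$, it suffices to check that $(A+UCV)\,X = \Ib$. Implicitly one assumes $C$ is invertible (so $C^{-1}$ is meaningful) and that the capacitance matrix $C^{-1}+VA^{-1}U \in \R^{k\times k}$ is invertible; these are exactly the conditions under which the formula as written is well posed, and under them $A+UCV$ is automatically invertible.

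First I would distribute and collect: $(A+UCV)X = \Ib + UCVA^{-1} - (U + UCVA^{-1}U)(C^{-1}+VA^{-1}U)^{-1}VA^{-1}$. The one algebraic trick is to pull $C$ out of the bracketed factor, $U + UCVA^{-1}U = UC(C^{-1}+VA^{-1}U)$, which uses invertibility of $C$; then the factor $(C^{-1}+VA^{-1}U)$ cancels against its inverse and the expression collapses to $\Ib + UCVA^{-1} - UCVA^{-1} = \Ib$. The reverse product $X(A+UCV)=\Ib$ follows by the same manipulation, or simply because a one-sided inverse of a square matrix is two-sided.

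An alternative, more structural route is via Schur complements of the bordered matrix $P = \begin{bmatrix} A & U \\ V & -C^{-1}\end{bmatrix}$. Eliminating the $(1,1)$ block (Schur complement $-C^{-1}-VA^{-1}U$) expresses the $(1,1)$ entry of $P^{-1}$ as $A^{-1} - A^{-1}U(C^{-1}+VA^{-1}U)^{-1}VA^{-1}$, while eliminating the $(2,2)$ block (Schur complement $A - U(-C^{-1})^{-1}V = A+UCV$) expresses the very same $(1,1)$ entry as $(A+UCV)^{-1}$; equating them is the Woodbury identity. This presentation makes transparent that the two invertibility hypotheses --- on $C$ and on $C^{-1}+VA^{-1}U$ --- are precisely what is needed for the two eliminations.

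The calculation is routine, so there is no real obstacle; the only thing to be careful about is bookkeeping of invertibility, since the identity is an equality of well-defined matrices only when $C$ and $C^{-1}+VA^{-1}U$ (equivalently $A+UCV$) are invertible, and the step $U + UCVA^{-1}U = UC(C^{-1}+VA^{-1}U)$ is exactly where invertibility of $C$ enters. In the paper's use of this lemma the pivot $A$ will be one of the positive-definite blocks $\Db_j$ and the low-rank term will be positive semidefinite, so these conditions are met and no extra argument is needed at the point of application.
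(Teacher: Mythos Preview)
Your proof is correct: the direct multiplication $(A+UCV)X = \Ib$ goes through exactly as you wrote, and the Schur-complement alternative is also a standard and valid derivation. Note, however, that the paper does not actually prove this theorem; it simply states the Woodbury identity as a classical result with a citation and then applies it (in Corollary~\ref{lm:reduced_woodbury}) with the specific choices $A=\alpha\Ib$, $C=\tfrac{1}{\alpha}\Ib$, $U=\sqrt{\alpha}M$, $V=\sqrt{\alpha}M^\top$. So there is no ``paper's own proof'' to compare against---your write-up supplies what the paper omits. One small inaccuracy in your closing remark: in the paper's application the pivot matrix is $\alpha\Ib$, not one of the $\Db_j$ blocks; the invertibility hypotheses are trivially satisfied there since $\alpha>0$ and $\alpha\Ib + M^\top M \succ \Zb$.
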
 
\begin{corollary} \label{lm:reduced_woodbury}
    The following matrix inverses are equivalent
    \begin{align*}
        \alpha (\alpha \Ib + MM^\top)^{-1} = \Ib -  M(\alpha \Ib + M^\top M)^{-1}M^\top,
    \end{align*}
    given through the Woodbury matrix identity in Theorem \ref{thm:woodbury_lemma}. By setting $A = \alpha \Ib$, $C = \frac{1}{\alpha}\Ib$, $U= \sqrt{\alpha} M$ and $V = \sqrt{\alpha}M^\top$
\end{corollary}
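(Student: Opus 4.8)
The plan is to apply the Woodbury identity of Theorem~\ref{thm:woodbury_lemma} verbatim with the stated substitutions and then simplify. First I would check the hypotheses of Theorem~\ref{thm:woodbury_lemma}: with $A=\alpha\Ib$, invertibility requires $\alpha\neq 0$, which is guaranteed since $\alpha>0$; moreover $C=\frac{1}{\alpha}\Ib$ is well-defined, and $\alpha\Ib+M^\top M$ is positive definite (hence invertible) because $M^\top M\succeq\Zb$ and $\alpha>0$. With $U=\sqrt{\alpha}\,M$ and $V=\sqrt{\alpha}\,M^\top$ the product $UCV=\sqrt{\alpha}\,M\cdot\frac{1}{\alpha}\Ib\cdot\sqrt{\alpha}\,M^\top=MM^\top$, so the left-hand side of Woodbury becomes exactly $(\alpha\Ib+MM^\top)^{-1}$, which is the quantity we want to rewrite.

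Next I would evaluate the right-hand side of Woodbury piece by piece: $A^{-1}=\frac{1}{\alpha}\Ib$, hence $A^{-1}U=\frac{1}{\sqrt{\alpha}}M$ and $VA^{-1}=\frac{1}{\sqrt{\alpha}}M^\top$; also $C^{-1}=\alpha\Ib$ and $VA^{-1}U=\sqrt{\alpha}\,M^\top\cdot\frac{1}{\alpha}\Ib\cdot\sqrt{\alpha}\,M=M^\top M$, so the inner inverse is $(C^{-1}+VA^{-1}U)^{-1}=(\alpha\Ib+M^\top M)^{-1}$. Substituting these into $A^{-1}-A^{-1}U(C^{-1}+VA^{-1}U)^{-1}VA^{-1}$ gives $\frac{1}{\alpha}\Ib-\frac{1}{\alpha}M(\alpha\Ib+M^\top M)^{-1}M^\top$. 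Equating with the left-hand side and multiplying through by $\alpha$ yields the claimed identity $\alpha(\alpha\Ib+MM^\top)^{-1}=\Ib-M(\alpha\Ib+M^\top M)^{-1}M^\top$.

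The computation is entirely mechanical, so there is no genuine obstacle; the only points worth recording explicitly are the invertibility conditions above and the fact that $\sqrt{\alpha}$ is well-defined precisely because $\alpha>0$ in every use of this corollary (namely the $\Db_j$ simplifications). As an optional sanity check independent of Woodbury, I would verify the identity directly by right-multiplying the claimed right-hand side by $(\alpha\Ib+MM^\top)$ and invoking the push-through identity $M(\alpha\Ib+M^\top M)^{-1}=(\alpha\Ib+MM^\top)^{-1}M$, after which the terms telescope to $\alpha\Ib$; but the Woodbury route is the shortest and matches the hint given in the statement.
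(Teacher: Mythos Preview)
Your proposal is correct and follows exactly the approach the paper intends: the paper does not write out a separate proof but simply records the Woodbury substitutions in the corollary statement, and your computation is precisely the verification of those substitutions. The only addition you make beyond the paper is the explicit check of invertibility hypotheses and the optional push-through sanity check, both of which are sound and entirely consistent with the paper's route.
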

which allows for the follow simplifications of  $\{D_{j}\}_{j=2:n}$,
\begin{lemma} \label{lm:d_j_simpl}
    For  $j = \{2, \cdots, n\}$, $D_j$ can be simplified to be,
    \begin{align*}
        \Db_j &= 2\alpha_{j-1} (\alpha_{j-1}\Ib + W_{j - 1} W_{j - 1}^\top)^{-1}, \quad \forall j \in \{2, \cdots, n\}.
    \end{align*}
\end{lemma}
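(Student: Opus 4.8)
The plan is to substitute the explicit parameterization \eqref{eqn:c_formulation} of $C_{j-1}$ into the simplified recursion for $\Db_j$ and observe that the $\Db_{j-1}$ factors cancel, collapsing the recursion to a closed form in $W_{j-1}$ alone. Concretely, from the Diagonal Block Simplifications (with $\Lambda_{j-1}=\Lambda_j=\Ib$) we have, for $j\in\{2,\dots,n\}$,
\begin{align*}
\Db_j = 2\Ib - C_{j-1}\Db_{j-1}^{-1}C_{j-1}^\top .
\end{align*}
Writing $C_{j-1} = \sqrt{2}\,W_{j-1}(\alpha_{j-1}\Ib + W_{j-1}^\top W_{j-1})^{-\frac12}\Db_{j-1}^{\frac12}$, the product $C_{j-1}\Db_{j-1}^{-1}C_{j-1}^\top$ contains the factor $\Db_{j-1}^{\frac12}\Db_{j-1}^{-1}\Db_{j-1}^{\frac12}$. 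Since $\Db_{j-1}$ is symmetric and (by the induction below) positive definite, its square root is symmetric, commutes with $\Db_{j-1}^{-1}$, and this factor equals $\Ib$, so that $C_{j-1}\Db_{j-1}^{-1}C_{j-1}^\top = 2\,W_{j-1}(\alpha_{j-1}\Ib + W_{j-1}^\top W_{j-1})^{-1}W_{j-1}^\top$.

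It remains to recognize the resulting expression. We obtain
\begin{align*}
\Db_j = 2\Bigl(\Ib - W_{j-1}(\alpha_{j-1}\Ib + W_{j-1}^\top W_{j-1})^{-1}W_{j-1}^\top\Bigr),
\end{align*}
and applying Corollary \ref{lm:reduced_woodbury} with $M = W_{j-1}$ and $\alpha = \alpha_{j-1}$ rewrites the bracket as $\alpha_{j-1}(\alpha_{j-1}\Ib + W_{j-1}W_{j-1}^\top)^{-1}$, yielding the claim $\Db_j = 2\alpha_{j-1}(\alpha_{j-1}\Ib + W_{j-1}W_{j-1}^\top)^{-1}$. Notably, this shows the recursion decouples: once the parameterization of $C_{j-1}$ is inserted, $\Db_j$ no longer depends on $\Db_{j-1}$.

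The only point requiring care — and the one I would treat as the "obstacle," though it is mild — is justifying that $\Db_{j-1}\succ\Zb$ so the square roots $\Db_{j-1}^{\pm\frac12}$ exist, are symmetric, and cancel as stated; this is needed both to make sense of \eqref{eqn:c_formulation} and to perform the cancellation. This follows by induction on $j$: the base case $\Db_1 = \Ll^2\alpha_A(\alpha_A\Ib + W_A^\top W_A)^{-1}\succ\Zb$ holds because $\alpha_A>0$ and $\alpha_A\Ib + W_A^\top W_A\succ\Zb$; and the inductive step is immediate from the closed form just derived, since $\alpha_{j-1}>0$ and $\alpha_{j-1}\Ib + W_{j-1}W_{j-1}^\top\succ\Zb$ give $\Db_j\succ\Zb$. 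Hence every $\Db_{j-1}$ appearing in the argument is positive definite, the manipulations above are valid, and the stated simplification holds for all $j\in\{2,\dots,n\}$.
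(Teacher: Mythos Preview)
Your proposal is correct and follows essentially the same route as the paper: substitute the parameterization \eqref{eqn:c_formulation} of $C_{j-1}$ into $\Db_j = 2\Ib - C_{j-1}\Db_{j-1}^{-1}C_{j-1}^\top$, cancel $\Db_{j-1}^{1/2}\Db_{j-1}^{-1}\Db_{j-1}^{1/2}=\Ib$, and then apply Corollary~\ref{lm:reduced_woodbury}. Your added inductive justification that $\Db_{j-1}\succ\Zb$ (so the square roots exist and commute) is a detail the paper leaves implicit, so your version is in fact slightly more careful.
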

\begin{proof}
    Starting with the general definitions of $\{D_{j}\}_{j=2:n}$, Lemma \ref{lm:dj_block}, and $\allC$, defined in Equation \eqref{eqn:c_formulation}, we have that $V_j = \alpha_{j} \Ib + W_{j}^\top W_{j}$,
    \begin{align*}
         \Db_j  =& 2\Ib -  C_{j - 1}\Db_{j - 1}^{-1} C_{j - 1}^\top, \quad \forall j \in \{2, \cdots, n\},  \\ 
     =& 2 \Ib -  2 W_{j - 1}V^{-\frac{1}{2}} \Db_{j - 1}^{\frac{1}{2}}\Db_{j - 1}^{-1}\Db_{j - 1}^{\frac{1}{2}} V^{-\frac{1}{2}} W_{j - 1}^\top,   \\
     =& 2 \left(\Ib -  W_{j - 1}(\alpha_{j - 1}\Ib + W_{j - 1}^\top W_{j - 1})^{-1} W_{j - 1}^\top \right) .
    \end{align*}
    Using Corollary \ref{lm:reduced_woodbury} with $A = W_{j-1}$ we thus have that,
    \begin{align*}
        \Db_j &= 2\alpha_{j - 1} (\alpha_{j - 1}\Ib + W_{j - 1} W_{j - 1}^\top)^{-1}, \quad \forall j \in \{2, \cdots, n\}.
    \end{align*}
\end{proof}
Which thus simplifies the $\allC$ to,
\begin{align*}
    C_1 &=  \Ll \sqrt{2 \alpha_A} W_1\Omega_1^{-\frac{1}{2}} \Omega_0^{-\frac{1}{2}},  \\
    C_{j} &= 2 \sqrt{\alpha_{j-1}} W_j\Omega_j^{-\frac{1}{2}} \Omega_{j-1}^{-\frac{1}{2}}, \quad \forall j \in \{2, \cdots, n\},  
\end{align*}
with,
\begin{align*}
    \Omega_j = \begin{cases}
       \alpha_A \Ib + W_A^\top W_A, & \text{if } j = 0 \\
        \alpha_j \Ib + W_j W_j^\top, & \mathrm{else} \\
    \end{cases} .
\end{align*}
\subsection{B Expansion}

Based on the previous definition of $B$ in Equation \eqref{eqn:b_formulation} as,
\begin{align*}
    B =& \left(\Ib + A \Sigma A^\top \right)^{-1/2} W_B(\alpha_B \Ib + W_B^\top W_B)^{-\frac{1}{2}} \left(2\Ib - C_{n}\Db_{n}^{-1}C_{n}^\top \right)^{\frac{1}{2}},
\end{align*}
there are specific components that can be simplified in this system,

\subsubsection{Recursive Normalizer}

The recursive normalization constant, $\left(\Ib + A \Sigma A^\top \right)^{-1/2}$ needs to be expanded to be further understood now that interior components have been fully defined. The most important component in the system is the $A \Sigma A^\top $. We have,
\begin{align*}
    \Sigma = \sum_{j = 1}^{n}\Gamma_{j} \Db_{j}^{-1}\Gamma_{j}^\top, \qquad \Gamma_j = \prod_{k = 1}^{j - 1}\Db_{k}^{-1}C_k^\top,  
\end{align*}
which is very computationally inexpensive to compute when viewed as an iterative process. From the product form of $\Gamma_j$, we obtain the simple recurrence
\begin{align}
    \Gamma_{1} &= \Ib, \label{eq:Gamma_init}\\
    \Gamma_{j+1} &= \Gamma_{j} \Db_{j}^{-1} C_{j}^\top, \qquad j\ge 1. \label{eq:Gamma_recur}
\end{align}
which allows $\Sigma$ to be computed efficiently though Algorithm \ref{alg:iterative_sigma}, especially since if $\allC$ have been computed, the derivation of $D_j^{-1}$ is trivial since it equivalent to $c \Omega_{j -1}, c > 0$, which is much more computationally efficient than having to take the inverse of the matrices.
\begin{algorithm}[htb]
\caption{Iterative $\Sigma$ update}
\label{alg:iterative_sigma}
\begin{algorithmic}
\State \textbf{Input:} matrices $\{ \Db_j, C_j\}_{j=1}^n$ (each $\Db_j$ invertible)
\State \textbf{Initialize:} $\; \Gamma_1 \leftarrow \Ib,\; \Sigma \leftarrow \Zb$
\For{$j=1\ldots n$}
    \State $T \leftarrow \Gamma_j\,\Db_j^{-1}\,$ \Comment{compute once and reuse}
    \State $\Sigma \leftarrow \Sigma + T \Gamma_j^\top$ \Comment{since $T\Gamma^\top = \Gamma \Db_j^{-1} \Gamma^\top$}
    \State $\Gamma_{j+1} \leftarrow T C_j^\top$ \Comment{update to $\Gamma_{j+1}$}
\EndFor
\State \textbf{Output: } $\Sigma$
\end{algorithmic}
\end{algorithm}
we can simplify and expand the system to (setting all $\alpha = 1$),
\begin{align}
    \Gamma_j &= \prod_{k = 1}^{j - 1}\Db_{k}^{-1}C_k^\top = \prod_{k = 1}^{j - 1}\sqrt{2} \Db_{k}^{-1} \Db_{k}^{\frac{1}{2}}(\Ib + W_k^\top W_k)^{-\frac{1}{2}}W_k^\top =2^{\frac{j - 1}{2}} \prod_{k = 1}^{j - 1} \Db_{k}^{-\frac{1}{2}}(\Ib + W_k^\top W_k)^{-\frac{1}{2}}W_k^\top,  \nonumber \\
    &=2^{\frac{j - 1}{2}} \prod_{k = 1}^{j - 1} \left( \begin{cases}
        \Ll^2  & \text{k = 1} \nonumber \\
        2  & \mathrm{else}
    \end{cases}\Omega_{k - 1}^{-1} \right)^{-\frac{1}{2}}  (\Ib + W_k^\top W_k)^{-\frac{1}{2}}W_k^\top,  \nonumber  \\
    &=2^{\frac{j - 1}{2}} \prod_{k = 1}^{j - 1}  \begin{cases}
        \Ll^{-1}  & \text{k = 1} \nonumber \\
        2^{-\frac{1}{2}}  & \mathrm{else}
    \end{cases}\Omega_{k - 1}^{\frac{1}{2}} \Omega_k^{-\frac{1}{2}}W_k^\top, \\
    &= \begin{cases}
       1 & \text{$j = 1$} \\
       2^{\frac{j - 1}{2}} 2^{-\frac{j - 2}{2}} \Ll^{-1} & \mathrm{else} 
    \end{cases} \prod_{k = 1}^{j - 1} \Omega_{k - 1}^{\frac{1}{2}} \Omega_k^{-\frac{1}{2}}W_k^\top,  \nonumber \\
    &= \begin{cases}
       1 & \text{$j = 1$} \\
       \frac{\sqrt{2}}{\Ll} & \mathrm{else} 
    \end{cases} \underbrace{\prod_{k = 1}^{j - 1} \Omega_{k - 1}^{\frac{1}{2}} \Omega_k^{-\frac{1}{2}}W_k^\top}_{\Xi_j},  \label{eqn:reduced_gamma}
\end{align}
with,
\begin{align*}
    \Sigma &= \sum_{j = 1}^{n}\Gamma_{j} \Db_{j}^{-1}\Gamma_{j}^\top,  \\
    &= \sum_{j=1}^n \Gamma_{j} \left( \begin{cases}
        \Ll^2  & \text{j = 1}  \\
        2  & \mathrm{else}
    \end{cases}\Omega_{j - 1}^{-1} \right)^{-1} \Gamma_{j}^\top, \\
    &= \sum_{j=1}^n\left( \begin{cases}
        \Ll^{-2}  & \text{j = 1}  \\
        2^{-1}  & \mathrm{else}
    \end{cases} \right) \Gamma_{j} \Omega_{j - 1} \Gamma_{j}^\top, \\
    &= \sum_{j=1}^n\left( \begin{cases}
        \Ll^{-2} \times 1  & \text{j = 1}  \\
        2^{-1} \times \frac{2}{\Ll^2} & \mathrm{else}
    \end{cases} \right) \Xi_{j} \Omega_{j - 1} \Xi_{j}^\top ,\\
    &= \Ll^{-2} \sum_{j=1}^n  \Xi_{j} \Omega_{j - 1} \Xi_{j}^\top.
\end{align*}
The simplification of $A\Sigma A^\top$, helps further reduce the system,
\begin{align*}
    A  \Sigma A^\top =& \Ll^2 W_A \Omega_0^{-\frac{1}{2}} \Sigma  (\Omega_0^{-\frac{1}{2}})^\top W_A^\top,  \\
    =& W_A \Omega_0^{-\frac{1}{2}} \left( \sum_{j=1}^n  \Xi_{j} \Omega_{j - 1} \Xi_{j}^\top \right)  (\Omega_0^{-\frac{1}{2}})^\top W_A^\top,  \\
    =&  W_A W_A^\top  + W_A \Omega_0^{-\frac{1}{2}} \left( \sum_{j=2}^n  \Xi_{j} \Omega_{j - 1} \Xi_{j}^\top \right)  (\Omega_0^{-\frac{1}{2}})^\top W_A^\top. 
\end{align*}
Again we have the computational complexity of the problem being dominated from $\Omega_{k-1}^{\frac{1}{2}}$ and $\Omega_k^{-\frac{1}{2}}$ inside $\Gamma$, however, as from Section \ref{sec:sqaure_root_comp}, we can rewrite $\Omega_k$ by the decomposition,
\begin{align*}
    \Omega_k = R_k^\top R_k,
\end{align*}
which we already need to compute for the computational speedups, thus we have the very efficient computation of $\Omega_{k}^{\frac{1}{2}}$ simply as $R_k^\top$, and we would thus get equivalent singular value bounds,
\begin{lemma} \label{lm:s_r_link}
    Define $S = $
    For every PSD matrix $A$, we define $S = A^{\frac{1}{2}}$ as the unique symmetric PSD square root matrix ($A = SS^\top = S^\top S$) and $R$ the upper triangular Cholesky decomposition $A = R^\top R$. We state that there exists an orthogonal matrix $Q$ such that,
    \begin{align*}
        R^\top = S Q, \quad \text{equivalently} \quad S = R^\top Q^\top,
    \end{align*}
    and 
    \begin{align*}
        S^{-1} = Q R^{-\top}, \quad S^{-1} = Q^\top R^{-\top},
    \end{align*}
    depending on the chosen orientation for $Q$'s definition.
\end{lemma}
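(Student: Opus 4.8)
The idea is that $R^\top$ and $S$ are both full-rank left square roots of $A$, in the sense that $A = R^\top R = S^\top S = SS$, and any two such factorizations of a positive-definite matrix differ on the left by an orthogonal factor (this is the uniqueness part of the polar/$QR$ story). Since the statement already invokes $S^{-1}$ and $R^{-\top}$, I will read the hypothesis as $A \succ \Zb$, so that $S$ is invertible and the Cholesky factor $R$ (taken with strictly positive diagonal) is unique; for a genuinely singular PSD $A$ the Cholesky factor is not unique and $S^{-1}$ is undefined, so the lemma is really a statement about positive-definite $A$, which is exactly the case for the $\Omega_k$ arising in the parameterization.

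First I would set $Q := S^{-1} R^\top$ and check orthogonality. Using that $S$ is symmetric (so $S^{-1}$ is symmetric) and that $S^2 = A = R^\top R$, one computes $Q^\top Q = R S^{-1} S^{-1} R^\top = R (S^2)^{-1} R^\top = R (R^\top R)^{-1} R^\top = \Ib$, and symmetrically $Q Q^\top = S^{-1} R^\top R\, S^{-1} = S^{-1} A\, S^{-1} = S^{-1} S^2 S^{-1} = \Ib$. Hence $Q \in O(n)$.

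The asserted identities then follow by elementary manipulation. Left-multiplying $Q = S^{-1}R^\top$ by $S$ gives $R^\top = S Q$; right-multiplying that by $Q^\top = Q^{-1}$ gives $S = R^\top Q^\top$; and inverting $S = R^\top Q^\top$ gives $S^{-1} = (Q^\top)^{-1}(R^\top)^{-1} = Q R^{-\top}$. If one instead fixes the opposite orientation, defining $\tilde Q := R^{-\top} S$ (equivalently replacing $Q$ by $Q^\top$ throughout), the same substitutions yield $S = Q^\top R^\top$ wait—more precisely $R^\top = S Q^\top$ and $S^{-1} = Q^\top R^{-\top}$; the two conventions are related simply by transposing the orthogonal factor, which is the ambiguity flagged in the statement.

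The only thing that needs care here is the transpose bookkeeping (keeping track of $S = S^\top$ versus $R \neq R^\top$) together with making the invertibility hypothesis explicit; there is no real obstacle, since every step is a direct substitution built on the two facts $A = R^\top R = S^2$ and $S = S^\top \succ \Zb$.
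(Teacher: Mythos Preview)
Your proposal is correct and follows essentially the same approach as the paper: define $Q := S^{-1}R^\top$ and verify $QQ^\top = S^{-1}R^\top R\,S^{-1} = S^{-1}A\,S^{-1} = \Ib$. You are in fact more thorough than the paper's own proof, since you also check $Q^\top Q = \Ib$, make the positive-definiteness assumption explicit (the paper's proof silently assumes invertibility), and actually derive the four displayed identities, which the paper leaves implicit.
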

\begin{proof}
    Because $A = R^\top R = S^2 $, the matrices $R$ and $S$ are two invertible square roots of the same PSD matrix, thus the product 
    \begin{align*}
        Q = S^{-1}R^\top, \quad \text{or}, \quad   
        Q = R S^{-1},
    \end{align*}
    is well defined and thus satisfies,
    \begin{align*}
        Q Q^\top = S^{-1}R^\top R S^{-1} = S^{-1} A S^{-1} = S^{-1} S^2 S^{-1} = \Ib  .
    \end{align*}
\end{proof}
\begin{proposition}
The factor $A \Sigma A^\top$ cannot be rewritten as the computationally optimized version,
\begin{align}
    \bar{\Xi}_j &= \prod_{k = 1}^{j - 1} R_{k - 1}^\top R_k^{-\top}W_k^\top, \nonumber \\
     \bar{A}  \bar{\Sigma} \bar{A}^\top &=  W_A W_A^\top  + W_A R_A^{-1} \left( \sum_{j=2}^n  \Xi_{j} \Omega_{j - 1} \Xi_{j}^\top \right)  R_A^{-\top} W_A^\top. \label{eqn:simplified_sigma}
\end{align}
\end{proposition}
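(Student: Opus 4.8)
The plan is to show that the ``optimized'' right‑hand side of \eqref{eqn:simplified_sigma} — obtained by the formal substitutions $\Omega_k^{1/2}\mapsto R_k^\top$, $\Omega_k^{-1/2}\mapsto R_k^{-\top}$ inside $\Xi_j$ and in the outer normalizer — is not equal to $A\Sigma A^\top$ in general. The engine is Lemma~\ref{lm:s_r_link}: for each $k$ the symmetric PSD root $S_k=\Omega_k^{1/2}$ and the triangular Cholesky factor $R_k$ of $\Omega_k=R_k^\top R_k$ are related by an orthogonal $Q_k$ through $R_k^\top=S_kQ_k$, hence $R_k^{-\top}=Q_k^\top S_k^{-1}$, and $R_k^{-1}R_k^\top$ is not the identity unless $\Omega_k$ is diagonal. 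The point to make explicit is that Lemma~\ref{lm:s_r_link} only equates \emph{singular values} (the factors differ by orthogonal matrices), which is all that is needed for the norm bounds elsewhere; it does \emph{not} identify the full matrix.

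First I would substitute these relations into the closed forms already derived. From \eqref{eqn:reduced_gamma}, $\Xi_j=\prod_{k=1}^{j-1}S_{k-1}S_k^{-1}W_k^\top$, while $\bar\Xi_j=\prod_{k=1}^{j-1}R_{k-1}^\top R_k^{-\top}W_k^\top=\prod_{k=1}^{j-1}S_{k-1}(Q_{k-1}Q_k^\top)S_k^{-1}W_k^\top$, exhibiting $\bar\Xi_j$ as $\Xi_j$ with orthogonal factors $Q_{k-1}Q_k^\top$ wedged between $S_{k-1}$ and $S_k^{-1}W_k^\top$. Because the interleaved weight blocks $W_k^\top$ are generic (not orthogonally invariant) and the $S_k$ do not commute with the $Q_k$, these factors do not telescope away. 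The same computation shows that replacing the outer $\Omega_0^{-1/2}$ by $R_A^{-1}$ inserts a non‑identity correction ($R_A^{-1}S_0=S_0^{-1}Q_0S_0\ne\Ib$), which is then conjugated by the $W_A(\cdot)W_A^\top$ sandwich in \eqref{eqn:simplified_sigma}.

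Then I would reduce to the smallest nontrivial instance to turn this structural observation into a hard inequality. For $n=2$ one has $\Xi_1=\Ib$ and $\Omega_0^{-1/2}\Xi_2=\Omega_1^{-1/2}W_1^\top$, so $A\Sigma A^\top=W_AW_A^\top+W_A\,\Omega_1^{-1/2}W_1^\top\Omega_1W_1\Omega_1^{-1/2}\,W_A^\top$, whereas the right‑hand side of \eqref{eqn:simplified_sigma} is $W_AW_A^\top+W_A\,R_A^{-1}R_0^\top R_1^{-\top}W_1^\top\Omega_1W_1R_1^{-1}R_0R_A^{-\top}\,W_A^\top$ (and the mismatch persists whether one reads the inner sum with $\Xi_j$ or with $\bar\Xi_j$, since the outer $R_A^{-1}$‑versus‑$\Omega_0^{-1/2}$ discrepancy already suffices). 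Choosing the $\alpha$'s and $2\times2$ matrices $W_A,W_1$ so that $\Omega_0,\Omega_1$ are non‑diagonal SPD matrices — whence their Cholesky factors are non‑symmetric, $R_k^\top\ne\Omega_k^{1/2}$ — a direct $2\times2$ computation shows the two matrices differ, so no such rewriting can hold in general.

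The hard part is bookkeeping rather than conceptual: tracking precisely which orthogonal corrections land where inside the nested products and the symmetric $W_A(\cdot)W_A^\top$ sandwich, and checking they genuinely cannot be absorbed. The pitfall to avoid is that in dimension one, or whenever every $\Omega_k$ is diagonal, all corrections collapse to signs $\pm1$ and wash out of the quadratic forms; the counterexample must therefore be at least two‑dimensional with non‑diagonal $\Omega_k$. Finally I would state explicitly why this is consistent with Lemma~\ref{lm:s_r_link}: equality of singular values bounds spectral norms but does not determine $A\Sigma A^\top$ itself, and the present proposition records exactly that gap.
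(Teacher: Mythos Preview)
Your approach is essentially the paper's: both invoke Lemma~\ref{lm:s_r_link} to relate $R_k$ and $S_k=\Omega_k^{1/2}$ via orthogonal $Q_k$, then observe that the interleaved orthogonal corrections (your $Q_{k-1}Q_k^\top$, the paper's $\bar Q_j=Q_{j-1}^\top Q_j$) in the product generically fail to collapse to the identity. Your plan to exhibit an explicit $2\times 2$, $n=2$ counterexample with non-diagonal $\Omega_k$ is more rigorous than the paper's informal ``improbable'' conclusion, but the underlying idea is identical.
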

\begin{proof}
    To verify that this reparameterization of the system is indeed correct, we need to ensure that the norm of the matrix is maintained,
    such that $\normt{A \Sigma A^\top } = \|\bar{A} \bar{\Sigma} \bar{A}^\top \|$ and is equivalent between the two formulation.
    We start with the symmetric formulation where $S_j^2 = \Omega_j$.
    \begin{align*}
\Xi_j &= \prod_{k = 1}^{j - 1} S_{k - 1} S_k^{-1}W_k^\top,  \\
     A \Sigma A^\top &=  W_A W_A^\top  + W_A S_A^{-1} \left( \sum_{j=2}^n  \Xi_{j} \Omega_{j - 1} \Xi_{j}^\top \right)  S_A^{-\top} W_A^\top.         
    \end{align*}
    now we replace each $S$ with $S_j = R_j^\top Q_j^\top$ and $S_j^{-1} = Q_j^\top R_j^{-\top}$ , we get that,
    \begin{align*}
        S_{k - 1} S_j^{-1} &= (R_{j-1}^\top Q_{j-1}^\top)(  Q_j R_j^{-\top}),  \\
        &=R_{j-1}^\top ( Q_{j-1}^\top Q_j) R_j^{-\top},
    \end{align*}
    define the orthogonal factor,
    \begin{align*}
        \bar{Q}_j &= Q_{j-1}^\top Q_j,
    \end{align*}
    as such, it is improbable to have the decomposition be directly equal to each other such that the form proposed in Equation \eqref{eqn:simplified_sigma} is valid ($\bar Q_j = \Ib$ in that formulation). This would imply that $Q_{j-1}=   Q_j$, which by induction thus imposes the constraint that $Q_1 = \cdots = Q_n$. This would imply that $S_1 = \cdots = S_n$, which is not possible unless all $\Omega_j$ equal each other. This, by sheer randomness, is improbable and, if strictly imposed, would cause a major reduction in expressiveness.
    %
\end{proof}
\section{Computation Extensions}

In this Section, we first describe how we can efficiently compute the square root of a matrix using Cholesky decomposition rather than a costly full eigenvalue decomposition. Following this we also briefly expand on how to implement the work to convolutional neural networks.

\subsection{Square Root Computation} \label{sec:sqaure_root_comp}

When tasked with deriving the square root of a matrix, the first thought is for a matrix to satisfy the following conditions,
\begin{theorem}
    If $A$ is real symmetric positive semidefinite, then the square root, $A^{\frac{1}{2}}$, is used to denote the unique matrix $B$, that is positive semidefinite, and such that $BB= BB^T = A$ \citep{Higham1986NewtonsRoot}. There is precisely one square root $B$ that is both positive semidefinite and symmetric \cite[Theorem 7.2.6]{Horn2012MatrixAnalysis}.
\end{theorem}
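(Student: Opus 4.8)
The plan is to prove the two assertions — existence of a symmetric positive semidefinite square root and its uniqueness — separately, with the spectral theorem for real symmetric matrices as the workhorse. For existence I would diagonalize $A = Q\Lambda Q^\top$ with $Q$ orthogonal and $\Lambda = \diag(\lambda_1,\dots,\lambda_n)$; the hypothesis that $A$ is positive semidefinite is precisely the statement $\lambda_i \ge 0$, so $\sqrt{\lambda_i}$ is well defined and real. Setting $B := Q\,\diag(\sqrt{\lambda_1},\dots,\sqrt{\lambda_n})\,Q^\top$, a direct computation gives $B^2 = Q\Lambda Q^\top = A$, $B$ is symmetric by construction, and its eigenvalues $\sqrt{\lambda_i}$ are nonnegative so $B \succeq \Zb$. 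Since $B$ is symmetric we also have $BB^\top = BB = A$, which simultaneously settles the existence claim and shows the two product conditions in the statement coincide.

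For uniqueness, suppose $B$ and $C$ are both symmetric positive semidefinite with $B^2 = C^2 = A$. The step I would establish first — and the crux of the argument — is that any such $B$ is a polynomial in $A$. Writing $B = \sum_{i} \mu_i P_i$ in its spectral decomposition with distinct eigenvalues $\mu_i \ge 0$ and orthogonal spectral projections $P_i$, we get $A = B^2 = \sum_i \mu_i^2 P_i$; because $t \mapsto t^2$ is injective on $[0,\infty)$ the values $\mu_i^2$ remain distinct, so this is the spectral decomposition of $A$ and the $P_i$ are its spectral projections. Lagrange interpolation then produces a polynomial $p$ with $p(\mu_i^2) = \mu_i$ for every $i$, whence $p(A) = B$. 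Now $C$ commutes with $C^2 = A$, hence with $p(A) = B$; two commuting real symmetric matrices are simultaneously orthogonally diagonalizable, so $B = U D_B U^\top$ and $C = U D_C U^\top$ for a common orthogonal $U$ and diagonal $D_B, D_C$ with nonnegative entries. Then $D_B^2 = U^\top A U = D_C^2$ entrywise, and nonnegativity forces $D_B = D_C$, hence $B = C$.

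The main obstacle is the uniqueness direction, and more precisely the ``polynomial in $A$'' lemma (equivalently, the fact that any two symmetric positive semidefinite square roots of $A$ commute). Existence is essentially bookkeeping layered on top of the spectral theorem, but uniqueness genuinely uses the order structure of $[0,\infty)$: one cannot simply say ``take square roots of the eigenvalues'', because a priori $C$ need not be diagonalized in the same orthonormal basis as $A$, and it is exactly the commuting relation $BC = CB$ that forces the two bases to align. Once uniqueness is in hand the notation $A^{1/2}$ is unambiguous, consistent with the cited \cite[Theorem 7.2.6]{Horn2012MatrixAnalysis}, and this is the object implicitly used throughout the parameterizations of $A$, $\allC$, and $B$ earlier in the paper.
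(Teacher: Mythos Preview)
Your proof is correct. The paper, however, does not actually prove this theorem: it states the result with external citations and then immediately writes the eigendecomposition $A = VQV^\top$ together with $B = VQ^{1/2}V^\top$, solely to discuss the computational cost of forming the square root before replacing it with a Cholesky factor. Your existence argument via the spectral theorem is exactly the construction the paper sketches in that passage; your uniqueness argument --- showing that any symmetric positive semidefinite square root is a polynomial in $A$, hence commutes with every other such root, and then simultaneously diagonalizing --- is the standard textbook route and goes well beyond anything the paper supplies. In short, you have proved what the paper only quotes.
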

This unique matrix is called the positive square root. To compute this is extremely computationally expensive, as it requires, naively, the use of an eigenvalue decomposition, such that,
\begin{align*}
    A = VQV^\top,
\end{align*}
where $A \in \R^{n\times n}$ is a symmetric positive semidefinite matrix, $Q$ is the diagonal eigenvalue matrix and $V$ the orthonormal eigenvectors matrix. The unique symmetric positive square root is thus, 
\begin{align*}
    B = VQ^{\frac{1}{2}}V^\top,
\end{align*}
however, to compute the eigenvalue problem, computation is an $\mathcal{O}(\frac{8}{3} n^3) + \mathcal{O}(n^2)$ algorithm \citep{Pan1999ComplexityEigenproblem}.

However, we do not necessarily need the symmetry constraint to be enforced in this situation. Instead we look at the Cholesky decomposition which is instead $\mathcal{O}(\frac{1}{3}n^3)+ \mathcal{O}(n^2)$ instead, which is much faster \citep{CholeskyLibrary}.

To validate this as a valid optimization, we need to ensure that the constraints are all still satisfied by this change. As such, we need to find parameterizations of $A, B, \allC$ that maintain their respective constraints.
\begin{lemma} \label{lm:a_simplification}
    We can define $A$ to be less computationally expensive by reformulating $A$ as,
    \begin{align*}
         A = \Ll W_A {R_A^{-1}},
    \end{align*}
    where $R_A$ is the upper Cholesky decomposition of $R_A^\top R_A = \alpha_A \Ib + W_A^\top W_A$, where $R_A$ is the upper triangular matrix.
\end{lemma}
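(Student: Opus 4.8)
The plan is to show that the two representations of $A$—namely the symmetric-square-root form \eqref{eqn:a_formulation}, $A=\Ll W_A(\alpha_A\Ib+W_A^\top W_A)^{-1/2}$, and the Cholesky form $A=\Ll W_A R_A^{-1}$ with $R_A^\top R_A=\alpha_A\Ib+W_A^\top W_A$—induce the same value of $A^\top A$, so that Lemma~\ref{lm:da_block} (hence the constraint $\normt{A}\le\Ll$ from Lemma~\ref{lm:a_constraint}) is preserved. Concretely, I would first invoke Lemma~\ref{lm:s_r_link} with the PSD matrix $\alpha_A\Ib+W_A^\top W_A$: writing $S=(\alpha_A\Ib+W_A^\top W_A)^{1/2}$ and $R_A$ its upper Cholesky factor, there is an orthogonal $Q$ with $R_A^\top=SQ$, equivalently $R_A^{-1}=Q^\top S^{-1}$.

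With that in hand the computation is short: for the Cholesky form,
\begin{align*}
A^\top A &= \Ll^2 R_A^{-\top} W_A^\top W_A R_A^{-1} = \Ll^2 (Q^\top S^{-1})^\top W_A^\top W_A (Q^\top S^{-1}) \\
&= \Ll^2 S^{-1} Q W_A^\top W_A Q^\top S^{-1}.
\end{align*}
This is not literally $\Ll^2 S^{-1} W_A^\top W_A S^{-1}$, so a naive claim of equality of $A$ itself fails; the point is rather that what enters every downstream formula is the spectral data. I would therefore argue at the level of singular values: $\normt{A}^2=\Ll^2\lambda_{\max}\!\big(R_A^{-\top}W_A^\top W_A R_A^{-1}\big)$, and since $R_A^{-\top}W_A^\top W_A R_A^{-1}=R_A^{-\top}(R_A^\top R_A-\alpha_A\Ib)R_A^{-1}=\Ib-\alpha_A R_A^{-\top}R_A^{-1}=\Ib-\alpha_A(R_A^\top R_A)^{-1}=\Ib-\alpha_A(\alpha_A\Ib+W_A^\top W_A)^{-1}$, which is exactly the same matrix obtained from the symmetric parameterization (this is essentially the $D_1$ simplification already proved). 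Hence $A^\top A$, and in particular $\Db_1=\Ll^2\Ib-A^\top A=\Ll^2\alpha_A(\alpha_A\Ib+W_A^\top W_A)^{-1}$, is identical under both forms, and $\normt{A}=\Ll\sqrt{\mu_{\max}/(\alpha_A+\mu_{\max})}\le\Ll$ with $\mu_{\max}=\normt{W_A}^2$, as in Lemma~\ref{lm:weight_parameterization}. Since $R_A$ is the Cholesky factor of a symmetric positive-definite matrix (positive definite because $\alpha_A>0$), it is invertible and upper triangular, so the reformulation is well-defined.

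The main obstacle—and the place I would be most careful—is that $A$ is \emph{not} invariant under the switch (it changes by the orthogonal factor $Q^\top$ on the left: $A_{\mathrm{chol}} = A_{\mathrm{sym}}Q$ up to the identification above, or more precisely $W_AR_A^{-1}=W_AQ^\top S^{-1}$), so one must check that only $A^\top A$ (equivalently $\normt{A}$ and the quadratic form $A^\top(\cdot)A$) matters for the LMI certification, not $A$ as a standalone map. This is true because $A$ enters the explicit LMI \eqref{eqn:explicit_lmi} only through $A^\top A$, $A^\top B$, and $B^\top A$, and—after the $LDL^\top$ reduction—through $\Db_1$ and through $\Sigma$ in $\Db_{n+1}$ via $A\Sigma A^\top$; a parallel invariance argument (again citing Lemma~\ref{lm:s_r_link}) handles the latter. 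I would close by noting the complexity payoff: computing $R_A$ via Cholesky is $\mathcal{O}(\tfrac13 n^3)$ versus $\mathcal{O}(\tfrac83 n^3)$ for the eigendecomposition needed to form $(\alpha_A\Ib+W_A^\top W_A)^{-1/2}$, so the reformulation gives the same Lipschitz guarantee at a fraction of the cost.
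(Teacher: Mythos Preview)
Your argument contains a concrete algebra slip that undermines the central claim. In the chain
\[
R_A^{-\top}W_A^\top W_A R_A^{-1}=\Ib-\alpha_A R_A^{-\top}R_A^{-1}=\Ib-\alpha_A(R_A^\top R_A)^{-1},
\]
the last equality is wrong: $R_A^{-\top}R_A^{-1}=(R_A R_A^\top)^{-1}$, not $(R_A^\top R_A)^{-1}$. Since $R_A$ is upper triangular and not symmetric, $R_A R_A^\top\ne R_A^\top R_A$ in general, so your assertion that $A^\top A$---and hence $\Db_1$---is \emph{identical} under the two parameterizations is false. The two versions of $A^\top A$ are orthogonally similar (so they share eigenvalues), but they are not equal as matrices, and the explicit formula $\Db_1=\Ll^2\alpha_A(\alpha_A\Ib+W_A^\top W_A)^{-1}$ does not carry over to the Cholesky form as written.

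The paper sidesteps this entirely by computing $AA^\top$ rather than $A^\top A$: there the factors appear in the order $R_A^{-1}R_A^{-\top}=(R_A^\top R_A)^{-1}=(\alpha_A\Ib+W_A^\top W_A)^{-1}$, and one reads off $\normt{A}^2=\lambda_{\max}(AA^\top)=\Ll^2\lambda_{\max}\!\big(W_A(\alpha_A\Ib+W_A^\top W_A)^{-1}W_A^\top\big)\le\Ll^2$ in one line. That is all the lemma asks for. Your detour through Lemma~\ref{lm:s_r_link}, the orthogonal $Q$, and the discussion of which LMI entries depend only on $A^\top A$ is unnecessary here; those expressiveness considerations are handled separately in Lemma~\ref{lm:cholesky_expressiveness}. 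Your instinct that the spectral bound survives the switch is correct, but the clean route is via $AA^\top$, not $A^\top A$.
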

\begin{proof}
    We start by finding the Cholesky decomposition of $(\Ib + W_A^\top W_A)^{-\frac{1}{2}}$, which must exist since the matrix is symmetric and positive definite, which implies that a Cholesky decomposition exists. It then follows that,
    \begin{align*}
       \alpha_A \Ib + W_A^\top W_A &= R_A^\top R_A, \\
       (\alpha_A  \Ib + W_A^\top W_A)^{-1} &= (R_A^\top R_A)^{-1} = R_A^{-1} R_A^{-\top}.
    \end{align*}
    Note that $R_A^{-1}$ is a Cholesky factor of the inverse, not the symmetric inverse square root (i.e. $R_A^{-1} \ne (\Ib + W_A^\top W_A)^{-\frac{1}{2}}$).
    We need to verify that this formulation satisfies the spectral norm constraint imposed on the system,
    \begin{align*}
        \Ll^2 &\ge \normt{A}^2= \lambda_{\max}(A^\top A) = \lambda_{\max}(A A^\top), \\
        &= \Ll^2 \lambda_{\max}( W_A {R_A^{-1}}R_A^{-\top}  W_A^\top ), \\
        &=  \Ll^2 \lambda_{\max}( W_A (\alpha_A  \Ib + W_A^\top W_A)^{-1} W_A^\top  ),
    \end{align*}
    which means that the spectral norm condition is still satisfied.
\end{proof}
Similarly, we have that,
\begin{lemma}
     We can define $\allC$ to be less computationally expensive by reformulating  $\allC$ as,
    \begin{align*}
         C_j = \sqrt{2} W_j R_j^{-1}  \begin{cases}
             \Ll \sqrt{\alpha_A} R_{A}^{-\top}, & j = 1 \\
             \sqrt{2 \alpha_{j- 1}}  R_{j-1}^{-\top}, & \mathrm{else}
         \end{cases},
    \end{align*}
    where $R_j$ is the upper triangular Cholesky decomposition of $R_j^\top R_j = \Omega_j$.
\end{lemma}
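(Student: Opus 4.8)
The plan is to repeat, for every $C_j$, the manipulation that Lemma~\ref{lm:a_simplification} carried out for $A$. Beginning with the parameterization \eqref{eqn:c_formulation}, $C_j = \sqrt 2\, W_j(\alpha_j\Ib + W_j^\top W_j)^{-1/2}\Db_j^{1/2}$, I would substitute the closed forms of the diagonal blocks obtained in the diagonal block simplifications and replace each symmetric inverse square root by the matching upper-triangular Cholesky factor $R_j$ (so $R_j^\top R_j = \Omega_j$). Exactly as for $A$, the reformulated $C_j$ is not literally equal to the one in \eqref{eqn:c_formulation}: the two coincide only up to multiplication by an orthogonal matrix, by Lemma~\ref{lm:s_r_link}. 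Hence the goal is not a matrix identity but to show that the reformulated $C_j$ still satisfies the feasibility bound $\normt{C_j\Db_j^{-1/2}}\le\sqrt 2$ of Lemmas~\ref{lm:c_constraint} and~\ref{lm:c_b_constraint}, which is all that is needed to keep every $\Db_{j+1}\succeq\Zb$.

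For $j=1$ I would use $\Db_1 = \Ll^2\alpha_A(\alpha_A\Ib + W_A^\top W_A)^{-1}$, whose triangular square-root factor is $\Ll\sqrt{\alpha_A}\,R_A^{-\top}$ with $R_A^\top R_A = \alpha_A\Ib + W_A^\top W_A = \Omega_0$ (the same $R_A$ already used in Lemma~\ref{lm:a_simplification}); replacing the middle factor $(\alpha_1\Ib + W_1^\top W_1)^{-1/2}$ by $R_1^{-1}$ with $R_1^\top R_1 = \Omega_1$ then reproduces the stated $j=1$ expression. For $j\in\{2,\dots,n\}$ I would use Lemma~\ref{lm:d_j_simpl}, $\Db_j = 2\alpha_{j-1}\Omega_{j-1}^{-1}$, whose triangular square-root factor is $\sqrt{2\alpha_{j-1}}\,R_{j-1}^{-\top}$ with $R_{j-1}^\top R_{j-1} = \Omega_{j-1}$; substituting this together with $R_j^{-1}$ for the middle factor and collecting the constants ($\sqrt 2\cdot\sqrt{2\alpha_{j-1}} = 2\sqrt{\alpha_{j-1}}$) gives the claimed formula. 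Along the way I would keep the matrix shapes consistent — the middle factor involves $W_j^\top W_j$ whereas $\Omega_j = \alpha_j\Ib + W_jW_j^\top$ — using the push-through identity $W_j(\alpha_j\Ib + W_j^\top W_j)^{-1/2} = (\alpha_j\Ib + W_jW_j^\top)^{-1/2}W_j$, which is immediate from the SVD of $W_j$.

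The feasibility check then mirrors that in Lemma~\ref{lm:a_simplification}: for the reformulated $C_j$ the $\Db_j^{1/2}$ and $\Db_j^{-1/2}$ factors cancel, leaving $\normt{C_j\Db_j^{-1/2}}^2 = 2\,\lambda_{\max}\!\big(W_j R_j^{-1}R_j^{-\top}W_j^\top\big) = 2\,\lambda_{\max}\!\big(W_j(\alpha_j\Ib + W_j^\top W_j)^{-1}W_j^\top\big)$, and by the eigenvalue computation already done in Lemma~\ref{lm:weight_parameterization} — the nonzero eigenvalues are $\mu/(\alpha_j+\mu)<1$ for $\mu$ an eigenvalue of $W_j^\top W_j$ — this is strictly below $2$, so the bound of Lemmas~\ref{lm:c_constraint}/\ref{lm:c_b_constraint} holds. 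In particular this also covers $C_n$, since Lemma~\ref{lm:c_b_constraint} imposes the same spectral bound.

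I expect the main obstacle to be the bookkeeping forced by the fact that $R_j$ is triangular but not symmetric, so $R_j^{-1}$ and $R_j^{-\top}$ are not the symmetric square root $\Omega_j^{-1/2}$ but differ from it by the orthogonal factor of Lemma~\ref{lm:s_r_link}: one must track that wherever this substitution is made, the quantity that finally matters is a spectral norm, which is invariant under that orthogonal factor ($\normt{XQ} = \normt{X}$), and simultaneously keep straight which side each $R$ acts on and whether $W_j^\top W_j$ or $W_jW_j^\top$ occurs, so that the dimensions and the stated formula match.
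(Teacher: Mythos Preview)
Your proposal is correct and takes essentially the same approach as the paper: both verify that the Cholesky-based $C_j$ satisfies the feasibility constraint $C_j\Db_j^{-1}C_j^\top \preceq 2\Ib$ by substituting the proposed form and simplifying until only $W_j(R_j^\top R_j)^{-1}W_j^\top$ remains. The paper's version is marginally more direct because it works with the full inverse $\Db_j^{-1}$ rather than $\Db_j^{-1/2}$, so the $R_{j-1}^{-\top}$ and $R_{j-1}^\top$ factors cancel exactly and the orthogonal-factor bookkeeping via Lemma~\ref{lm:s_r_link} that you anticipate is never needed.
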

\begin{proof}
As a note, we need to satisfy the condition that,
\begin{align*}
    2\Ib &\succeq C_{j}\Db_{j}^{-1}C_{j}^\top,   
\end{align*}
from Equations \eqref{eqn:cj_constraint} and \eqref{eqn:cn_constraint}, where we thus decompose using Cholesky decomposition,
\begin{align*}
    D_j &= \begin{cases}
        \Ll^2 \alpha_A (\Ib + W_A^\top W_A)^{-1} = \Ll^2 \alpha_A R_A^{-1} R_A^{-\top},  & j = 1 \\
        2 \alpha_{j - 1} (\Ib + W_{j - 1} W_{j - 1}^\top)^{-1} =  2 \alpha_{j - 1} R_{j-1}^{-1} R_{j-1}^{-\top}, & \mathrm{else}
    \end{cases}.\nonumber
\end{align*}
For the case when $j = 1$, we have that,
\begin{align*}
     C_{1}\Db_{1}^{-1}C_{1}^\top &= 2 \Ll^2 \alpha_A W_1 R_1^{-1} R_A^{-\top} \left(\Ll^2 \alpha_A R_A^{-1} R_A^{-\top} \right)^{-1}R_{A}^{-1} R_1^{-\top}  W_1^\top,  \\
      &=2 W_1 R_1^{-1} (R_{A}^{-\top}R_{A}^{\top})( R_{A}R_{A}^{-1}) R_1^{-\top}  W_1^\top,  \\
      &= 2W_1 (R_1^{\top} R_1)^{-1}  W_1^\top , \\
      &= 2 W_1 (\alpha_1 \Ib + W_1 W_1^T)^{-1}  W_1^\top \preceq 2 \Ib.
      \end{align*}
Equivalently for the case when $j> 1$, we have that,
\begin{align*}
     C_{j}\Db_{j}^{-1}C_{j}^\top &= 4\alpha_{j-1} W_j R_j^{-1} R_{j-1}^{-\top}  \left(2 \alpha_{j-1} R_{j-1}^{-1} R_{j-1}^{-\top} \right)^{-1}R_{j-1}^{-1} R_j^{-\top}  W_j^\top,  \\
      &= 2 W_j R_j^{-1} (R_{j-1}^{-\top}R_{j-1}^{\top})( R_{j-1}R_{j-1}^{-1}) R_j^{-\top}  W_j^\top, 
      \end{align*}
following the same steps as for case $j = 1$, we have the same result.
\end{proof}
Finally, we have the simplification for $B$ as follows,
\begin{lemma}
     We can define $B$ to be less computationally expensive by reformulating  $B$ as,
    \begin{align*}
         B = c R_{\Sigma}^{-1}  W_B R_B^{-1},
    \end{align*}
    where $R_\Sigma$ and $R_B$ are the upper Cholesky decompositions of $\Ib + A \Sigma A^T$ and $\alpha_B\Ib + W_B^\top W_B$ respectively, where $R$ is an upper triangular matrix and $c =  \sqrt{\left\|2\alpha_n \left(\alpha_n \Ib + W_nW_n^\top \right)^{-1} \right\|_2}$.
\end{lemma}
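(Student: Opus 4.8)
The plan is to build the cheaper parameterization by stripping the three matrix square-root / inverse-square-root factors off the closed form \eqref{eqn:b_formulation} one at a time, replacing each by a Cholesky surrogate, and then verifying that the resulting $B$ still certifies $\Db_{n+1}\succeq\Zb$, i.e. that $B^\top(\Ib + A\Sigma A^\top)B \preceq 2\Ib - C_n\Db_n^{-1}C_n^\top$ continues to hold. The first two factors are handled exactly as in Lemma \ref{lm:a_simplification}: since $\Sigma=\sum_{j}\Gamma_j\Db_j^{-1}\Gamma_j^\top\succeq\Zb$ was established in the proof of Lemma \ref{lm:c_b_constraint}, the matrix $\Ib+A\Sigma A^\top$ is symmetric positive definite and admits a Cholesky factorization $R_\Sigma^\top R_\Sigma$, and likewise $\alpha_B\Ib+W_B^\top W_B=R_B^\top R_B$. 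Replacing $(\Ib+A\Sigma A^\top)^{-1/2}$ by $R_\Sigma^{-1}$ and $(\alpha_B\Ib+W_B^\top W_B)^{-1/2}$ by $R_B^{-1}$ only alters these factors by an orthogonal matrix on one side (Lemma \ref{lm:s_r_link}), which is harmless here because they enter the certificate only through the two-sided products $R_\Sigma^{-1}R_\Sigma^{-\top}=(\Ib+A\Sigma A^\top)^{-1}$ and $R_B^{-\top}W_B^\top W_B R_B^{-1}$.

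Next I would dispose of the remaining factor $(2\Ib-C_n\Db_n^{-1}C_n^\top)^{1/2}$ using the block simplifications already in hand. Substituting the simplified $\Db_n$ from Lemma \ref{lm:d_j_simpl} and the parameterization \eqref{eqn:c_formulation} of $C_n$, the product $C_n\Db_n^{-1}C_n^\top$ telescopes to $2W_n(\alpha_n\Ib+W_n^\top W_n)^{-1}W_n^\top$, and Corollary \ref{lm:reduced_woodbury} (with $M=W_n$) then gives the clean closed form $2\Ib-C_n\Db_n^{-1}C_n^\top = 2\alpha_n(\alpha_n\Ib+W_nW_n^\top)^{-1}$, whose operator norm is precisely $c^2$. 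Upper bounding this factor by the scalar $c\Ib$ then produces the claimed $B = c\,R_\Sigma^{-1}W_B R_B^{-1}$, which at this point is manifestly cheap to evaluate once $R_\Sigma$ and $R_B$ are available (and $R_\Sigma$ is already needed for the $\Sigma$ recursion through $\Omega_k^{1/2}=R_k^\top$).

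Finally I would close the loop by the direct computation $B^\top(\Ib+A\Sigma A^\top)B = c^2 R_B^{-\top}W_B^\top R_\Sigma^{-\top}(R_\Sigma^\top R_\Sigma)R_\Sigma^{-1}W_B R_B^{-1} = c^2\bigl(\Ib-\alpha_B(\alpha_B\Ib+W_B^\top W_B)^{-1}\bigr)\preceq c^2\Ib$, in the spirit of Lemma \ref{lm:weight_parameterization}, and then reconcile this with the feasibility requirement $B^\top(\Ib+A\Sigma A^\top)B\preceq 2\Ib-C_n\Db_n^{-1}C_n^\top$ using the identity $c^2=\|2\Ib-C_n\Db_n^{-1}C_n^\top\|_2$ together with the structure of $\Db_{n+1}$.

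I expect this last reconciliation to be the main obstacle. Peeling off the two normalizers is genuinely norm-preserving (up to an orthogonal factor), so nothing is lost there; but collapsing the matrix factor $(2\Ib-C_n\Db_n^{-1}C_n^\top)^{1/2}$ to the scalar $c\Ib$ is the one place where the substitution is a relaxation rather than an equality, so the estimate $B^\top(\Ib+A\Sigma A^\top)B\preceq c^2\Ib$ has to be related back to the matrix bound $2\alpha_n(\alpha_n\Ib+W_nW_n^\top)^{-1}$ carefully. This is where the argument must be made precise, and it is the step most likely to require either an additional hypothesis on $W_n$ (e.g. a shape/rank condition forcing $c^2=\lambda_{\min}(2\Ib-C_n\Db_n^{-1}C_n^\top)$) or a slightly weaker phrasing of the conclusion; I would isolate exactly which condition makes $c^2\Ib\preceq 2\Ib-C_n\Db_n^{-1}C_n^\top$ and record it as part of the lemma.
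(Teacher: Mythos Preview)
Your approach is essentially the paper's: substitute Cholesky factors $R_\Sigma,R_B$ for the two inverse square roots, reduce $B^\top(\Ib+A\Sigma A^\top)B$ to $c^2 R_B^{-\top}W_B^\top W_B R_B^{-1}$, bound this by $c^2\Ib$, and then try to connect $c^2\Ib$ with $2\Ib-C_n\Db_n^{-1}C_n^\top$.

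The gap you flag in the last paragraph is real, and the paper's own proof does not close it either. After computing $B^\top(\Ib+A\Sigma A^\top)B = c^2 R_B^{-\top}W_B^\top W_B R_B^{-1}\preceq c^2\Ib$, the paper records that ``by construction $c^2\Ib\succeq 2\Ib-C_n\Db_n^{-1}C_n^\top$'' and then verifies $R_B^{-\top}W_B^\top W_B R_B^{-1}\preceq\Ib$. But, exactly as you point out, $c^2=\|2\Ib-C_n\Db_n^{-1}C_n^\top\|_2$ is the \emph{largest} eigenvalue, so the inequality $c^2\Ib\succeq 2\Ib-C_n\Db_n^{-1}C_n^\top$ runs the wrong way to conclude $B^\top(\Ib+A\Sigma A^\top)B\preceq 2\Ib-C_n\Db_n^{-1}C_n^\top$. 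Your diagnosis that one actually needs $c^2\le\lambda_{\min}\bigl(2\Ib-C_n\Db_n^{-1}C_n^\top\bigr)=2\alpha_n/(\alpha_n+\|W_n\|_2^2)$ is correct; with $c$ defined via the spectral norm the certificate $\Db_{n+1}\succeq\Zb$ is not established in general.

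So you have reproduced the paper's argument and, in addition, correctly identified its weak point. The paper immediately follows this lemma with an alternative parameterization $B=\sqrt{2\alpha_n}\,R_\Sigma^{-1}W_B R_B^{-1}R_C^{-\top}$ (with $R_C^\top R_C=\Omega_n$) that keeps the full matrix factor instead of collapsing to a scalar; that version avoids the issue you raise, and is the one to use if you want an unconditional statement.
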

\begin{proof}
From Equation \eqref{eqn:b_constraint} we need to satisfy the condition that,
\begin{align*}
     B^\top\left(\Ib + A \Sigma A^\top \right) B  \preceq 2\Ib - C_{n}\Db_{n}^{-1}C_{n}^\top ,  
\end{align*}
where we thus decompose using Cholesky decomposition,
\begin{align*}
     2\Ib - C_{n}\Db_{n}^{-1}C_{n}^\top &\succeq B^\top\left(\Ib + A \Sigma A^\top \right) B , \\
     &\succeq c^2 ( R_B^{-\top }   W_B^\top R_{\Sigma}^{-\top}) (R_{\Sigma}^\top R_\Sigma) (R_{\Sigma}^{-1}  W_B R_B^{-1}) , \\
     &\succeq c^2 R_B^{-\top }   W_B^\top (R_{\Sigma}^{-\top} R_{\Sigma}^\top ) ( R_\Sigma R_{\Sigma}^{-1})  W_B R_B^{-1},  \\
     &\succeq c^2 R_B^{-\top }   W_B^\top W_B R_B^{-1} .
\end{align*}
By construction we know that $c^2 \Ib \succeq 2\Ib - C_{n}\Db_{n}^{-1}C_{n}^\top$ as such,
\begin{align*}
    R_B^{-\top }   W_B^\top W_B R_B^{-1} &\preceq \Ib,   \\
     W_B^\top W_B &\preceq R_B^\top R_B,  \\
     W_B^\top W_B &\preceq \alpha_B\Ib + W_B^\top W_B.
\end{align*}
\end{proof}
as well,
\begin{lemma}
     We can define $B$ to be less computationally expensive by reformulating  $B$ as,
    \begin{align*}
         B = \sqrt{2 \alpha_n} R_{\Sigma}^{-1}  W_B R_B^{-1} R_C^{-\top},
    \end{align*}
    where $R_\Sigma$, $R_B$ and $R_C$ are the upper Cholesky decompositions of $\Ib + A \Sigma A^T$, $\alpha_B\Ib + W_B^\top W_B$ and $\Omega_n$ respectively, where $R$ is an upper triangular matrix.
\end{lemma}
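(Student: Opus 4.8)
The plan is to show that this new form of $B$ still certifies the single PSD inequality \eqref{eqn:b_constraint} behind Lemma~\ref{lm:c_b_constraint}, i.e. $B^\top\bigl(\Ib + A\Sigma A^\top\bigr)B \preceq 2\Ib - C_n\Db_n^{-1}C_n^\top$, but with the matrix square root $\bigl(2\Ib - C_n\Db_n^{-1}C_n^\top\bigr)^{1/2}$ appearing in \eqref{eqn:b_formulation} replaced by the triangular factor $\sqrt{2\alpha_n}\,R_C^{-\top}$ --- exactly paralleling how $A$ and $\allC$ were rewritten in the preceding Cholesky lemmas. First I would record the three upper-triangular Cholesky factorizations $R_\Sigma^\top R_\Sigma = \Ib + A\Sigma A^\top$, $R_B^\top R_B = \alpha_B\Ib + W_B^\top W_B$, and $R_C^\top R_C = \Omega_n$; each exists and is invertible because the right-hand sides are symmetric positive definite ($\Sigma\succeq\Zb$ from Lemma~\ref{lm:c_b_constraint}, and $\alpha_B,\alpha_n>0$).

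Then I would substitute $B = \sqrt{2\alpha_n}\,R_\Sigma^{-1}W_BR_B^{-1}R_C^{-\top}$ into the left-hand side. The $R_\Sigma$ factors telescope, $R_\Sigma^{-\top}(R_\Sigma^\top R_\Sigma)R_\Sigma^{-1} = \Ib$, exactly as in the proof of the scalar-$c$ version of this lemma just above, leaving
\begin{align*}
B^\top\bigl(\Ib + A\Sigma A^\top\bigr)B = 2\alpha_n\,R_C^{-1}\bigl(R_B^{-\top}W_B^\top W_BR_B^{-1}\bigr)R_C^{-\top}.
\end{align*}
Since $\alpha_B>0$ gives $W_B^\top W_B\preceq \alpha_B\Ib + W_B^\top W_B = R_B^\top R_B$, conjugating by $R_B^{-1}$ yields $R_B^{-\top}W_B^\top W_BR_B^{-1}\preceq\Ib$, and then conjugating by $\sqrt{2\alpha_n}\,R_C^{-\top}$ preserves $\preceq$, so $B^\top\bigl(\Ib + A\Sigma A^\top\bigr)B\preceq 2\alpha_n\,R_C^{-1}R_C^{-\top} = 2\alpha_n\Omega_n^{-1}$.

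It then remains to identify $2\Ib - C_n\Db_n^{-1}C_n^\top$ with $2\alpha_n\Omega_n^{-1}$. Using the closed form of $C_n\Db_n^{-1}C_n^\top$ obtained in the $\allC$ Cholesky reformulation together with the Woodbury/push-through identity of Corollary~\ref{lm:reduced_woodbury} (applied with $M=W_n$, $\alpha=\alpha_n$) --- the very same manipulation that turned $\Db_j$ into $2\alpha_{j-1}\Omega_{j-1}^{-1}$ in Lemma~\ref{lm:d_j_simpl} --- one gets $2\Ib - C_n\Db_n^{-1}C_n^\top = 2\alpha_n\Omega_n^{-1} = 2\alpha_n R_C^{-1}R_C^{-\top}$, which closes the inequality. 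In other words $\sqrt{2\alpha_n}\,R_C^{-\top}$ is a (non-symmetric) square-root factor of $2\Ib - C_n\Db_n^{-1}C_n^\top$, so every matrix square root in \eqref{eqn:b_formulation} is replaced by a triangular solve, and, unlike the coarser scalar-$c$ parameterization, the full matrix $2\Ib - C_n\Db_n^{-1}C_n^\top$ is retained rather than just its spectral norm.

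The main obstacle is that last step: recognizing that $2\Ib - C_n\Db_n^{-1}C_n^\top$ collapses exactly to $2\alpha_n\Omega_n^{-1}$, since it is precisely this simplification that lets the Cholesky factor $R_C$ of $\Omega_n$ --- already computed for the other speedups --- double as a cheap square-root factor of the right-hand side of \eqref{eqn:b_constraint}; without it one would still be forced to evaluate $\bigl(2\Ib - C_n\Db_n^{-1}C_n^\top\bigr)^{1/2}$ directly and the reparameterization would lose its point. The remaining care is purely bookkeeping: keeping the upper-triangular orientation of $R_\Sigma$, $R_B$, $R_C$ consistent so that the quadratic form telescopes and $R_C^{-1}R_C^{-\top}$ is genuinely $\Omega_n^{-1}$, which mirrors the handling already carried out in Lemmas~\ref{lm:c_b_constraint}--\ref{lm:d_j_simpl}.
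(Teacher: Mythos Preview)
Your proposal is correct and follows essentially the same route as the paper: substitute the Cholesky-based $B$ into $B^\top(\Ib+A\Sigma A^\top)B$, telescope the $R_\Sigma$ factors, bound $R_B^{-\top}W_B^\top W_BR_B^{-1}\preceq\Ib$, and arrive at $2\alpha_n\Omega_n^{-1}$. The only difference is that you make the final identification $2\Ib - C_n\Db_n^{-1}C_n^\top = 2\alpha_n\Omega_n^{-1}$ explicit via Corollary~\ref{lm:reduced_woodbury}, whereas the paper's proof stops at $2\alpha_n(\alpha_n\Ib + W_nW_n^\top)^{-1}$ and leaves that closure to the reader; your version is slightly more complete in this respect.
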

\begin{proof}
From Equation \eqref{eqn:b_constraint} we need to satisfy the condition that,
\begin{align*}
     B^\top\left(\Ib + A \Sigma A^\top \right) B  \preceq 2\Ib - C_{n}\Db_{n}^{-1}C_{n}^\top , 
\end{align*}
where we thus decompose using Cholesky decomposition,
\begin{align*}
     =&B^\top\left(\Ib + A \Sigma A^\top \right) B , \\
      =&2 \alpha_n( R_C^{-1} R_B^{-\top }   W_B^\top R_{\Sigma}^{-\top}) (R_{\Sigma}^\top R_\Sigma) (R_{\Sigma}^{-1}  W_B R_B^{-1} R_C^{-\top}) , \\
     =&2 \alpha_nR_C^{-1} R_B^{-\top }   W_B^\top (R_{\Sigma}^{-\top} R_{\Sigma}^\top ) ( R_\Sigma R_{\Sigma}^{-1})  W_B R_B^{-1} R_C^{-\top},  \\
     =&2 \alpha_nR_C^{-1} R_B^{-\top }   W_B^\top W_B R_B^{-1} R_C^{-\top}  \\
     \preceq & 2 \alpha_n R_C^{-1} \Ib R_C^{-\top} = 2 \alpha_n \Omega_n^{-1} = 2 \alpha_n\left(\alpha_n \Ib + W_nW_n^\top \right)^{-1}
\end{align*}
\end{proof}
\begin{theorem}
    The inverse of an upper triangular matrix, if it exists, is upper triangular \citep{Taboga2021TriangularMatrix}.
\end{theorem}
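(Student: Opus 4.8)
The plan is to prove the statement by induction on the matrix dimension, combining a $2\times 2$ block partition with the block-triangular inverse formula. First I would record the standard preliminary fact that an upper triangular $R\in\R^{n\times n}$ has $\det R=\prod_{i=1}^{n} R_{ii}$, so the hypothesis that $R^{-1}$ exists is equivalent to $R_{ii}\neq 0$ for every $i$; in particular every leading principal block of $R$ is itself invertible.

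For the induction, the base case $n=1$ is immediate. For the step I would partition
\begin{align*}
    R=\begin{bmatrix} R_{11} & R_{12}\\ \Zb & R_{22}\end{bmatrix},
\end{align*}
with $R_{11}$ a leading square block and $R_{22}$ upper triangular of strictly smaller size, the lower-left block being $\Zb$ precisely because $R$ is upper triangular. Since $\det R=\det R_{11}\,\det R_{22}\neq 0$, both $R_{11}$ and $R_{22}$ are invertible, so one may verify by direct multiplication that
\begin{align*}
    R^{-1}=\begin{bmatrix} R_{11}^{-1} & -R_{11}^{-1}R_{12}R_{22}^{-1}\\ \Zb & R_{22}^{-1}\end{bmatrix}.
\end{align*}
By the induction hypothesis $R_{11}^{-1}$ and $R_{22}^{-1}$ are upper triangular and the $(2,1)$ block of $R^{-1}$ is $\Zb$, so $R^{-1}$ is upper triangular, closing the induction. (An equivalent route avoids block algebra entirely: writing $S=R^{-1}$ with columns $s_1,\dots,s_n$, the systems $Rs_k=e_k$ solved by back substitution from the last row upward give, by downward induction on the row index $i$, that $(s_k)_i=0$ whenever $i>k$, i.e.\ $S$ is upper triangular.)

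I do not expect a genuine obstacle here: the only points needing a line of justification are that invertibility forces the diagonal entries — hence the diagonal blocks — to be nonzero, and that the block-inverse formula is valid, both of which follow from $\det R=\prod_i R_{ii}$ and a one-line multiplication check. The analogous claim for lower triangular matrices then comes for free by transposition, using $(R^\top)^{-1}=(R^{-1})^\top$, which is exactly the form in which it is invoked for the factors $R_A^{-\top}$, $R_j^{-\top}$, and $R_B^{-\top}$ appearing above.
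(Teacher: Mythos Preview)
Your proof is correct. The paper, however, does not supply its own proof of this statement: it merely cites an external reference \citep{Taboga2021TriangularMatrix} and uses the result without further justification. Your inductive block-partition argument (and the back-substitution alternative) are both standard, complete proofs of this classical fact, so you are providing strictly more than the paper does here.
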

\begin{theorem}
    The product of two upper triangular matrices is upper triangular \citep{Taboga2021TriangularMatrix}.
\end{theorem}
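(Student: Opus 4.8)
The plan is to prove this by a direct computation with matrix entries. First I would fix two upper triangular matrices $A, B \in \R^{n\times n}$, so that $A_{ik}=0$ whenever $i>k$ and $B_{kj}=0$ whenever $k>j$, and expand the product entrywise as $(AB)_{ij}=\sum_{k=1}^{n}A_{ik}B_{kj}$.

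Next I would argue that a summand $A_{ik}B_{kj}$ can be nonzero only if both $i\le k$ (otherwise $A_{ik}=0$) and $k\le j$ (otherwise $B_{kj}=0$); chaining these gives $i\le k\le j$, and in particular $i\le j$. Consequently, for any pair of indices with $i>j$ there is no index $k$ with $i\le k\le j$, so every term of the sum vanishes and $(AB)_{ij}=0$. This is exactly the assertion that $AB$ is upper triangular, and by restricting to the diagonal indices $i=j$ one also sees that the diagonal of $AB$ is the entrywise product of the diagonals of $A$ and $B$.

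The argument uses nothing beyond the definitions — no invertibility and no positivity are needed — so there is no genuine obstacle here; the only point requiring a little care is keeping the two triangularity inequalities pointing the right way and observing that it is the transitive chain $i\le k\le j$ that forces $i\le j$. I would also remark that applying the same reasoning to transposes (or directly to the sub-diagonal entries) yields the lower triangular analogue, which, together with the companion statement that the inverse of an invertible triangular matrix is triangular, is what justifies that the Cholesky-based reparameterizations built from the factors $R_k$ retain their triangular structure.
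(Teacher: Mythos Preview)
Your proof is correct and is the standard entrywise argument for this elementary fact. The paper itself does not supply a proof of this theorem; it simply states the result with a citation to \citet{Taboga2021TriangularMatrix}, so there is nothing in the paper to compare your argument against beyond noting that your direct computation is exactly the kind of proof one finds in the cited reference.
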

Considering that all operations employ triangular-triangular or triangular-dense matrix multiplication, it is evident that optimized multiplication routines capable of multiplying triangular matrices using more efficient computational methods, owing to their sparsity, can be implemented. However, PyTorch does not inherently possess this functionality, necessitating the creation of a custom CUDA kernel to implement this optimization. Although this minor optimization is insignificant compared to the substantial performance gains achieved by replacing the eigen-decomposition with the Cholesky decomposition function.
%
%
\begin{lemma} \label{lm:cholesky_expressiveness}
    Let $B \in \R^{n \times n}$ be symmetric positive definite. Let $R \in \R^{n \times n}$ be the (unique) Cholesky factor with positive diagonal such that $R^\top R = B$, and let $S = B^{\frac{1}{2}}$ be the (unique) symmetric positive definite square root of $B$, for any $m, n \in \mathbb{N}$,
    \begin{align*}
        \mathcal{H}_R &:= \{WR^{-1}: W \in \R^{m \times n}\},  
        \mathcal{H}_S := \{WS^{-1}: W \in \R^{m \times n}\}, 
    \end{align*}
    coincide $\mathcal{H}_R =  \mathcal{H}_S$, as such $W \rightarrow WR^{-1}$ and $W \rightarrow WS^{-1}$ have the same expressiveness.
\end{lemma}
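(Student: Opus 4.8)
The plan is to show the two parameterized families $\mathcal{H}_R$ and $\mathcal{H}_S$ are equal as sets by exhibiting, for each member of one family, a member of the other that produces the same matrix. The key tool is Lemma \ref{lm:s_r_link} (and the standard fact that $R$ and $S$ are two invertible square roots of $B$, hence differ by an orthogonal factor): there exists an orthogonal $Q$ with $R^\top = SQ$, equivalently $R^{-1} = Q^\top S^{-1}$ and $S^{-1} = Q R^{-1}$. I would first fix notation and record that both $R$ and $S$ are invertible (since $B \succ \Zb$), so the maps $W \mapsto WR^{-1}$ and $W \mapsto WS^{-1}$ are well-defined linear bijections from $\R^{m\times n}$ onto their images.

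Next I would prove $\mathcal{H}_R \subseteq \mathcal{H}_S$: take any $M = WR^{-1} \in \mathcal{H}_R$; using $R^{-1} = Q^\top S^{-1}$ we get $M = (WQ^\top) S^{-1}$, and since $WQ^\top \in \R^{m\times n}$, this shows $M \in \mathcal{H}_S$. For the reverse inclusion $\mathcal{H}_S \subseteq \mathcal{H}_R$, take $M = WS^{-1}$; using $S^{-1} = QR^{-1}$ (equivalently $S^{-1} = Q R^{-\top}$ composed appropriately — I would just use the clean identity $S^{-1} = Q R^{-1}$ implied by $R = Q^\top S$ after transposing $R^\top = SQ$) we get $M = (WQ) R^{-1}$, and $WQ \in \R^{m\times n}$, so $M \in \mathcal{H}_R$. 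Together these give $\mathcal{H}_R = \mathcal{H}_S$. Because right-multiplication by the fixed orthogonal matrix $Q$ (or $Q^\top$) is a norm-preserving bijection of $\R^{m\times n}$ onto itself, the two parameterizations are in bijective, spectral-norm-preserving correspondence $W \leftrightarrow WQ^{\pm 1}$, which is the precise sense in which they ``have the same expressiveness'': any function realizable by $W \mapsto WR^{-1}$ is realizable by $W' \mapsto W'S^{-1}$ for $W' = WQ^\top$, and vice versa.

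One subtlety I would be careful about is keeping the transposes straight, since $R$ is upper triangular and the paper uses $R^\top R = B$ while $S = S^\top$; the orthogonal factor relating $R^{-1}$ and $S^{-1}$ acts on the left of $S^{-1}$ (equivalently right of $W$), which is exactly what is needed so that the "free" parameter $W$ stays in the same space $\R^{m\times n}$ without any dimensional or structural change. I do not anticipate a genuine obstacle here — the result is essentially a restatement of Lemma \ref{lm:s_r_link} at the level of the induced affine-free parameter sets — so the main task is simply to present the two set inclusions cleanly and then remark that the identification $W \mapsto WQ$ preserves all relevant quantities (spectral norm, rank, and hence the realized linear map $x \mapsto WR^{-1}x$), so nothing about the function class changes when Cholesky factors replace symmetric square roots.
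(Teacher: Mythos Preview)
Your proposal mirrors the paper's proof essentially line for line: invoke Lemma~\ref{lm:s_r_link} to obtain an orthogonal $Q$ relating $R^{-1}$ and $S^{-1}$, then establish the two inclusions $\mathcal{H}_R \subseteq \mathcal{H}_S$ and $\mathcal{H}_S \subseteq \mathcal{H}_R$ by substituting and noting that right-multiplication of $W$ by the fixed orthogonal matrix is a bijection on $\R^{m\times n}$. The paper concludes in exactly the same way, remarking that the two parameterizations differ only by an orthogonal factor in the learnable parameter $W$.
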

\begin{proof}
    From Lemma \ref{lm:s_r_link}, we know that there exists an orthonormal matrix $Q$ such that $S^{-1} = R^{-1}Q$ and equivalently $R^{-1} = S^{-1}Q^\top$.
    \par
    Let $A \in \mathcal{H}_S$. Then $A = WS^{-1}$ for some $W \in \R^{m \times n}$, we can thus
    \begin{align*}
        A = W(R^{-1} Q) = (WQ)R^{-1},
    \end{align*}
    Since right-multiplication by the fixed invertible matrix $Q$ is a bijection on $\R^{m \times n}$, the product $WQ$ ranges over all $m \times n$ matrices as $W$, hence $A \in \mathcal{H}_R$.
    \par
    Similarly let $A \in \mathcal{H}_R$. Then $A = WR^{-1}$ for some $W \in \R^{m \times n}$, we can thus
    \begin{align*}
        A = W(S^{-1} Q^\top) = (W Q^{\top})S^{-1}.
    \end{align*}
    As before, right-multiplication by $Q^\top$ is a bijection on $\R^{m \times n}$, hence $A \in \mathcal{H}_S$.
    \par
    Thus, the two parameterizations have identical representational power; they differ only by an orthogonal factor in the learnable parameter $W$.
\end{proof}
From Lemma \ref{lm:cholesky_expressiveness}, we know that these less computationally expensive parameterizations of the parameters maintain the same level of expressiveness as their original parameterization.
\begin{corollary}
    The set of networks realizable by the Cholesky-based parameterization of $A, B, \allC$ is identical to that realizable by the symmetric square roots. Thus substituting the Cholesky factors for the symmetric ones, introduces no expressiveness loss, while greatly reducing the computation cost from $\mathcal{O}(\frac{8}{3}n^3)$ to $\mathcal{O}(\frac{1}{3}n)$.
\end{corollary}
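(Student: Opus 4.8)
The plan is to show that the two parameterizations have the same \emph{image}: the symmetric-square-root maps \eqref{eqn:a_formulation}, \eqref{eqn:c_formulation}, \eqref{eqn:b_formulation} and their Cholesky reformulations (Lemma~\ref{lm:a_simplification} and the subsequent simplification lemmas for $\allC$ and $B$) sweep out the same set of tuples $(A,B,\allC)$. Since, for fixed architecture, $\Ll$, and activation slopes (so $\Lambda_j=\Ib$), the realized network is a \emph{fixed} function of that tuple alone — via $x_{k+1}=A_kx_k+B_kw_{k,n}$, $v_{k,i}=C_iw_{k,i-1}+b_i$, $w_{k,i}=\sigma_i(v_{k,i})$ — equality of images immediately gives equality of realizable networks. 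I would establish image-equality by induction on the layer index, at each step reducing to Lemma~\ref{lm:cholesky_expressiveness} after peeling off fixed pre- and post-multipliers.

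The base object is the self-normalized weight $N(W):=W(\alpha\Ib+W^\top W)^{-\frac12}$ versus $N'(W):=WR_W^{-1}$ with $R_W^\top R_W=\alpha\Ib+W^\top W$. Both have operator norm $<1$ for every $W$ (Lemma~\ref{lm:weight_parameterization} for $N$; the norm computations inside Lemma~\ref{lm:a_simplification} and the $\allC$/$B$ simplification lemmas for $N'$), and both are surjective onto the open unit ball $\{M:\normt M<1\}$ — for $N$ by the usual singular-value fit, for $N'$ by the same fit combined with a (reverse-)Cholesky construction — so $\{N(W):W\}=\{N'(W):W\}$; this handles $A=\Ll N(W_A)$ directly. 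For the inductive step, assume the achievable sets of $(\Db_1,\dots,\Db_j,C_1,\dots,C_{j-1})$ already agree. By Lemmas~\ref{lm:da_block}, \ref{lm:dj_block}, \ref{lm:d_j_simpl} the block $\Db_j$ depends on the free weights only through the Gram matrices $W_A^\top W_A$ or $W_{j-1}W_{j-1}^\top$, which are unchanged by the orthogonal reconciliation $Q$ of Lemma~\ref{lm:s_r_link} (it only right-multiplies $W_{j-1}$); hence $\Db_j$ is literally the same fixed SPD matrix in both parameterizations for matched weights. Writing $C_j$ as $(\text{fixed})\cdot N(W_j)\cdot(\text{fixed, built from }\Db_j,\Db_{j-1})$ on one side and the analogue with Cholesky factors on the other, Lemma~\ref{lm:cholesky_expressiveness} applied to the genuinely fixed decomposed matrices together with $\{N(W)\}=\{N'(W)\}$ yields equality of the achievable $C_j$, closing the induction. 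Finally $\Sigma=\sum_j\Gamma_j\Db_j^{-1}\Gamma_j^\top$, the block $\Db_{n+1}$, the matrix $\Ib+A\Sigma A^\top$, and the scalar $c=\sqrt{\normt{2\Ib-C_n\Db_n^{-1}C_n^\top}}$ are fixed functions of the already-matched $(A,\allC)$, so the same two-part argument applied to $B$ (Lemma~\ref{lm:cholesky_expressiveness} for $R_\Sigma$ and $R_C$, plus $\{N(W_B)\}=\{N'(W_B)\}$) gives equality of the achievable $B$, hence of the full tuple, hence of the realized networks.

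The hard part is precisely that Lemma~\ref{lm:cholesky_expressiveness} is stated for decomposing a \emph{fixed} SPD matrix, whereas each self-normalizer decomposes the $W$-dependent matrix $\alpha\Ib+W^\top W$: the fix is the separate surjectivity argument for $\{N(W)\}=\{N'(W)\}$, and then the induction must guarantee that every genuinely fixed matrix fed into Lemma~\ref{lm:cholesky_expressiveness} (the $\Db_j$, $R_A$, $R_\Sigma$, $R_C$) takes the same value under both parameterizations. Tracking which orthogonal factors merely reparameterize a downstream free weight versus which would change a downstream \emph{value} is the only delicate bookkeeping, and it is exactly why the simplified forms of $\Db_1,\dots,\Db_{n+1}$ and $\Sigma$ — which see the weights only through Gram matrices — are the right objects to carry through the recursion. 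Once image-equality holds the complexity claim is immediate: the only operations that change are the matrix square roots, now replaced by Cholesky factorizations, so each such factor costs $\mathcal{O}(\frac13 n^3)+\mathcal{O}(n^2)$ (Cholesky) instead of $\mathcal{O}(\frac83 n^3)+\mathcal{O}(n^2)$ (symmetric square root via eigendecomposition), every other operation being unchanged.
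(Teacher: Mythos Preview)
Your proposal is correct and considerably more careful than the paper's own treatment, which offers no proof beyond the single sentence ``From Lemma~\ref{lm:cholesky_expressiveness}, we know that these less computationally expensive parameterizations of the parameters maintain the same level of expressiveness as their original parameterization.'' You correctly flag that Lemma~\ref{lm:cholesky_expressiveness} is stated for a \emph{fixed} SPD matrix and therefore does not directly cover the self-normalizers $N(W)=W(\alpha\Ib+W^\top W)^{-1/2}$ and $N'(W)=WR_W^{-1}$, where the factored matrix depends on $W$; your separate surjectivity argument onto the open spectral-norm unit ball, followed by the layer-wise induction that carries the already-matched $\Db_j$'s forward, is exactly the missing glue the paper omits.

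One small tightening at the inductive step: the phrase ``Lemma~\ref{lm:cholesky_expressiveness} applied to the genuinely fixed decomposed matrices together with $\{N(W)\}=\{N'(W)\}$'' is slightly loose, because Lemma~\ref{lm:cholesky_expressiveness} lets the free weight range over all of $\R^{m\times n}$, whereas after your surjectivity step the free factor ranges only over the unit ball $U$, and the two statements do not compose mechanically (right-multiplying $U$ by two different square roots of $\Db_j$ need not coincide just because they differ by an orthogonal factor on the right). The clean closure is to note directly that for \emph{any} invertible $P$ with $PP^\top=\Db_j$ one has $\{MP:\normt{M}<1\}=\{X:X\Db_j^{-1}X^\top\prec\Ib\}$, since $\normt{XP^{-1}}^2=\lambda_{\max}\bigl(X(PP^\top)^{-1}X^\top\bigr)$; both the symmetric root $\Db_j^{1/2}$ and the Cholesky-derived right factor satisfy $PP^\top=\Db_j$, so the achievable $C_j$ coincide with the constraint set of Lemma~\ref{lm:c_constraint} in either parameterization. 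This is the same spirit as your argument --- and the same orthogonal-equivalence mechanism underlying Lemma~\ref{lm:cholesky_expressiveness} --- just phrased so that the restriction to the unit ball is handled in one stroke rather than by stacking two lemmas whose hypotheses do not quite match.
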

\begin{figure}[htb]
    \centering
    \includegraphics[width=0.8\linewidth]{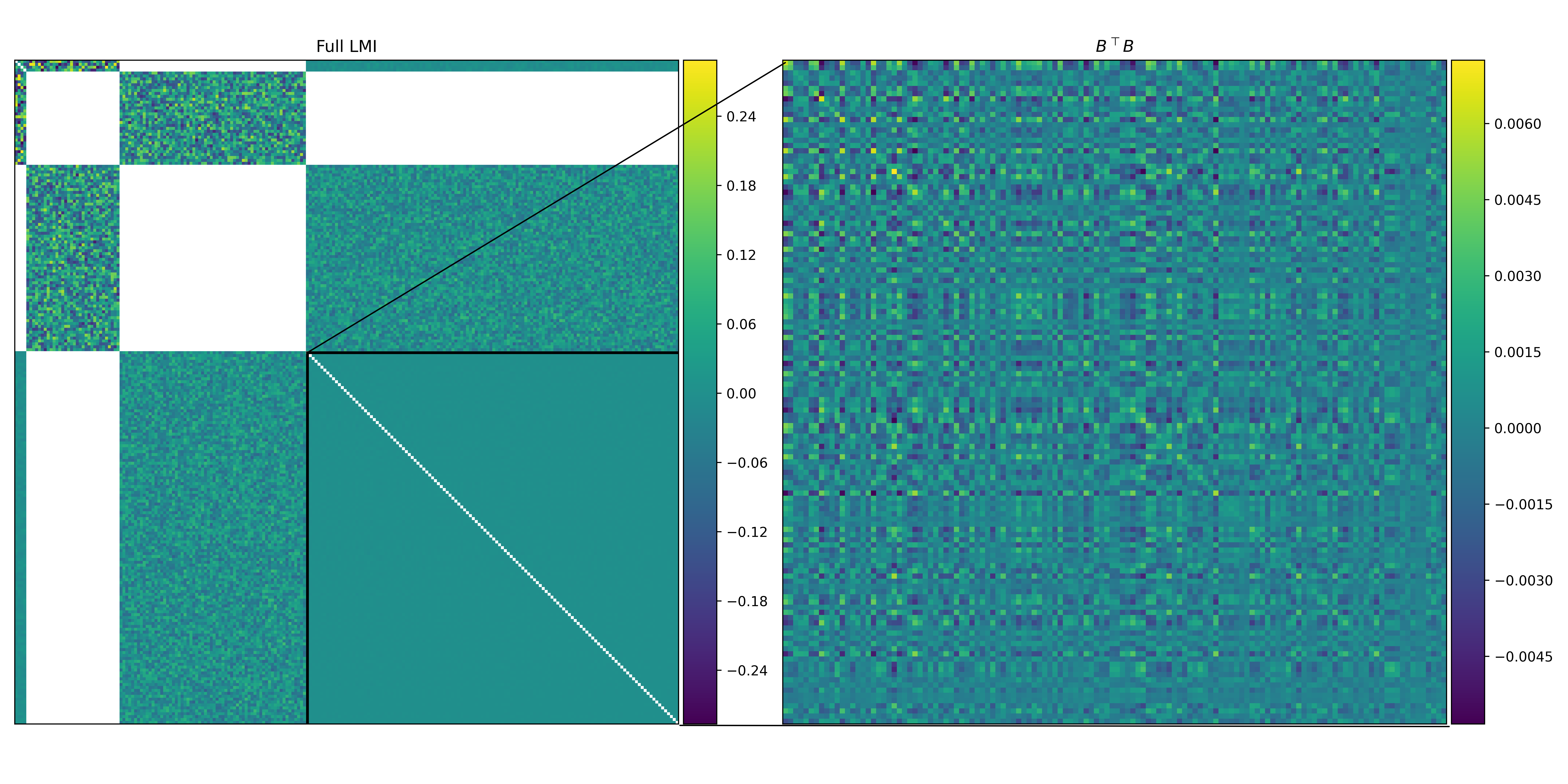}
    \caption{$B$ matrix artifacting in network architecture with input and output size of $4$ with layer widths of $[32, 64, 128]$ respectively}
    \label{fig:lmi_matrix_artifacting}
\end{figure}
\subsection{Convolution Layers}

Let us define $h, w, k \in \N$, and $n= wh$. We define the input $x \in \R^{h \times w \times c_I}$, where $c_I, c_O \in \N$, represents the number of input and output channels for the image. We thus have the kernel $K \in \R^{c_O \times c_I \times k \times k }$ for which we assume that a circular convolution on $h \times w$ will be applied \citep{Gray2005ToeplitzReview, Rao2018TheHandbook}. Assuming circular boundary conditions, the parameterizations of $A, B, \allC$ can be interpreted in terms of the spectral frequency matrices, and in turn, the same normalization operations as above can be applied to them.

\section{Other Architectures}

\subsection{Linear Network}

It is possible to easily derive the same conditions as before for a classical deep neural network of the form,
\begin{gather*}
    x_{k + 1} = w_{k,n}, \\
    v_{k,1 } = C_1 x_{k} + b_1, \\
    w_{k,1} =  \sigma_1(v_{k, 1} ), \\
     \vdots  \\
    v_{k,n } = C_n w_{k, n -1} + b_n, \\
    w_{k,n} =  \sigma_n(v_{k, n} ).
\end{gather*}
By simply setting $A_k = \Zb, B_k = \Ib$ from our residual structure. In turn, this results in the LMI, which generates the following conditions,
\begin{lemma}
The result of $\Db_1$ is equal to the symmetric matrix,
    \begin{align*}
        \Db_1 =&  \Ll^2 \Ib + 2 L_1 m_1 C_{1}^\top  \Lambda_{1} C_{1},
        \end{align*}
     The result of $\Db_j$  for $j \in \{2, \cdots, n\}$ is equal to the symmetric matrix,
    \begin{align*}
        \Db_j =& 2 L_j m_j C_{j}^\top  \Lambda_{j} C_{j} + 2  \Lambda_{j} - (L_{j - 1} + m_{j - 1})^2 \Lambda_{j - 1} C_{j - 1}\Db_{j - 1}^{-1} C_{j - 1}^\top \Lambda_{j - 1},
    \end{align*}
    The block triangular terms of $\Lb_{ij}$  for $j = \{1, \cdots, n\}$, where $\Lb_{jj} = \Ib$, are the following,
    \begin{align*}
    \Lb_{(j + 1)j} &= -(L_j + m_j)   \Lambda_{j} C_{j}\Db_j^{-1},  \\
    \Lb_{(j + 2)j} &= \Zb,  \\
    &\vdots \\
    \Lb_{(n + 1)j} &= \Zb ,
    \end{align*}
    where we define $\Jb_j$ as the following:
The symmetric block diagonal $\Db_n$ is of the form,
    \begin{align*}
        \Db_{n + 1} =& 2\Lambda_n-\Ib -  (L_n + m_n)^2\Lambda_{n}C_{n}\Db_{n}^{-1}C_{n}^\top \Lambda_{n},
    \end{align*}
\end{lemma}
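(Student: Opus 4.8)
The plan is to obtain every claimed formula by specializing the general block $\mathrm{LDL}^\top$ computation of Lemmas~\ref{lm:da_block}--\ref{lm:dn_1_block} to the substitution $A_k=\Zb$, $B_k=\Ib$, and then simplifying. First I would record what this substitution does to the LMI in \eqref{eqn:explicit_lmi}: the $(1,1)$ block loses its $A^\top A$ term, the $(n+1,n+1)$ block has $B^\top B=\Ib$, and the two corner blocks vanish since $A^\top B=\Zb$ and $B^\top A=\Zb$. In other words, the cyclic block-tridiagonal matrix of the residual case degenerates to a genuine block-tridiagonal matrix, recovering the standard FNN structure; this is the only conceptual point, and it is immediate.

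Next I would push the substitution through each closed form in turn. From Lemma~\ref{lm:da_block}, $\Db_1=\Ll^2\Ib+2L_1m_1C_1^\top\Lambda_1C_1-A^\top A$ becomes $\Db_1=\Ll^2\Ib+2L_1m_1C_1^\top\Lambda_1C_1$. The intermediate blocks $\Db_j$, $j\in\{2,\dots,n\}$, in Lemma~\ref{lm:dj_block} involve neither $A$ nor $B$, hence are unchanged. For the sub-diagonal blocks the key observation is that $\Jb_j=\Upsilon_j\,B^\top A\,\Gamma_j=\Zb$ for every $j$, because $B^\top A=\Ib\cdot\Zb=\Zb$; hence by \eqref{eqn:last_l_block} we get $\Lb_{(n+1)j}=-\Jb_j\Db_j^{-1}=\Zb$ for $j\in\{1,\dots,n-1\}$, the off-diagonal blocks $\Lb_{(j+2)j},\dots,\Lb_{nj}$ are already $\Zb$, and the previous edge case collapses to $\Lb_{(n+1)n}=-\Jb_n\Db_n^{-1}-(L_n+m_n)\Lambda_nC_n\Db_n^{-1}=-(L_n+m_n)\Lambda_nC_n\Db_n^{-1}$, so the generic formula $\Lb_{(j+1)j}=-(L_j+m_j)\Lambda_jC_j\Db_j^{-1}$ from \eqref{eqn:first_l_block} now holds for all $j\in\{1,\dots,n\}$. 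Finally, from Lemma~\ref{lm:dn_1_block}, putting $B^\top B=\Ib$ and annihilating the $B^\top A[\cdots]A^\top B$ summand yields $\Db_{n+1}=2\Lambda_n-\Ib-(L_n+m_n)^2\Lambda_nC_n\Db_n^{-1}C_n^\top\Lambda_n$.

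For a self-contained argument I would alternatively re-run the induction behind Lemmas~\ref{lm:da_block}--\ref{lm:dn_1_block} with $A=\Zb$, $B=\Ib$ fixed from the outset: since this substitution only zeroes or replaces existing entries and creates no new nonzero off-diagonal block, every sparsity-driven cancellation in the inductive step remains valid, so \eqref{eq:LD1}--\eqref{eq:LD2} deliver the claimed formulas directly. Well-definedness of the inverses $\Db_j^{-1}$ used in the recursion is immediate, since $\Db_1=\Ll^2\Ib+2L_1m_1C_1^\top\Lambda_1C_1\succ\Zb$ and the constraints of Section~\ref{sec:ldlt_constraints} keep $\Db_j\succ\Zb$ for $j\ge 2$. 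I do not expect any real obstacle here; the only thing requiring care is the index bookkeeping — in particular, recognizing that the former $\Lb_{(n+1)n}$ edge case folds into the generic sub-diagonal formula once the cyclic coupling disappears.
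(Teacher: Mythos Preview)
Your proposal is correct and follows exactly the approach the paper takes: the paper states that the linear-network lemma is obtained ``by simply setting $A_k=\Zb$, $B_k=\Ib$'' in the residual formulation, and you carry this out in detail by pushing the substitution through Lemmas~\ref{lm:da_block}--\ref{lm:dn_1_block}, using $B^\top A=\Zb$ to kill the $\Jb_j$ terms and collapse the cyclic coupling. Your write-up is in fact more explicit than the paper's, which does not spell out the intermediate cancellations.
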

Again assuming $L_i = 1, m_j = 0$ and $\Lambda_j = \Ib$ for all $j$ (same as in Section \ref{sec:ldlt_constraints}), for the sake of simplicity. In turn, we get the following simplified conditions,
\begin{align*}
    \Db_1 &= \Ll^2 \Ib , \\ 
     \Db_j &= 2\Ib -  C_{j - 1}\Db_{j - 1}^{-1} C_{j - 1}^\top , \quad \forall j \in \{2, \cdots, n\}, \\
     \Db_{n + 1} &= \Ib -  C_{n}\Db_{n}^{-1}C_{n}^\top,
\end{align*}
where $D_1$ is clearly always satisfied. In turn, the simplified constraints are defined as,
\begin{align*}
    \kappa_j &= \begin{cases}
        1, & j = n, \\
        2, & \text{else}
    \end{cases}, \\
    C_{j} D_{j}^{-1} C_{j}^\top &\prec  \kappa_j \Ib, \quad \forall j
\end{align*}
which generates parameterization similar to the ones in Section \ref{sec:ldlt_parameterization},
\begin{lemma} \label{lm:feedforward_lipschitz_network}
For the linear network defined by the $\Ll$-Lipschitz activation function $\sigma$, the deep network's weights $\allC$ can be parameterized as,
\begin{align*}
    C_j &= \sqrt{\kappa_j} W_j (\alpha_j \Ib + W_j^\top W_j)^{-\frac{1}{2}} D_{j}^{\frac{1}{2}}, \quad \forall j \\
    C_j &= \sqrt{\kappa_j \kappa_{j-1}} W_j R_j^{-1}  R_{j-1}^{-\top},
\end{align*}
where $R_{j}^\top R_{j} = \bar \Omega_j$.
\begin{align*}
    \bar \Omega_j = \begin{cases}
       \Ib, & \text{if } j = 1 \\
        \alpha_j \Ib + W_j W_j^\top, & \mathrm{else} \\
    \end{cases} .
\end{align*}
\end{lemma}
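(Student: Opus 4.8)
The plan is to establish Lemma~\ref{lm:feedforward_lipschitz_network} by reducing it to the already-proven machinery of Sections~\ref{sec:ldlt_constraints}--\ref{sec:sqaure_root_comp}, since the linear network is literally the residual network with $A=\Zb$, $B=\Ib$. First I would observe that under $A=\Zb$, $B=\Ib$, and $L_i=1$, $m_i=0$, $\Lambda_j=\Ib$, the simplified diagonal blocks stated just above the lemma hold: $\Db_1=\Ll^2\Ib$, $\Db_j = 2\Ib - C_{j-1}\Db_{j-1}^{-1}C_{j-1}^\top$ for $j\in\{2,\dots,n\}$, and $\Db_{n+1}=\Ib - C_n\Db_n^{-1}C_n^\top$. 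The block-$LDL^\top$ identity of Lemma~\ref{lm:dn_1_block} collapses because $\Sigma$ and all the $\Jb_j$ terms vanish (every $\Upsilon_j B^\top A = \Zb$), leaving exactly the $\Db_{n+1}$ displayed. Positive semidefiniteness of all $\Db_i$ then amounts to the single family of constraints $C_j\Db_j^{-1}C_j^\top \preceq \kappa_j\Ib$ with $\kappa_j = 2$ for $j<n$ and $\kappa_n = 1$; $\Db_1\succ\Zb$ is automatic.

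Next I would verify the first parameterization $C_j = \sqrt{\kappa_j}\,W_j(\alpha_j\Ib + W_j^\top W_j)^{-1/2}\Db_j^{1/2}$. By Lemma~\ref{lm:weight_parameterization} the factor $\sqrt{\kappa_j}\,W_j(\alpha_j\Ib + W_j^\top W_j)^{-1/2}$ has spectral norm at most $\sqrt{\kappa_j}$, so $C_j\Db_j^{-1}C_j^\top = \bigl(\sqrt{\kappa_j}W_j(\alpha_j\Ib+W_j^\top W_j)^{-1/2}\bigr)\Db_j^{1/2}\Db_j^{-1}\Db_j^{1/2}\bigl(\cdots\bigr)^\top \preceq \kappa_j\Ib$, using that $\Db_j\succ\Zb$ by induction (so the symmetric square root exists), exactly as in the proofs of Lemmas~\ref{lm:c_constraint} and \ref{lm:c_b_constraint}. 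This establishes feasibility of the first form. Then I would close the recursion on the simplified $\Db_j$: substituting the parameterized $C_j$ into $\Db_{j+1} = \kappa'\Ib - C_j\Db_j^{-1}C_j^\top$ and cancelling $\Db_j^{1/2}\Db_j^{-1}\Db_j^{1/2}=\Ib$ reduces it, via Corollary~\ref{lm:reduced_woodbury} exactly as in Lemma~\ref{lm:d_j_simpl}, to a clean multiple of $(\alpha_j\Ib + W_jW_j^\top)^{-1}$; this makes $\bar\Omega_j$ in the lemma the natural accumulator, with $\bar\Omega_1=\Ib$ since $\Db_1=\Ll^2\Ib$ contributes only a scalar.

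For the second (Cholesky) parameterization $C_j = \sqrt{\kappa_j\kappa_{j-1}}\,W_jR_j^{-1}R_{j-1}^{-\top}$ with $R_j^\top R_j = \bar\Omega_j$, I would mirror the argument in the proof accompanying Lemma~\ref{lm:a_simplification}: write $\Db_j$ as a scalar multiple of $\bar\Omega_{j-1}^{-1} = R_{j-1}^{-1}R_{j-1}^{-\top}$, plug into $C_j\Db_j^{-1}C_j^\top$, and watch the interior factors $R_{j-1}^{-\top}R_{j-1}^\top$ and $R_{j-1}R_{j-1}^{-1}$ telescope to $\Ib$, leaving $\kappa_j W_j(\alpha_j\Ib + W_jW_j^\top)^{-1}W_j^\top \preceq \kappa_j\Ib$. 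Expressiveness equivalence of the two forms then follows immediately from Lemma~\ref{lm:cholesky_expressiveness} (the Cholesky factor and the symmetric square root differ by an orthogonal factor absorbed into $W_j$). I expect the only real bookkeeping obstacle to be tracking the scalar constants $\kappa_j$ correctly through the recursion --- in particular the boundary cases $j=1$ (where $\bar\Omega_1=\Ib$, not $\alpha_1\Ib + W_1W_1^\top$, because $\Db_1$ is a pure scalar matrix with no $W_0$) and $j=n$ (where $\kappa_n=1$ rather than $2$, coming from the $\Db_{n+1}$ block) --- but this is exactly the same bookkeeping already carried out in Lemma~\ref{lm:d_j_simpl} and the $B$-expansion, so no genuinely new difficulty arises.
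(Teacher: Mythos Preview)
Your proposal is correct and follows essentially the same approach as the paper: the paper does not give an explicit proof of Lemma~\ref{lm:feedforward_lipschitz_network} but simply states the parameterization and remarks that it follows ``from the logic following from Lemma~\ref{lm:d_j_simpl}'' and is ``similar to the ones in Section~\ref{sec:ldlt_parameterization}'', which is precisely the reduction you spell out (specialize to $A=\Zb$, $B=\Ib$, read off the $\kappa_j$ constraints, then reuse Lemma~\ref{lm:weight_parameterization}, Lemma~\ref{lm:d_j_simpl}/Corollary~\ref{lm:reduced_woodbury}, and the Cholesky arguments of Section~\ref{sec:sqaure_root_comp}). Your identification of the boundary-index bookkeeping at $j=1$ and $j=n$ as the only subtle point is accurate and matches the paper's treatment.
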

which also implies that, from the logic following from Lemma \ref{lm:d_j_simpl},
\begin{align*}
    \Db_j &= \kappa_{j -1} (\Ib + W_{j - 1} W_{j - 1}^\top)^{-1}, \quad \forall j \in \{2, \cdots, n\}.
\end{align*}
this more expressive nature to current methodologies such as in \citet{Araujo2023, Prach2022, Meunier2021ANetworks, Singla2021ImprovedCIFAR-100} allow for an end-to-end $\Ll$-parameterization of the linear network, thus helping for mitigating decay initialization problems illustrated in \citet{Juston20251-LipschitzProblem}, where the approximated 1-Lipschitz deep neural network parameterizations of just stacking 1-Lipschitz layers and ensuring the 1-Lipschitz property causes decay due to these stacking approximations. This recursively dependent network, with the addition of the Cholesky decomposition parameterization in Section \ref{sec:sqaure_root_comp}, provides an extremely computationally efficient and robust framework for deep Lipschitz networks.

\subsection{U-Net}

U-Nets \citep{Ronneberger2015U-Net:Segmentation}, a variant of variational latent state encoders (VAE), are effective at encoding information into a lower-dimensional latent space. These networks have been widely used in diffusion networks, data compression, and similar tasks \citep{Rombach2021High-ResolutionModels, Kingma2019AnAutoencoders}. The network structure is as follows, assuming an U-Net of depth $d$, meaning that there are $n = 2d - 1$ blocks, starting with the encoder structure,
%
%
\begin{gather*}
    v_{k,1 } = C_1 x_{k} + b_1, \\
    w_{k,1} =  \sigma_1(v_{k, 1} ), \\
     \vdots  \\
    v_{k,d } = C_d w_{k, d -1} + b_d, \\
    w_{k,d} =  \sigma_d(v_{k, d} ).
\end{gather*}
where $w_{k, d}$ represents the latent space encoding vector output. The decoder output is,
\begin{gather*}
    v_{k,d + 1} = C_{d + 1, 1} w_{k, d -1} + C_{d + 1, 2} w_{k, d}   + b_{d + 1}, \\
    w_{k,d + 1} =  \sigma_1(v_{k, d + 1} ), \\
     \vdots  \\
    v_{k,n } = C_{n, 1}w_{k, 1} + C_{n, 2} w_{k, n -1} + b_n, \\
    w_{k,n} =  \sigma_n(v_{k, n} ), \\
    x_{k + 1} = w_{k, n}
\end{gather*}
from which the full LMI and the set of constraints can be derived; these will be explored in a separate paper. Having a Lipschitz U-Net could provide interesting properties to the latent space representation encoding vector. Given that having a latent space with a Lipschitz bound enforces the distance between embeddings to be a Lipschitz bound as well. This formulation will be explored in future work.

\section{Experiments}

This Section goes through evaluating the $LDL^\top$ network with other state-of-the-art algorithms on a series of data sets from the UCI machine learning repository and an analysis of the results.

\subsection{UCI Machine Learning Repository Dataset}

To validate and compare the performance of the $LDL^\top$ based algorithms, we evaluated this work and other state-of-the-art techniques on a collection of 121 classification data sets from the UCI Machine Learning repository. These repositories encompass a wide variety of problems from physics, geology, and biology \citep{Fernandez-Delgado2014DoProblems}. The data sets range in size from 10 to 63 million data points, with input features ranging from 1 to 3.2 million. Because there are such a large number of data sets and some of the data sets have a prohibitively large amount of data for training, we instead look at a subset of these data sets whose data set subsection selection was provided by the previous works for SeLU \citep{Klambauer2017}, which selects 121 data sets and prepares them with four-fold cross-validation.

\subsection{Details on the Architecture and Hyper-parameters}


To ensure fairness among the chosen algorithms, each network was provided with a set depth of four hidden layers. A simple heuristic determined the network’s width, $w$, where
\begin{align*}
    w &= \min(\max(4N, 32), 512), \\
    w &= (1 + 0.25 \mathbbm{1}_{M > 10}) b,  \\
    w &= 2^{\operatorname{round}(\log_2(b))} ,
\end{align*}
and we have to choose the closest power-of-2 width for these data set widths, Table \ref{tab:model-ranges} demonstrates the different 1-Lipchitz architectures used and the parameter specifications during training across all data sets. In addition, all networks were trained using the AdamW optimizer with a weight decay of 1e-4 and an initial learning rate of 1e-3; a training scheduler was also used that halved the learning rate when validation accuracy plateaued for eight or more epochs. Each network was trained for a maximum of 100 epochs, and early termination was set to terminate the run if no improvement in validation accuracy was observed for 30 epochs. The loss function used was cross-entropy, and class weights were calculated to normalize the data sets' uneven class distributions, especially in the smaller data sets. Each network was four layers deep, as adding more layers did not change rankings and is therefore deemed acceptable.

For simpler, consistent computation of certification accuracy, each network architecture was fitted with a linear normalization head as its last layer.

\begin{table}[htb]
\centering
\begin{threeparttable}
\begin{tabular}{l r r r r r r}
\toprule
Algorithm & Width & Depth & Parameters & Padding & Input dim & Output dim \\
\midrule
AOL & 32---512 & 4---4 & 282---694837 & 10---524 & 3---262 & 2---100 \\
Orthogonal & 32---512 & 4---4 & 285---694840 & 10---524 & 3---262 & 2---100 \\
Sandwich & 32---512 & 4---4 & 615---1520140 & 10---524 & 3---262 & 2---100 \\
SLL & 32---512 & 4---4 & 1558---1084073 & 10---524 & 3---262 & 2---100 \\
LDLT-L & 32---512 & 4---4 & 3366---929297 & 10---524 & 3---262 & 2---100 \\
LDLT-R & 32---512 & 4---4 & 3463---1063442 & 10---524 & 3---262 & 2---100 \\
\bottomrule
\end{tabular}
\end{threeparttable}
\caption{Model dimension ranges (min---max across all data sets and folds). Input/Output dimensions follow data set label spaces.}
\label{tab:model-ranges}
\end{table}

We first compare the mean accuracy of each algorithm to validate purely on accuracy, and we also compare the provable accuracy of each network in Table \ref{tab:metric_summary}. For each model on each data set, we report both clean accuracy and certified robust accuracy at $\ell_2$ radii 36/255, 72/255, 108/255, and 255/255, obtained from the global Lipschitz bound. Given a 1-Lipschitz network and a radius $\epsilon$, we are guaranteed that predictions cannot change within an $\ell_2$ ball of radius $\epsilon$; we therefore label a test point as certifiably correct at radius $\epsilon$ if the prediction of the datapoint matches the true label \citep{Araujo2023}. 
We can see that, in terms of accuracy, the Sandwich layer network consistently performs best among the other algorithms; however, between the LDLT and the SLL, the closest equivalent to this paper, the LDLT-R algorithm, consistently performs better than the SLL-based layers. The AOL algorithm consistently performed the worst among the algorithms. The Tables \ref{tab:overall:mean_test_acc}-\ref{tab:overall:mean_cert_acc_255} provide a statistical comparison of the different algorithms at different certified accuracy noise levels. The average ranks compare relative performance across many data sets without assuming scales \citep{Demsar2006StatisticalSets}. This helps guard against a few easy data sets with large dominating means \citep{Demsar2006StatisticalSets}. The Wilcoxon-based significant loss helps answer how often A performs significantly better than B across data sets, and the mean-aggregate scores show the absolute accuracy margins between the algorithms, which rank-based methods often ignore \citep{Demsar2006StatisticalSets}. 

From the ranking statistics from Tables \ref{tab:overall:mean_test_acc}-\ref{tab:overall:mean_cert_acc_group} we can see that the Sandwich layers clearly dominate all the tests; however, the LDLT-R is always above the SLL layer rankings; however, for the general accuracy metric the LDLT-L barely ties with the Sandwich layers but does not handle the noise as well.

\begin{table}[htb]
\centering
{
  \setlength{\tabcolsep}{2pt}
\begin{threeparttable}
\begin{tabular}{l r lllll}
\toprule
 &  &  & \multicolumn{4}{c}{Certified Accuracy} \\
\cmidrule(lr){4-7}
Algorithm & $N$ & Accuracy & 36/255 & 72/255 & 108/255 & 255/255 \\
\midrule
AOL & 121 & 0.6295\,\tiny$\pm$0.2278 & 0.3669\,\tiny$\pm$0.2895 & 0.2660\,\tiny$\pm$0.2953 & 0.2076\,\tiny$\pm$0.2819 & 0.0999\,\tiny$\pm$0.1875 \\
Orthogonal & 121 & 0.6969\,\tiny$\pm$0.1938 & 0.5973\,\tiny$\pm$0.2386 & 0.5073\,\tiny$\pm$0.2617 & 0.4300\,\tiny$\pm$0.2702 & 0.1970\,\tiny$\pm$0.2288 \\
Sandwich & 121 & 0.7215\,\tiny$\pm$0.1871 & \textbf{0.6375\,\tiny$\pm$0.2305} & \textbf{0.5593\,\tiny$\pm$0.2503} & \textbf{0.4836\,\tiny$\pm$0.2659} & \textbf{0.2496\,\tiny$\pm$0.2471} \\
SLL & 121 & 0.6978\,\tiny$\pm$0.1998 & 0.5885\,\tiny$\pm$0.2451 & 0.4975\,\tiny$\pm$0.2649 & 0.4146\,\tiny$\pm$0.2715 & 0.1918\,\tiny$\pm$0.2222 \\
\midrule
LDLT-L & 121 & \textbf{0.7223\,\tiny$\pm$0.1868} & 0.5301\,\tiny$\pm$0.2920 & 0.4293\,\tiny$\pm$0.3049 & 0.3535\,\tiny$\pm$0.3003 & 0.1652\,\tiny$\pm$0.2281 \\
LDLT-R & 121 & 0.7022\,\tiny$\pm$0.1944 & 0.6107\,\tiny$\pm$0.2314 & 0.5292\,\tiny$\pm$0.2525 & 0.4492\,\tiny$\pm$0.2655 & 0.2172\,\tiny$\pm$0.2312 \\
\bottomrule
\end{tabular}
\end{threeparttable}
}
\caption{Sorted mean$\pm$std across $N$ data sets for each algorithm.}
\label{tab:metric_summary}
\end{table}

\begin{table}[htb]
\centering
\begin{threeparttable}
{\small
\setlength{\tabcolsep}{4pt}
\begin{tabular}{@{}l r r r r r r@{}}
\toprule
Algorithm & \shortstack{Avg \\ rank} $\downarrow$ & \shortstack{sig \\ wins} & \shortstack{sig \\ losses} & \shortstack{net \\ wins} & \shortstack{win \\ share} & mean $r$ \\
\midrule
LDLT-L & 2.434 & 4 & 0 & 4 & 0.800 & 0.577 \\
Sandwich & 2.566 & 4 & 0 & 4 & 0.800 & 0.517 \\
LDLT-R & 3.438 & 1 & 2 & -1 & 0.200 & 0.629 \\
SLL & 3.624 & 1 & 2 & -1 & 0.200 & 0.678 \\
Orthogonal & 3.831 & 1 & 2 & -1 & 0.200 & 0.639 \\
AOL & 5.107 & 0 & 5 & -5 & 0.000 & 0.000 \\
\bottomrule
\end{tabular}
}
\end{threeparttable}
\caption{Overall comparison on Mean Accuracy: average rank (lower is better) with Iman---Davenport $F=44.33$ (df=5,600), $p=1.11e-16$; Nemenyi CD$=0.685$. Counts are significant wins/losses after Holm within-metric at $\alpha=0.05$.}
\label{tab:overall:mean_test_acc}
\end{table}

\begin{table}[htb]
\centering
\begin{subtable}[t]{0.48\linewidth}
\centering
\begin{threeparttable}
{\small
\setlength{\tabcolsep}{4pt}
\begin{adjustbox}{max width=\linewidth}
\begin{tabular}{@{}l r r r r r r@{}}
\toprule
Algorithm & \shortstack{Avg \\ rank} $\downarrow$ & \shortstack{sig \\ wins} & \shortstack{sig \\ losses} & \shortstack{net \\ wins} & \shortstack{win \\ share} & mean $r$ \\
\midrule
Sandwich & 2.021 & 5 & 0 & 5 & 1.000 & 0.626 \\
LDLT-R & 2.715 & 4 & 1 & 3 & 0.800 & 0.514 \\
Orthogonal & 3.417 & 2 & 2 & 0 & 0.400 & 0.627 \\
SLL & 3.426 & 2 & 2 & 0 & 0.400 & 0.602 \\
LDLT-L & 3.785 & 1 & 4 & -3 & 0.200 & 0.732 \\
AOL & 5.636 & 0 & 5 & -5 & 0.000 & 0.000 \\
\bottomrule
\end{tabular}
\end{adjustbox}}
\end{threeparttable}
\caption{Overall comparison on Mean Certified Accuracy (36/255): average rank (lower is better) with Iman---Davenport $F=89.22$ (df=5,600), $p=1.11e-16$; Nemenyi CD$=0.685$. }
\label{tab:overall:mean_cert_acc_36}
\end{subtable}
\hfill
\begin{subtable}[t]{0.48\linewidth}
\centering
\begin{threeparttable}
{\small
\setlength{\tabcolsep}{4pt}
\begin{adjustbox}{max width=\linewidth}
\begin{tabular}{@{}l r r r r r r@{}}
\toprule
Algorithm & \shortstack{Avg \\ rank} $\downarrow$ & \shortstack{sig \\ wins} & \shortstack{sig \\ losses} & \shortstack{net \\ wins} & \shortstack{win \\ share} & mean $r$ \\
\midrule
Sandwich & 1.926 & 5 & 0 & 5 & 1.000 & 0.670 \\
LDLT-R & 2.628 & 4 & 1 & 3 & 0.800 & 0.557 \\
Orthogonal & 3.376 & 2 & 2 & 0 & 0.400 & 0.643 \\
SLL & 3.409 & 2 & 2 & 0 & 0.400 & 0.640 \\
LDLT-L & 4.058 & 1 & 4 & -3 & 0.200 & 0.732 \\
AOL & 5.603 & 0 & 5 & -5 & 0.000 & 0.000 \\
\bottomrule
\end{tabular}
\end{adjustbox}}
\end{threeparttable}
\caption{Overall comparison on Mean Certified Accuracy (72/255): average rank (lower is better) with Iman---Davenport $F=101.00$ (df=5,600), $p=1.11e-16$; Nemenyi CD$=0.685$. }
\label{tab:overall:mean_cert_acc_72}
\end{subtable}
\hfill
\begin{subtable}[t]{0.48\linewidth}
\centering
\begin{threeparttable}
{\small
\setlength{\tabcolsep}{4pt}
\begin{adjustbox}{max width=\linewidth}
\begin{tabular}{@{}l r r r r r r@{}}
\toprule
Algorithm & \shortstack{Avg \\ rank} $\downarrow$ & \shortstack{sig \\ wins} & \shortstack{sig \\ losses} & \shortstack{net \\ wins} & \shortstack{win \\ share} & mean $r$ \\
\midrule
Sandwich & 1.868 & 5 & 0 & 5 & 1.000 & 0.675 \\
LDLT-R & 2.562 & 4 & 1 & 3 & 0.800 & 0.548 \\
Orthogonal & 3.347 & 2 & 2 & 0 & 0.400 & 0.674 \\
SLL & 3.360 & 2 & 2 & 0 & 0.400 & 0.660 \\
LDLT-L & 4.136 & 1 & 4 & -3 & 0.200 & 0.801 \\
AOL & 5.727 & 0 & 5 & -5 & 0.000 & 0.000 \\
\bottomrule
\end{tabular}
\end{adjustbox}}
\end{threeparttable}
\caption{Overall comparison on Mean Certified Accuracy (108/255): average rank (lower is better) with Iman---Davenport $F=125.70$ (df=5,600), $p=1.11e-16$; Nemenyi CD$=0.685$. }
\label{tab:overall:mean_cert_acc_108}
\end{subtable}
\hfill
\begin{subtable}[t]{0.48\linewidth}
\centering
\begin{threeparttable}
{\small
\setlength{\tabcolsep}{4pt}
\begin{adjustbox}{max width=\linewidth}
\begin{tabular}{@{}l r r r r r r@{}}
\toprule
Algorithm & \shortstack{Avg \\ rank} $\downarrow$ & \shortstack{sig \\ wins} & \shortstack{sig \\ losses} & \shortstack{net \\ wins} & \shortstack{win \\ share} & mean $r$ \\
\midrule
Sandwich & 1.909 & 5 & 0 & 5 & 1.000 & 0.671 \\
LDLT-R & 2.550 & 4 & 1 & 3 & 0.800 & 0.575 \\
SLL & 3.310 & 2 & 2 & 0 & 0.400 & 0.662 \\
Orthogonal & 3.471 & 2 & 2 & 0 & 0.400 & 0.607 \\
LDLT-L & 4.202 & 1 & 4 & -3 & 0.200 & 0.796 \\
AOL & 5.558 & 0 & 5 & -5 & 0.000 & 0.000 \\
\bottomrule
\end{tabular}
\end{adjustbox}}
\end{threeparttable}
\caption{Overall comparison on Mean Certified Accuracy (255/255): average rank (lower is better) with Iman---Davenport $F=105.79$ (df=5,600), $p=1.11e-16$; Nemenyi CD$=0.685$. }
\label{tab:overall:mean_cert_acc_255}
\end{subtable}
\caption{Overall comparison on Mean Certified Accuracy at perturbation radii 36/255, 72/255, 108/255, and 255/255. Counts are significant wins/losses after Holm within-metric at $\alpha=0.05$.}
\label{tab:overall:mean_cert_acc_group}
\end{table}


The Tables \ref{tab:signif:mean_test_acc}-\ref{tab:signif:mean_cert_acc_group} show a graphical comparison of the pair-wise results between each of the algorithms from the Wilcoxon pair-wise test. The full summary of the results is provided in Appendix \ref{app:wilcoxon_full}. We can again see that, for the non-perturbed mean accuracy metric (Table \ref{tab:signif:mean_test_acc}), the LDLT-L. LDLT-R and Sandwich layers performed similarly, with the same number of victories. However, when looking at the perturbed certification accuracies, the Sandwich layers perform better than every other technique, while the LDLT-R only loses to the Sandwich layers.


\begin{table}[htb]
\centering
\begin{threeparttable}
{
\setlength{\tabcolsep}{3pt}
\begin{tabular}{@{}l c c c c c c @{}}
\toprule
 & AOL & LDLT-L & LDLT-R & Orthogonal & Sandwich & SLL \\
\midrule
AOL & $\cdot$ & \textcolor{red}{$\blacktriangledown$} & \textcolor{red}{$\blacktriangledown$} & \textcolor{red}{$\blacktriangledown$} & \textcolor{red}{$\blacktriangledown$} & \textcolor{red}{$\blacktriangledown$} \\
LDLT-L & \textcolor{green}{$\blacktriangle$} & $\cdot$ & \textcolor{green}{$\blacktriangle$} & \textcolor{green}{$\blacktriangle$} & $\cdot$ & \textcolor{green}{$\blacktriangle$} \\
LDLT-R & \textcolor{green}{$\blacktriangle$} & \textcolor{red}{$\blacktriangledown$} & $\cdot$ & $\cdot$ & \textcolor{red}{$\blacktriangledown$} & $\cdot$ \\
Orthogonal & \textcolor{green}{$\blacktriangle$} & \textcolor{red}{$\blacktriangledown$} & $\cdot$ & $\cdot$ & \textcolor{red}{$\blacktriangledown$} & $\cdot$ \\
Sandwich & \textcolor{green}{$\blacktriangle$} & $\cdot$ & \textcolor{green}{$\blacktriangle$} & \textcolor{green}{$\blacktriangle$} & $\cdot$ & \textcolor{green}{$\blacktriangle$} \\
SLL & \textcolor{green}{$\blacktriangle$} & \textcolor{red}{$\blacktriangledown$} & $\cdot$ & $\cdot$ & \textcolor{red}{$\blacktriangledown$} & $\cdot$ \\
\bottomrule
\end{tabular}
}
\end{threeparttable}
\caption{Pairwise Wilcoxon outcomes for Mean Accuracy (Holm within-metric at $\alpha=0.05$): row vs. column (\textcolor{green}{$\blacktriangle$} win, \textcolor{red}{$\blacktriangledown$} loss, $\cdot$ none).}
\label{tab:signif:mean_test_acc}
\end{table}

\begin{table}[htb]
\centering
\begin{subtable}[t]{0.48\linewidth}
\centering
\begin{threeparttable}
{
\setlength{\tabcolsep}{3pt}
\begin{adjustbox}{max width=\linewidth}
\begin{tabular}{@{}l c c c c c c @{}}
\toprule
 & AOL & LDLT-L & LDLT-R & Orthogonal & Sandwich & SLL \\
\midrule
AOL & $\cdot$ & \textcolor{red}{$\blacktriangledown$} & \textcolor{red}{$\blacktriangledown$} & \textcolor{red}{$\blacktriangledown$} & \textcolor{red}{$\blacktriangledown$} & \textcolor{red}{$\blacktriangledown$} \\
LDLT-L & \textcolor{green}{$\blacktriangle$} & $\cdot$ & \textcolor{red}{$\blacktriangledown$} & \textcolor{red}{$\blacktriangledown$} & \textcolor{red}{$\blacktriangledown$} & \textcolor{red}{$\blacktriangledown$} \\
LDLT-R & \textcolor{green}{$\blacktriangle$} & \textcolor{green}{$\blacktriangle$} & $\cdot$ & \textcolor{green}{$\blacktriangle$} & \textcolor{red}{$\blacktriangledown$} & \textcolor{green}{$\blacktriangle$} \\
Orthogonal & \textcolor{green}{$\blacktriangle$} & \textcolor{green}{$\blacktriangle$} & \textcolor{red}{$\blacktriangledown$} & $\cdot$ & \textcolor{red}{$\blacktriangledown$} & $\cdot$ \\
Sandwich & \textcolor{green}{$\blacktriangle$} & \textcolor{green}{$\blacktriangle$} & \textcolor{green}{$\blacktriangle$} & \textcolor{green}{$\blacktriangle$} & $\cdot$ & \textcolor{green}{$\blacktriangle$} \\
SLL & \textcolor{green}{$\blacktriangle$} & \textcolor{green}{$\blacktriangle$} & \textcolor{red}{$\blacktriangledown$} & $\cdot$ & \textcolor{red}{$\blacktriangledown$} & $\cdot$ \\
\bottomrule
\end{tabular}
\end{adjustbox}}
\end{threeparttable}
\caption{Pairwise Wilcoxon outcomes for Mean Certified Accuracy (36/255).}
\label{tab:signif:mean_cert_acc_36}
\end{subtable}
\hfill
\begin{subtable}[t]{0.48\linewidth}
\centering
\begin{threeparttable}
{
\setlength{\tabcolsep}{3pt}
\begin{adjustbox}{max width=\linewidth}
\begin{tabular}{@{}l c c c c c c @{}}
\toprule
 & AOL & LDLT-L & LDLT-R & Orthogonal & Sandwich & SLL \\
\midrule
AOL & $\cdot$ & \textcolor{red}{$\blacktriangledown$} & \textcolor{red}{$\blacktriangledown$} & \textcolor{red}{$\blacktriangledown$} & \textcolor{red}{$\blacktriangledown$} & \textcolor{red}{$\blacktriangledown$} \\
LDLT-L & \textcolor{green}{$\blacktriangle$} & $\cdot$ & \textcolor{red}{$\blacktriangledown$} & \textcolor{red}{$\blacktriangledown$} & \textcolor{red}{$\blacktriangledown$} & \textcolor{red}{$\blacktriangledown$} \\
LDLT-R & \textcolor{green}{$\blacktriangle$} & \textcolor{green}{$\blacktriangle$} & $\cdot$ & \textcolor{green}{$\blacktriangle$} & \textcolor{red}{$\blacktriangledown$} & \textcolor{green}{$\blacktriangle$} \\
Orthogonal & \textcolor{green}{$\blacktriangle$} & \textcolor{green}{$\blacktriangle$} & \textcolor{red}{$\blacktriangledown$} & $\cdot$ & \textcolor{red}{$\blacktriangledown$} & $\cdot$ \\
Sandwich & \textcolor{green}{$\blacktriangle$} & \textcolor{green}{$\blacktriangle$} & \textcolor{green}{$\blacktriangle$} & \textcolor{green}{$\blacktriangle$} & $\cdot$ & \textcolor{green}{$\blacktriangle$} \\
SLL & \textcolor{green}{$\blacktriangle$} & \textcolor{green}{$\blacktriangle$} & \textcolor{red}{$\blacktriangledown$} & $\cdot$ & \textcolor{red}{$\blacktriangledown$} & $\cdot$ \\
\bottomrule
\end{tabular}
\end{adjustbox}}
\end{threeparttable}
\caption{Pairwise Wilcoxon outcomes for Mean Certified Accuracy (72/255).}
\label{tab:signif:mean_cert_acc_72}
\end{subtable}
\hfill
\begin{subtable}[t]{0.48\linewidth}
\centering
\begin{threeparttable}
{
\setlength{\tabcolsep}{3pt}
\begin{adjustbox}{max width=\linewidth}
\begin{tabular}{@{}l c c c c c c @{}}
\toprule
 & AOL & LDLT-L & LDLT-R & Orthogonal & Sandwich & SLL \\
\midrule
AOL & $\cdot$ & \textcolor{red}{$\blacktriangledown$} & \textcolor{red}{$\blacktriangledown$} & \textcolor{red}{$\blacktriangledown$} & \textcolor{red}{$\blacktriangledown$} & \textcolor{red}{$\blacktriangledown$} \\
LDLT-L & \textcolor{green}{$\blacktriangle$} & $\cdot$ & \textcolor{red}{$\blacktriangledown$} & \textcolor{red}{$\blacktriangledown$} & \textcolor{red}{$\blacktriangledown$} & \textcolor{red}{$\blacktriangledown$} \\
LDLT-R & \textcolor{green}{$\blacktriangle$} & \textcolor{green}{$\blacktriangle$} & $\cdot$ & \textcolor{green}{$\blacktriangle$} & \textcolor{red}{$\blacktriangledown$} & \textcolor{green}{$\blacktriangle$} \\
Orthogonal & \textcolor{green}{$\blacktriangle$} & \textcolor{green}{$\blacktriangle$} & \textcolor{red}{$\blacktriangledown$} & $\cdot$ & \textcolor{red}{$\blacktriangledown$} & $\cdot$ \\
Sandwich & \textcolor{green}{$\blacktriangle$} & \textcolor{green}{$\blacktriangle$} & \textcolor{green}{$\blacktriangle$} & \textcolor{green}{$\blacktriangle$} & $\cdot$ & \textcolor{green}{$\blacktriangle$} \\
SLL & \textcolor{green}{$\blacktriangle$} & \textcolor{green}{$\blacktriangle$} & \textcolor{red}{$\blacktriangledown$} & $\cdot$ & \textcolor{red}{$\blacktriangledown$} & $\cdot$ \\
\bottomrule
\end{tabular}
\end{adjustbox}}
\end{threeparttable}
\caption{Pairwise Wilcoxon outcomes for Mean Certified Accuracy (108/255).}
\label{tab:signif:mean_cert_acc_108}
\end{subtable}
\hfill
\begin{subtable}[t]{0.48\linewidth}
\centering
\begin{threeparttable}
{
\setlength{\tabcolsep}{3pt}
\begin{adjustbox}{max width=\linewidth}
\begin{tabular}{@{}l c c c c c c @{}}
\toprule
 & AOL & LDLT-L & LDLT-R & Orthogonal & Sandwich & SLL \\
\midrule
AOL & $\cdot$ & \textcolor{red}{$\blacktriangledown$} & \textcolor{red}{$\blacktriangledown$} & \textcolor{red}{$\blacktriangledown$} & \textcolor{red}{$\blacktriangledown$} & \textcolor{red}{$\blacktriangledown$} \\
LDLT-L & \textcolor{green}{$\blacktriangle$} & $\cdot$ & \textcolor{red}{$\blacktriangledown$} & \textcolor{red}{$\blacktriangledown$} & \textcolor{red}{$\blacktriangledown$} & \textcolor{red}{$\blacktriangledown$} \\
LDLT-R & \textcolor{green}{$\blacktriangle$} & \textcolor{green}{$\blacktriangle$} & $\cdot$ & \textcolor{green}{$\blacktriangle$} & \textcolor{red}{$\blacktriangledown$} & \textcolor{green}{$\blacktriangle$} \\
Orthogonal & \textcolor{green}{$\blacktriangle$} & \textcolor{green}{$\blacktriangle$} & \textcolor{red}{$\blacktriangledown$} & $\cdot$ & \textcolor{red}{$\blacktriangledown$} & $\cdot$ \\
Sandwich & \textcolor{green}{$\blacktriangle$} & \textcolor{green}{$\blacktriangle$} & \textcolor{green}{$\blacktriangle$} & \textcolor{green}{$\blacktriangle$} & $\cdot$ & \textcolor{green}{$\blacktriangle$} \\
SLL & \textcolor{green}{$\blacktriangle$} & \textcolor{green}{$\blacktriangle$} & \textcolor{red}{$\blacktriangledown$} & $\cdot$ & \textcolor{red}{$\blacktriangledown$} & $\cdot$ \\
\bottomrule
\end{tabular}
\end{adjustbox}}
\end{threeparttable}
\caption{Pairwise Wilcoxon outcomes for Mean Certified Accuracy (255/255).}
\label{tab:signif:mean_cert_acc_255}
\end{subtable}
\caption{Overall pairwise Wilcoxon outcomes on Mean Certified Accuracy at perturbation radii 36/255, 72/255, 108/255, and 255/255. row vs. column (\textcolor{green}{$\blacktriangle$} win, \textcolor{red}{$\blacktriangledown$} loss, $\cdot$ tie). Holm within-metric at $\alpha=0.05$.}
\label{tab:signif:mean_cert_acc_group}
\end{table}

\section{Limitations}

ResNet is numerically unstable in deep networks unless more robust Cholesky-update routines are implemented. The convolution implementation of the network is computationally expensive due to the cost of complex operators and the doubling of matrix sizes in the block-matrix representation. PyTorch is currently working on implementing less computationally expensive complex operations (cf. https://github.com/openteams-ai/pytorch-complex-tensor, for details).
%
Training networks for complex data sets like CIFAR10, CIFAR100, or TinyImageNet becomes prohibitively expensive due to the scaling of CNN computational cost. These data sets require greater convolutional channel depths for effectiveness. Future work involves developing these additional data sets and validations.

\section{Conclusion}

We introduced a novel $LDL^\top$ framework to decompose the LMI into block matrices D, the only components needing constraints. This enables complex architectures to derive and enforce their constraints. The deep residual network structure and deep linear network are demonstrated through an end-to-end analysis, not a shallow layer-by-layer perspective.

This architecture offers greater robustness to adversarial noise than its direct competitor, the SDP-Layers \citep{Araujo2023}, and outperforms other state-of-the-art algorithms. The residual structure LDLT-R consistently ranks higher than the SLL layers, achieving 3\%-13\% better performance at higher certified accuracy thresholds. The Sandwich layers (\citealp{Wang2023DirectNetworks}) perform best among the selected algorithms. Based on the empirical accuracy increase from the Sandwich layers, we will explore integrating their parameterization into our system to achieve further accuracy. In the future, U-Nets, with their 1-Lipschitz latent-state representation, could be considered for specific applications or unknown beneficial properties. Novel normalization schemes based on the current network architecture will be derived to improve training efficiency. From the $LDL^\top$ decomposition formulation, this methodology enables decomposing neural networks into linear and non-linear operations and generating both the LMI and its parametric $LDL^\top$ decomposition for general architectures.

This methodology, derived from the $LDL^\top$ decomposition formulation, enables the creation of an algorithmic method for decomposing neural networks into linear and non-linear operations, generating both the LMI and its parametric $LDL^\top$ decomposition for general architectures.
%
%
%
%


\section{Code}

The code for this repository, including the models and their respective training weights, is located at \url{https://github.com/Marius-Juston/DeepLipschitzResNet}.
With the trained weights and Tensorboard plots available for download at the HuggingFace repository location \url{https://huggingface.co/SuperComputer/LDLT}.



\section{Declaration of Generative AI and AI-assisted technologies in the writing process}

During the preparation of this work, the authors used ChatGPT (OpenAI) to improve the clarity and fluency of the English text. After using this tool, the author reviewed and edited the content as needed and takes full responsibility for the publication's content.

\acks{This research was supported in part by the U.S. Army Corps of Engineers Engineering Research and Development Center, Construction Engineering Research Laboratory under Grant W9132T23C0013. \\
The authors declare that they have no known competing financial interests or personal relationships that could have appeared to influence
the work reported in this paper.}



\appendix

\section{Activation Function Quadratic Bounds} \label{sec:activation_quad_bounds}

For the sake of completeness, the $L$ and $m$ constants of the activation functions defined in \href{https://pytorch.org/docs/stable/nn.html#non-linear-activations-weighted-sum-nonlinearity}{PyTorch} (assuming default values if not specified) were derived and defined in Table \ref{tab:activation_function_convecities}. It should be noted that the Hardshrink and RReLU could not be used due to their infinite $L, m$ constants; Hardshrink has infinite $L, m$ due to its noncontinuous piece-wise definition, and PReLU due to its stochastic definition, which no longer made its $L, m$ computable. To compute the bounds $L, m$ bounds of the activation functions revolved to finding the minimum and maximum gradient of each activation function, $f(x)$.
\begin{align}
    m = \arg \min_x \frac{\partial f(x)}{\partial x}, \quad L = \arg \max_x \frac{\partial f(x)}{\partial x}
\end{align}
Where,
\begin{align}
    \text{erfc}(z) &= 1 - \text{erf}(z), \quad \text{erf}(z) = \frac{2}{\sqrt{\pi}} \int_{0}^z e^{-t^2}dt.
\end{align}
\begin{table}[htb]
\centering
\renewcommand{\arraystretch}{1.1}
\resizebox{\columnwidth}{!}{%
\begin{tabular}{lcccc}
\toprule
\textbf{Activation Function} & $\mathbf{L}$ & $\mathbf{m}$ & $\mathbf{S}$ & $\mathbf{P}$ \\
\midrule
ELU ($\alpha=1$) \citep{Clevert2015} & $\max(1,\alpha)$ & $0$ & $\max(1,\alpha)$ & $0$ \\
Hardshrink \citep{Cancino2002} & $\infty$ & $0$ & $\infty$ & $\infty$ \\
Hardsigmoid \citep{Courbariaux2015} & $\tfrac{1}{6}$ & $0$ & $\tfrac{1}{6}$ & $0$ \\
Hardtanh \citep{collobert2004} & $1$ & $0$ & $1$ & $0$ \\
Hardswish \citep{Howard2019} & $1.5$ & $-0.5$ & $1$ & $-0.75$ \\
LeakyReLU ($\alpha=10^{-2}$) \citep{Maas2013} & $1$ & $\alpha$ & $1+\alpha$ & $\alpha$ \\
LogSigmoid & $1$ & $0$ & $1$ & $0$ \\
PReLU ($\alpha=\tfrac{1}{4}$) \citep{He2015PReLU} & $1$ & $\alpha$ & $1+\alpha$ & $\alpha$ \\
ReLU \citep{McCulloch1943} & $1$ & $0$ & $1$ & $0$ \\
ReLU6 \citep{Howard2017} & $1$ & $0$ & $1$ & $0$ \\
RReLU \citep{Xu2015} & $\infty$ & $-\infty$ & $\infty$ & $\infty$ \\
SELU \citep{Klambauer2017} & \shortstack{$\alpha\cdot\text{scale}$\\$\approx1.758099341$} & $0$ & \shortstack{$\alpha\cdot\text{scale}$\\$\approx1.758099341$} & $0$ \\
CELU \citep{Barron2017} & $1$ & $0$ & $1$ & $0$ \\
GELU \citep{Hendrycks2016} &
\shortstack{$\dfrac{\mathrm{erfc}(1)}{2}-\dfrac{1}{e\sqrt{\pi}}$\\$\approx 1.128904145$} &
\shortstack{$\dfrac{1}{2}\big(\mathrm{erf}(1)+1\big)+\dfrac{1}{e\sqrt{\pi}}$\\$\approx -0.1289041452$} &
$1$ &
\shortstack{$\dfrac{\big(e\sqrt{\pi}\,(\mathrm{erf}(1)+1)+2\big)\big(e\sqrt{\pi}\,\mathrm{erfc}(1)-2\big)}{4e^{2}\pi}$\\$\approx -0.145520424$} \\
Sigmoid \citep{Sak2014} & $1$ & $0$ & $1$ & $0$ \\
SiLU \citep{Elfwing2017} & $1.099839320$ & $-0.09983932013$ & $1$ & $-0.1098072100$ \\
Softplus \citep{Zhou2016} & $1$ & $0$ & $1$ & $0$ \\
Mish ($\alpha\ge\tfrac{1}{2}$) \citep{Misra2019} & $1.199678640$ & $-0.2157287822$ & $0.8060623125$ & $-0.2204297485$ \\
Softshrink \citep{Cancino2002} & $1$ & $0$ & $1$ & $0$ \\
Softsign \citep{Ping2017} & $1$ & $0$ & $1$ & $0$ \\
Tanh \citep{Sak2014} & $1$ & $0$ & $1$ & $0$ \\
Tanhshrink & $1$ & $0$ & $1$ & $0$ \\
Threshold & $1$ & $0$ & $1$ & $0$ \\
\bottomrule
\end{tabular}
}%
\caption{Convexity constants of the element-wise activation functions in PyTorch.}
\label{tab:activation_function_convecities}
\end{table}

\section{Wilcoxon Metrics} \label{app:wilcoxon_full}

The general metrics for all runs, with more details on pairwise Wilcoxon comparisons for all 121 UCI data sets between each algorithm can be found in Tables \ref{tab:wilcoxon:mean_test_acc}-\ref{tab:wilcoxon:mean_cert_acc_255}, which shows the complete decompositions of win and loss ratios between the different algorithms over all 121 UCI data sets.


\begin{table}[htb]
\centering
\begin{threeparttable}
\begingroup
\setlength{\tabcolsep}{3pt}
\footnotesize
\begin{adjustbox}{max width=\linewidth}
\begin{tabular}{@{}llrrrrrrrrrrr@{}}
\toprule
\multicolumn{2}{c}{Algorithms} & \multicolumn{6}{c}{Run Statistics} & \multicolumn{5}{c}{Wilcoxon pairwise Statistics} \\\cmidrule(lr){1-2} \cmidrule(lr){3-8} \cmidrule(lr){9-13}
Alg A & Alg B & $n$ & wins$_A$ & wins$_B$ & ties & WinRate A & \shortstack{Median \\ $\Delta$ (A---B)} & $W$ & $p$ & $p_{\text{Holm,within}}$ & $p_{\text{Holm,global}}$ & $r$ \\
\midrule
AOL & LDLT-L & 121 & 16 & 105 & 0 & 0.1322 & -0.0568 & 412 & $2.3e-17^{***}$ & $3.4e-16^{***}$ & $0^{***}$ & 0.7708 \\
AOL & Sandwich & 121 & 18 & 103 & 0 & 0.1488 & -0.0595 & 518 & $2.3e-16^{***}$ & $0^{***}$ & $0^{***}$ & 0.7459 \\
AOL & SLL & 121 & 22 & 99 & 0 & 0.1818 & -0.0351 & 806 & $0^{***}$ & $0^{***}$ & $0^{***}$ & 0.6782 \\
AOL & Orthogonal & 121 & 27 & 94 & 0 & 0.2231 & -0.0265 & 971 & $0^{***}$ & $0^{***}$ & $1.0e-10^{***}$ & 0.6394 \\
AOL & LDLT-R & 121 & 25 & 96 & 0 & 0.2066 & -0.0313 & 1013 & $0^{***}$ & $0^{***}$ & $2.0e-10^{***}$ & 0.6295 \\
Orthogonal & Sandwich & 121 & 31 & 89 & 1 & 0.2603 & -0.0126 & 1269 & $6.0e-10^{***}$ & $6.3e-09^{***}$ & $2.2e-08^{***}$ & 0.5643 \\
LDLT-L & Orthogonal & 121 & 94 & 27 & 0 & 0.7769 & 0.0139 & 1309 & $7.0e-10^{***}$ & $6.6e-09^{***}$ & $2.5e-08^{***}$ & 0.5599 \\
LDLT-L & SLL & 121 & 86 & 35 & 0 & 0.7107 & 0.0111 & 1503 & $1.5e-08^{***}$ & $1.2e-07^{***}$ & $4.9e-07^{***}$ & 0.5143 \\
LDLT-L & LDLT-R & 121 & 87 & 33 & 1 & 0.7231 & 0.0108 & 1689 & $3.7e-07^{***}$ & $2.6e-06^{***}$ & $1.1e-05^{***}$ & 0.4639 \\
Sandwich & SLL & 121 & 83 & 38 & 0 & 0.6860 & 0.0090 & 1789 & $8.8e-07^{***}$ & $5.3e-06^{***}$ & $2.3e-05^{***}$ & 0.4470 \\
LDLT-R & Sandwich & 121 & 43 & 78 & 0 & 0.3554 & -0.0068 & 2364 & $6.0e-04^{***}$ & $3.0e-03^{**}$ & $7.2e-03^{**}$ & 0.3118 \\
LDLT-R & SLL & 121 & 66 & 54 & 1 & 0.5496 & 0.0019 & 3070 & $1.4e-01$ & $5.7e-01$ & $8.7e-01$ & 0.1338 \\
LDLT-R & Orthogonal & 121 & 71 & 50 & 0 & 0.5868 & 0.0025 & 3138 & $1.5e-01$ & $5.7e-01$ & $8.7e-01$ & 0.1298 \\
Orthogonal & SLL & 121 & 60 & 61 & 0 & 0.4959 & -0.0004 & 3623 & $8.6e-01$ & $1.0e+00$ & $1.0e+00$ & 0.0158 \\
LDLT-L & Sandwich & 121 & 59 & 62 & 0 & 0.4876 & -0.0002 & 3624 & $8.6e-01$ & $1.0e+00$ & $1.0e+00$ & 0.0155 \\
\bottomrule
\end{tabular}
\end{adjustbox}
\endgroup
\begin{tablenotes}\item Stars mark significance ($^*\,p\!\le\!0.05$, $^{**}\,p\!\le\!0.01$, $^{***}\,p\!\le\!0.001$).
\end{tablenotes}
\end{threeparttable}
\caption[Mean Accuracy]{Wilcoxon signed-rank tests (two-sided) for Mean Accuracy; $p$-values with Holm FWER corrections within-metric and global.}
\label{tab:wilcoxon:mean_test_acc}
\end{table}

\begin{table}[htb]
\centering
\begin{threeparttable}
\begingroup
\setlength{\tabcolsep}{3pt}
\footnotesize
\begin{adjustbox}{max width=\linewidth}
\begin{tabular}{@{}llrrrrrrrrrrr@{}}
\toprule
\multicolumn{2}{c}{Algorithms} & \multicolumn{6}{c}{Run Statistics} & \multicolumn{5}{c}{Wilcoxon pairwise Statistics} \\\cmidrule(lr){1-2} \cmidrule(lr){3-8} \cmidrule(lr){9-13}
Alg A & Alg B & $n$ & wins$_A$ & wins$_B$ & ties & WinRate A & \shortstack{Median \\ $\Delta$ (A---B)} & $W$ & $p$ & $p_{\text{Holm,within}}$ & $p_{\text{Holm,global}}$ & $r$ \\
\midrule
AOL & LDLT-R & 121 & 6 & 115 & 0 & 0.0496 & -0.2319 & 84 & $1.1e-20^{***}$ & $1.6e-19^{***}$ & $7.7e-19^{***}$ & 0.8479 \\
AOL & Sandwich & 121 & 6 & 115 & 0 & 0.0496 & -0.2436 & 88 & $1.2e-20^{***}$ & $1.7e-19^{***}$ & $8.4e-19^{***}$ & 0.8470 \\
AOL & SLL & 121 & 6 & 115 & 0 & 0.0496 & -0.1912 & 145 & $4.8e-20^{***}$ & $6.2e-19^{***}$ & $3.2e-18^{***}$ & 0.8336 \\
AOL & Orthogonal & 121 & 10 & 111 & 0 & 0.0826 & -0.2187 & 196 & $1.6e-19^{***}$ & $1.9e-18^{***}$ & $1.0e-17^{***}$ & 0.8216 \\
AOL & LDLT-L & 121 & 16 & 105 & 0 & 0.1322 & -0.1218 & 579 & $8.5e-16^{***}$ & $0^{***}$ & $0^{***}$ & 0.7315 \\
LDLT-L & Sandwich & 121 & 28 & 93 & 0 & 0.2314 & -0.0511 & 884 & $0^{***}$ & $0^{***}$ & $0^{***}$ & 0.6598 \\
Orthogonal & Sandwich & 121 & 24 & 96 & 1 & 0.2025 & -0.0275 & 1002 & $0^{***}$ & $0^{***}$ & $2.0e-10^{***}$ & 0.6282 \\
Sandwich & SLL & 121 & 97 & 24 & 0 & 0.8017 & 0.0199 & 1005 & $0^{***}$ & $0^{***}$ & $2.0e-10^{***}$ & 0.6314 \\
LDLT-L & LDLT-R & 121 & 39 & 82 & 0 & 0.3223 & -0.0372 & 1356 & $1.6e-09^{***}$ & $1.1e-08^{***}$ & $5.2e-08^{***}$ & 0.5488 \\
LDLT-L & Orthogonal & 121 & 47 & 74 & 0 & 0.3884 & -0.0190 & 1855 & $2.1e-06^{***}$ & $1.2e-05^{***}$ & $4.8e-05^{***}$ & 0.4315 \\
LDLT-R & SLL & 121 & 81 & 39 & 1 & 0.6736 & 0.0118 & 2104 & $6.5e-05^{***}$ & $2.5e-04^{***}$ & $1.0e-03^{**}$ & 0.3647 \\
LDLT-L & SLL & 121 & 49 & 72 & 0 & 0.4050 & -0.0075 & 2119 & $4.8e-05^{***}$ & $2.4e-04^{***}$ & $9.2e-04^{***}$ & 0.3694 \\
LDLT-R & Sandwich & 121 & 41 & 80 & 0 & 0.3388 & -0.0168 & 2142 & $6.2e-05^{***}$ & $2.5e-04^{***}$ & $1.0e-03^{**}$ & 0.3640 \\
LDLT-R & Orthogonal & 121 & 78 & 43 & 0 & 0.6446 & 0.0126 & 2429 & $1.1e-03^{**}$ & $2.2e-03^{**}$ & $1.1e-02^{*}$ & 0.2965 \\
Orthogonal & SLL & 121 & 60 & 61 & 0 & 0.4959 & -0.0004 & 3076 & $1.1e-01$ & $1.1e-01$ & $8.7e-01$ & 0.1444 \\
\bottomrule
\end{tabular}
\end{adjustbox}
\endgroup
\begin{tablenotes}\item Stars mark significance ($^*\,p\!\le\!0.05$, $^{**}\,p\!\le\!0.01$, $^{***}\,p\!\le\!0.001$).
\end{tablenotes}
\end{threeparttable}
\caption[Mean Certified Accuracy (36/255)]{Wilcoxon signed-rank tests (two-sided) for Mean Certified Accuracy (36/255); $p$-values with Holm FWER corrections within-metric and global.}
\label{tab:wilcoxon:mean_cert_acc_36}
\end{table}

\begin{table}[htb]
\centering
\begin{threeparttable}
\begingroup
\setlength{\tabcolsep}{3pt}
\footnotesize
\begin{adjustbox}{max width=\linewidth}
\begin{tabular}{@{}llrrrrrrrrrrr@{}}
\toprule
\multicolumn{2}{c}{Algorithms} & \multicolumn{6}{c}{Run Statistics} & \multicolumn{5}{c}{Wilcoxon pairwise Statistics} \\\cmidrule(lr){1-2} \cmidrule(lr){3-8} \cmidrule(lr){9-13}
Alg A & Alg B & $n$ & wins$_A$ & wins$_B$ & ties & WinRate A & \shortstack{Median \\ $\Delta$ (A---B)} & $W$ & $p$ & $p_{\text{Holm,within}}$ & $p_{\text{Holm,global}}$ & $r$ \\
\midrule
AOL & Sandwich & 121 & 7 & 114 & 0 & 0.0579 & -0.2614 & 107 & $1.9e-20^{***}$ & $2.9e-19^{***}$ & $1.3e-18^{***}$ & 0.8425 \\
AOL & LDLT-R & 121 & 5 & 116 & 0 & 0.0413 & -0.2327 & 116 & $2.4e-20^{***}$ & $3.3e-19^{***}$ & $1.6e-18^{***}$ & 0.8404 \\
AOL & SLL & 121 & 5 & 116 & 0 & 0.0413 & -0.1928 & 162 & $7.1e-20^{***}$ & $9.3e-19^{***}$ & $4.6e-18^{***}$ & 0.8296 \\
AOL & Orthogonal & 121 & 11 & 110 & 0 & 0.0909 & -0.2064 & 214 & $2.4e-19^{***}$ & $2.9e-18^{***}$ & $1.5e-17^{***}$ & 0.8174 \\
AOL & LDLT-L & 121 & 19 & 100 & 2 & 0.1653 & -0.1250 & 557 & $0^{***}$ & $0^{***}$ & $0^{***}$ & 0.7323 \\
Orthogonal & Sandwich & 121 & 19 & 101 & 1 & 0.1612 & -0.0346 & 601 & $0^{***}$ & $0^{***}$ & $0^{***}$ & 0.7240 \\
LDLT-L & Sandwich & 121 & 20 & 101 & 0 & 0.1653 & -0.0772 & 607 & $0^{***}$ & $0^{***}$ & $0^{***}$ & 0.7250 \\
Sandwich & SLL & 121 & 95 & 25 & 1 & 0.7893 & 0.0393 & 801 & $0^{***}$ & $0^{***}$ & $0^{***}$ & 0.6762 \\
LDLT-L & LDLT-R & 121 & 30 & 91 & 0 & 0.2479 & -0.0627 & 930 & $0^{***}$ & $0^{***}$ & $0^{***}$ & 0.6490 \\
LDLT-L & Orthogonal & 121 & 42 & 79 & 0 & 0.3471 & -0.0210 & 1696 & $2.5e-07^{***}$ & $1.5e-06^{***}$ & $7.5e-06^{***}$ & 0.4689 \\
LDLT-L & SLL & 121 & 42 & 79 & 0 & 0.3471 & -0.0213 & 1776 & $7.4e-07^{***}$ & $3.7e-06^{***}$ & $2.0e-05^{***}$ & 0.4501 \\
LDLT-R & SLL & 121 & 83 & 38 & 0 & 0.6860 & 0.0192 & 1938 & $5.9e-06^{***}$ & $2.3e-05^{***}$ & $1.3e-04^{***}$ & 0.4120 \\
LDLT-R & Sandwich & 121 & 40 & 81 & 0 & 0.3306 & -0.0197 & 2062 & $2.5e-05^{***}$ & $7.6e-05^{***}$ & $5.2e-04^{***}$ & 0.3828 \\
LDLT-R & Orthogonal & 121 & 78 & 43 & 0 & 0.6446 & 0.0178 & 2306 & $3.4e-04^{***}$ & $6.9e-04^{***}$ & $4.5e-03^{**}$ & 0.3254 \\
Orthogonal & SLL & 121 & 66 & 55 & 0 & 0.5455 & 0.0041 & 3178 & $1.9e-01$ & $1.9e-01$ & $8.7e-01$ & 0.1204 \\
\bottomrule
\end{tabular}
\end{adjustbox}
\endgroup
\begin{tablenotes}\item Stars mark significance ($^*\,p\!\le\!0.05$, $^{**}\,p\!\le\!0.01$, $^{***}\,p\!\le\!0.001$).
\end{tablenotes}
\end{threeparttable}
\caption[Mean Certified Accuracy (72/255)]{Wilcoxon signed-rank tests (two-sided) for Mean Certified Accuracy (72/255); $p$-values with Holm FWER corrections within-metric and global.}
\label{tab:wilcoxon:mean_cert_acc_72}
\end{table}

\begin{table}[htb]
\centering
\begin{threeparttable}
\begingroup
\setlength{\tabcolsep}{3pt}
\footnotesize
\begin{adjustbox}{max width=\linewidth}
\begin{tabular}{@{}llrrrrrrrrrrr@{}}
\toprule
\multicolumn{2}{c}{Algorithms} & \multicolumn{6}{c}{Run Statistics} & \multicolumn{5}{c}{Wilcoxon pairwise Statistics} \\\cmidrule(lr){1-2} \cmidrule(lr){3-8} \cmidrule(lr){9-13}
Alg A & Alg B & $n$ & wins$_A$ & wins$_B$ & ties & WinRate A & \shortstack{Median \\ $\Delta$ (A---B)} & $W$ & $p$ & $p_{\text{Holm,within}}$ & $p_{\text{Holm,global}}$ & $r$ \\
\midrule
AOL & LDLT-R & 121 & 1 & 120 & 0 & 0.0083 & -0.2229 & 11 & $1.8e-21^{***}$ & $2.7e-20^{***}$ & $1.3e-19^{***}$ & 0.8651 \\
AOL & SLL & 121 & 4 & 117 & 0 & 0.0331 & -0.1616 & 33 & $3.1e-21^{***}$ & $4.3e-20^{***}$ & $2.3e-19^{***}$ & 0.8599 \\
AOL & Orthogonal & 121 & 6 & 115 & 0 & 0.0496 & -0.1829 & 54 & $5.2e-21^{***}$ & $6.8e-20^{***}$ & $3.8e-19^{***}$ & 0.8550 \\
AOL & Sandwich & 121 & 5 & 116 & 0 & 0.0413 & -0.2530 & 63 & $6.5e-21^{***}$ & $7.8e-20^{***}$ & $4.7e-19^{***}$ & 0.8529 \\
AOL & LDLT-L & 121 & 15 & 102 & 4 & 0.1405 & -0.0795 & 265 & $4.5e-18^{***}$ & $5.0e-17^{***}$ & $2.7e-16^{***}$ & 0.8011 \\
LDLT-L & Sandwich & 121 & 16 & 104 & 1 & 0.1364 & -0.0911 & 469 & $1.3e-16^{***}$ & $0^{***}$ & $0^{***}$ & 0.7556 \\
Orthogonal & Sandwich & 121 & 15 & 106 & 0 & 0.1240 & -0.0411 & 634 & $0^{***}$ & $0^{***}$ & $0^{***}$ & 0.7186 \\
Sandwich & SLL & 121 & 95 & 25 & 1 & 0.7893 & 0.0391 & 769 & $0^{***}$ & $0^{***}$ & $0^{***}$ & 0.6839 \\
LDLT-L & LDLT-R & 121 & 28 & 93 & 0 & 0.2314 & -0.0559 & 854 & $0^{***}$ & $0^{***}$ & $0^{***}$ & 0.6669 \\
LDLT-L & Orthogonal & 121 & 39 & 82 & 0 & 0.3223 & -0.0289 & 1595 & $6.0e-08^{***}$ & $3.6e-07^{***}$ & $1.9e-06^{***}$ & 0.4926 \\
LDLT-L & SLL & 121 & 37 & 82 & 2 & 0.3140 & -0.0231 & 1675 & $5.1e-07^{***}$ & $2.5e-06^{***}$ & $1.4e-05^{***}$ & 0.4605 \\
LDLT-R & SLL & 121 & 83 & 38 & 0 & 0.6860 & 0.0187 & 2059 & $2.5e-05^{***}$ & $9.8e-05^{***}$ & $5.2e-04^{***}$ & 0.3835 \\
LDLT-R & Sandwich & 121 & 43 & 78 & 0 & 0.3554 & -0.0213 & 2138 & $6.0e-05^{***}$ & $1.8e-04^{***}$ & $1.0e-03^{**}$ & 0.3649 \\
LDLT-R & Orthogonal & 121 & 77 & 44 & 0 & 0.6364 & 0.0182 & 2506 & $2.2e-03^{**}$ & $4.4e-03^{**}$ & $2.0e-02^{*}$ & 0.2784 \\
Orthogonal & SLL & 121 & 65 & 56 & 0 & 0.5372 & 0.0038 & 3069 & $1.1e-01$ & $1.1e-01$ & $8.7e-01$ & 0.1460 \\
\bottomrule
\end{tabular}
\end{adjustbox}
\endgroup
\begin{tablenotes}\item Stars mark significance ($^*\,p\!\le\!0.05$, $^{**}\,p\!\le\!0.01$, $^{***}\,p\!\le\!0.001$).
\end{tablenotes}
\end{threeparttable}
\caption[Mean Certified Accuracy (108/255)]{Wilcoxon signed-rank tests (two-sided) for Mean Certified Accuracy (108/255); $p$-values with Holm FWER corrections within-metric and global.}
\label{tab:wilcoxon:mean_cert_acc_108}
\end{table}

\begin{table}[htb]
\centering
\begin{threeparttable}
\begingroup
\setlength{\tabcolsep}{3pt}
\footnotesize
\begin{adjustbox}{max width=\linewidth}
\begin{tabular}{@{}llrrrrrrrrrrr@{}}
\toprule
\multicolumn{2}{c}{Algorithms} & \multicolumn{6}{c}{Run Statistics} & \multicolumn{5}{c}{Wilcoxon pairwise Statistics} \\\cmidrule(lr){1-2} \cmidrule(lr){3-8} \cmidrule(lr){9-13}
Alg A & Alg B & $n$ & wins$_A$ & wins$_B$ & ties & WinRate A & \shortstack{Median \\ $\Delta$ (A---B)} & $W$ & $p$ & $p_{\text{Holm,within}}$ & $p_{\text{Holm,global}}$ & $r$ \\
\midrule
AOL & LDLT-R & 121 & 1 & 111 & 9 & 0.0455 & -0.0570 & 10 & $5.4e-20^{***}$ & $8.2e-19^{***}$ & $3.6e-18^{***}$ & 0.8651 \\
AOL & SLL & 121 & 3 & 104 & 14 & 0.0826 & -0.0476 & 23 & $5.3e-19^{***}$ & $6.9e-18^{***}$ & $3.2e-17^{***}$ & 0.8610 \\
AOL & Sandwich & 121 & 2 & 112 & 7 & 0.0455 & -0.0803 & 62 & $9.9e-20^{***}$ & $1.4e-18^{***}$ & $6.3e-18^{***}$ & 0.8514 \\
AOL & Orthogonal & 121 & 5 & 105 & 11 & 0.0868 & -0.0363 & 122 & $2.4e-18^{***}$ & $2.8e-17^{***}$ & $1.4e-16^{***}$ & 0.8331 \\
AOL & LDLT-L & 121 & 8 & 85 & 28 & 0.1818 & -0.0160 & 182 & $0^{***}$ & $0^{***}$ & $0^{***}$ & 0.7958 \\
Orthogonal & Sandwich & 121 & 11 & 103 & 7 & 0.1198 & -0.0313 & 258 & $1.4e-17^{***}$ & $1.5e-16^{***}$ & $8.1e-16^{***}$ & 0.7995 \\
LDLT-L & Sandwich & 121 & 12 & 102 & 7 & 0.1281 & -0.0397 & 342 & $1.1e-16^{***}$ & $0^{***}$ & $0^{***}$ & 0.7772 \\
LDLT-L & LDLT-R & 121 & 23 & 89 & 9 & 0.2273 & -0.0242 & 927 & $1.0e-10^{***}$ & $6.0e-10^{***}$ & $3.0e-09^{***}$ & 0.6135 \\
Sandwich & SLL & 121 & 90 & 25 & 6 & 0.7686 & 0.0240 & 977 & $0^{***}$ & $4.0e-10^{***}$ & $1.7e-09^{***}$ & 0.6135 \\
LDLT-L & SLL & 121 & 27 & 79 & 15 & 0.2851 & -0.0091 & 1323 & $1.9e-06^{***}$ & $9.4e-06^{***}$ & $4.5e-05^{***}$ & 0.4629 \\
LDLT-R & SLL & 121 & 80 & 35 & 6 & 0.6860 & 0.0175 & 1603 & $1.4e-06^{***}$ & $8.1e-06^{***}$ & $3.4e-05^{***}$ & 0.4506 \\
LDLT-L & Orthogonal & 121 & 36 & 75 & 10 & 0.3388 & -0.0085 & 1741 & $5.8e-05^{***}$ & $2.3e-04^{***}$ & $1.0e-03^{**}$ & 0.3816 \\
LDLT-R & Orthogonal & 121 & 77 & 40 & 4 & 0.6529 & 0.0132 & 1982 & $6.5e-05^{***}$ & $2.3e-04^{***}$ & $1.0e-03^{**}$ & 0.3694 \\
LDLT-R & Sandwich & 121 & 45 & 73 & 3 & 0.3843 & -0.0078 & 2234 & $6.1e-04^{***}$ & $1.2e-03^{**}$ & $7.2e-03^{**}$ & 0.3154 \\
Orthogonal & SLL & 121 & 55 & 58 & 8 & 0.4876 & 0.0000 & 3068 & $6.6e-01$ & $6.6e-01$ & $1.0e+00$ & 0.0410 \\
\bottomrule
\end{tabular}
\end{adjustbox}
\endgroup
\begin{tablenotes}\item Stars mark significance ($^*\,p\!\le\!0.05$, $^{**}\,p\!\le\!0.01$, $^{***}\,p\!\le\!0.001$).
\end{tablenotes}
\end{threeparttable}
\caption[Mean Certified Accuracy (255/255)]{Wilcoxon signed-rank tests (two-sided) for Mean Certified Accuracy (255/255); $p$-values with Holm FWER corrections within-metric and global.}
\label{tab:wilcoxon:mean_cert_acc_255}
\end{table}

\vskip 0.2in
\bibliography{main}

\end{document}